\newcommand{\yuandong}[1]{\textcolor{red}{Yuandong: #1}}
\newcommand{\ex}[1]{\textcolor{blue}{Experiments: #1}}
\def\ours{\texttt{Li}$_2$}
\newtheorem{theorem}{Theorem}
\newtheorem{proposition}{Proposition}
\newtheorem{lemma}{Lemma}
\begin{document}

\title{Provable Scaling Laws of Feature Emergence from Learning Dynamics of Grokking}
\author{Yuandong Tian \\ Meta Superintelligence Labs (FAIR) \& Independent Researcher \\ \texttt{yuandong.tian@gmail.com}}

\def\sign{\mathrm{sign}}
\def\rr{\mathbb{R}}
\def\cc{\mathbb{C}}
\def\zz{\mathbb{Z}}

\def\ridge{\textrm{ridge}}
\def\diag{\text{diag}}
\def\ve{\mathbf{e}}
\def\vx{\mathbf{x}}
\def\vu{\mathbf{u}}
\def\vz{\mathbf{z}}
\def\vb{\mathbf{b}}
\def\vy{\mathbf{y}}
\def\vv{\mathbf{v}}
\def\va{\mathbf{a}}
\def\vb{\mathbf{b}}
\def\vw{\mathbf{w}}
\def\vg{\mathbf{g}}
\def\vf{\mathbf{f}}
\def\vone{\mathbf{1}}
\def\vxi{\boldsymbol{\xi}}
\def\vzeta{\boldsymbol{\zeta}}
\def\vzero{\mathbf{0}}

\def\irrepcnt#1{\kappa(#1)}
\def\gsol{\texttt{gsol}}
\def\ngsol{\texttt{ngsol}}
\def\joint{\mathrm{joint}}
\def\i{\mathrm{i}}

\def\ee{\mathbb{E}}

\def\cH{\mathcal{H}}

\newcommand{\tr}{\operatorname{tr}}
\newcommand{\vecop}{\operatorname{vec}}

\def\cE{\mathcal{E}}
\maketitle

\begin{abstract}
While the phenomenon of grokking, i.e., delayed generalization, has been studied extensively, it remains an open problem whether there is a mathematical framework that characterizes what kind of features will emerge, how and in which conditions it happens, and is still closely connected with the gradient dynamics of the training, for complex structured inputs. We propose a novel framework, named \ours{}, that captures three key stages for the grokking behavior of 2-layer nonlinear networks: (I) \underline{\textbf{L}}azy learning, (II) \underline{\textbf{i}}ndependent feature learning and (III) \underline{\textbf{i}}nteractive feature learning. At the lazy learning stage, top layer overfits to random hidden representation and the model appears to memorize, and at the same time, the \emph{backpropagated gradient} $G_F$ from the top layer carries information about the target label, with a specific structure that enables each hidden node to learn their representation \emph{independently}. Moreover, the independent dynamics follows exactly the \emph{gradient ascent} of an energy function $\cE$, and its local maxima are precisely the emerging features. We study whether these local-optima induced features are generalizable, their representation power, and how they change on sample size, in group arithmetic tasks. When hidden nodes start to interact in the later stage of learning, we provably show how $G_F$ changes to focus on missing features that need to be learned. Our study sheds lights on roles played by key hyperparameters such as weight decay, learning rate and sample sizes in grokking, leads to provable scaling laws of feature emergence, memorization and generalization, and reveals the underlying cause why recent optimizers such as Muon can be effective, from the first principles of gradient dynamics. Our analysis can be extended to multi-layer architectures. The code is available\footnote{\scriptsize\url{https://github.com/yuandong-tian/understanding/tree/main/ssl/real-dataset/cogo}}.
\end{abstract}

\section{Introduction}
While modern deep models such as Transformers have achieved impressive empirical performance, it remains a mystery how such models acquire the knowledge during the training process. There have been ongoing arguments on whether the models can truly generalize beyond what it is trained on, or just memorize the dataset and performs poorly in out-of-distribution (OOD) data~\citep{wang2024generalization,chu2025sft,mirzadeh2024gsm}. 

Modeling the memorization and generalization behaviors have been a goal of many works. One such behavior, know as \emph{grokking}~\citep{power2022grokking,doshi2024grokking,nanda2023progress,wang2024grokked,varma2023explaining,liu2023omnigrok,thilak2022slingshot}, shows that the model initially overfits to the training set, and then suddenly generalizes to unseen test samples after continuous training. Many explanation exists, e.g., effective theory~\citep{liu2022towards,clauw2024information}, efficiency of memorization and generalization circuits~\citep{varma2023explaining}, Bayesian interpretation with weight decay as prior~\citep{grokkinggrokking}, etc. Most works focus on a direct explanation of its empirical behaviors, or leveraging property of very wide networks~\citep{barak2022hidden,mohamadi2024you,rubin2023grokking}, but few explores the details of the grokking learning procedure by studying the gradient dynamics on the weights. 

In this work, we propose a mathematical framework \ours{} that divides the grokking dynamics for 2-layer nonlinear networks into three major stages (Fig.~\ref{fig:overview}). \emph{Stage I: \underline{\textbf{L}}azy Learning}: when training begins, the top (output) layer learns first with random features from the hidden layer, and at the same time, the backpropagated gradient $G_F$ to the hidden layer starts to carry useful signal about the target label (Proposition~\ref{prop:G_F_in_stage_I}). \emph{Stage II: \underline{\textbf{I}}ndependent feature learning}: $G_F$ drives the learning of hidden representations. In this stage, the backpropagated gradient of $j$-th neuron (node) only depends on its own activation, triggering independent feature learning for each node. \emph{Stage III: \underline{\textbf{I}}nteractive feature learning}: When weights in the hidden layer get updated and are no longer independent, interactions across nodes adjust the learned feature to minimize the loss. 

\begin{figure}[t]
    \centering
    \includegraphics[width=0.9\textwidth]{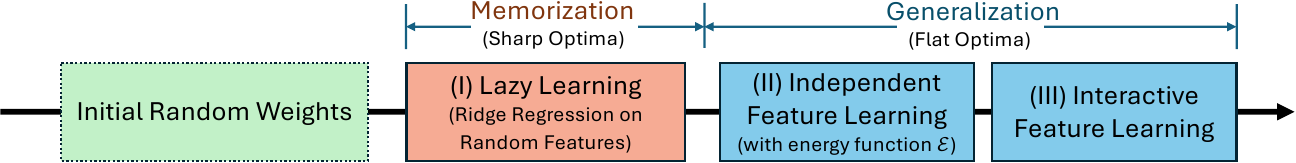}
    \caption{\small Overview of our framework \ours{}. \ours{} proposes three stages of the learning process, (I) \underline{L}azy learning, (II) \underline{i}ndependent feature learning and (III) \underline{i}nteractive feature learning, to explain the dynamics of grokking that shows the network first memorizes then generalizes (see Fig.~\ref{fig:li2-details} for details). Our analysis goes beyond Neural Tangent Kernel (NTK) and mean field regime, and characterizes concretely how features emerge from gradient dynamics with the help of the energy function $\cE$ (Thm.~\ref{theorem:energythm}) and multiple key factors that affect the procedure. Specifically, we characterize the learned features as local maxima of $\cE$ (Thm.~\ref{thm:local_maxima}) and the required sample size to maintain them (Thm.~\ref{thm:dataforgeneralization}), establishing generalization/memorization scaling laws.} 
    \label{fig:overview}
\end{figure}

Each stage is studied in details. In \emph{Stage I}, we show that the signal carried by the backpropagated gradient $G_F$ either appears at the initial stage (Proposition~\ref{prop:G_F_in_stage_I}) which depends on the top layer initialization, or the final stage (Lemma~\ref{lemma:gfstructure}) which depends on the weight decay $\eta$. In \emph{Stage II}, we find that the dynamics that governs independent feature learning exactly follows the \emph{gradient ascent} of an energy function $\cE$ (Thm.~\ref{theorem:energythm}), which corresponds to nonlinear canonical-correlation analysis (CCA) between the input $X$ and target $Y$. In group arithmetic tasks, we completely characterize the local maxima structure of $\cE$ (Thm.~\ref{thm:local_maxima}), each representing one learned feature. We further show that the amount of training samples determine whether these features/local optima remain stable (Thm.~\ref{thm:dataforgeneralization}), whether they are generalizable or not, and thus determine the scaling laws of generalization and memorization. In \emph{Stage III}, we provably show that there is a push for diversity between hidden nodes with similar activations (Thm.~\ref{thm:repulsion}), when a subset of features are learned, backpropagated gradient $G_F$ changes to push the model to focus on missing features to minimize the loss (Thm.~\ref{thm:top-down-modulation}), and optimizers like Muon~\citep{muon} further improves the diversity (Thm.~\ref{thm:muonthm}). Experiments on graph arithmetic tasks support our claims, e.g., the proved scaling laws about the generalization/memorization boundary (Thm.~\ref{thm:dataforgeneralization}) fits well with the experiments (Fig.~\ref{fig:mem_gen_boundary}). 

\textbf{Comparison with existing grokking frameworks.} Our framework provides a theoretical foundation from first principles (i.e., gradient dynamics) that explains the empirical hypothesis~\cite{varma2023explaining} that ``\emph{generalization circuits $\mathcal{C}_{gen}$ is more efficient but learn slower than memorization circuits $\mathcal{C}_{mem}$}''. Specifically, we show that the data distribution determines the optimization landscape, which in turn governs which local optima the weights converge into, which lead to the behavior of memorization or generalization. We also show that the initial memorization, or lazy learning (Stage I), has to happen before feature learning (Stage II-III), since the former provides meaningful backpropagated gradient $G_F$ for the latter to start developing. In comparison,~\citep{nanda2023progress} also provides a three stage framework of grokking, but mostly from empirical observations. Using our framework, we provide a systematic framework to explain when and why the grokking happens that are consistent with many empirical observations (Sec.~\ref{sec:when-grokking-happens}). 

Our study goes beyond Neural Tangent Kernel (NTK)~\citep{du2018gradient,jacot2018neural} and mean field regime~\citep{rubin2023grokking}, in which the hidden weights do not update substantially from initializations. Our framework demonstrates concretely how features emerge from gradient dynamics and multiple key factors that affect the procedure. 

\textbf{Summary}. Our framework \ours{} splits the network training process into three stages (\textbf{L}azy, \textbf{i}ndependent, \textbf{i}nteractive), and provides insights into grokking dynamics and feature learning.  
\begin{itemize}
    \item \underline{\emph{Explanation of Grokking behavior}}. At the beginning of grokking, the lazy learning corresponds to memorization, in which the top layer finds a (temporary) solution to explain the target with the random features. Only after that, the backpropagated gradient $G_F$ becomes meaningful, and causes the hidden layer to learn generalizable, emerging features. Grokking is accelerated with regularization. 
    \item \underline{\emph{Emerging features}} are the local maxima of an energy function $\cE$ that governs the independent stage. These features are more efficient in label prediction than simple memorization.  
    \item \underline{\emph{Data}} governs the landscape of $\cE$. Sufficient training data maintain the shape of those generalizable local maxima, while insufficient data lead to non-generalizable local maxima.   
    \item \underline{\emph{Scaling Laws of Feature Emergence, Generalization and Memorization}} can be derived by inspecting how the landscape changes with the data distribution. 
\end{itemize}

\begin{figure}
    \centering
\includegraphics[width=0.9\textwidth]{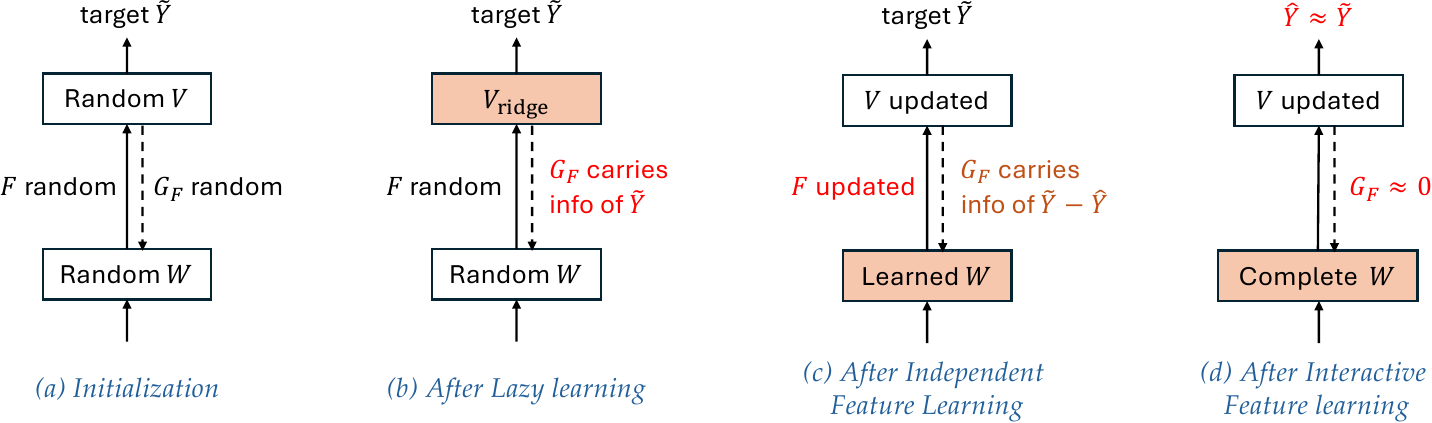}
\vspace{-0.1in}
\caption{\small Three stages of \ours{} framework. \textbf{(a)} Random weight initialization. \textbf{(b)} \emph{Stage I}: Model first learns to overfit the data with the random features provided by the hidden layer, while the hidden layer does not change much due to noisy backpropagated gradient $G_F$, \textbf{(c)} \emph{Stage II}: Once the output layer overfits the data, $G_F$ becomes related to target label $\tilde Y$ with suitable weight decay $\eta$. Moreover, $G_F$ acts independently on each hidden neuron, and push them to learn features with the energy function $\cE$ (Thm.~\ref{theorem:energythm}), \textbf{(d)} \emph{Stage III:} Hidden layer learns some features, interactions appear (Thm.~\ref{thm:repulsion}) and the backpropagated gradient $G_F$ now carries information about the residual $\tilde Y - \hat Y$ to push the hidden layer to learn missing features (Thm.~\ref{thm:top-down-modulation}). }
\label{fig:li2-details}
\end{figure}

\section{Problem formulation}
We consider a 2-layer network $\hat Y = \sigma(XW)V$ and $\ell_2$ loss function on $n$ samples:
\begin{equation}
    \min_{V, W} \frac12\|P^\perp_1 (Y - \hat Y)\|_F^2 = \min_{V, W} \frac12\|P^\perp_1 (Y - \sigma(XW)V)\|_F^2   
\end{equation}
where $P^\perp_1 := I - \vone\vone^\top / n$ is the zero-mean projection matrix along the sample dimension, $Y\in \rr^{n \times M}$ is a label matrix (each row is a one-hot vector), $X = [\vx_1, \vx_2, \ldots, \vx_n]^\top \in \rr^{n \times d}$ is the data matrix, $V\in \rr^{K \times M}$ and $W\in \rr^{d \times K}$ are the weight matrices of the last layer and hidden layer, respectively. $\sigma$ is the nonlinear activation function. 

In the following, we show that grokking is a consequence of ``leaked'' backpropagated gradient $G_F$.  


\begin{figure}
    \centering
    \includegraphics[width=0.32\textwidth]{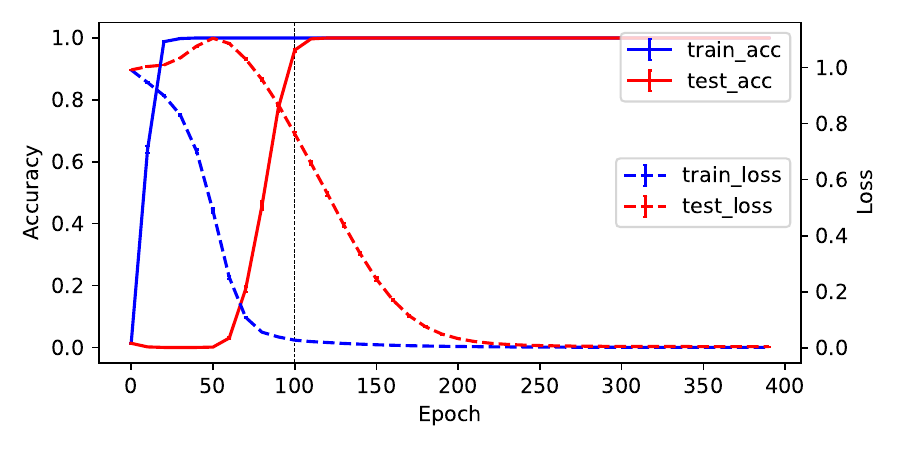}
    \includegraphics[width=0.32\textwidth]{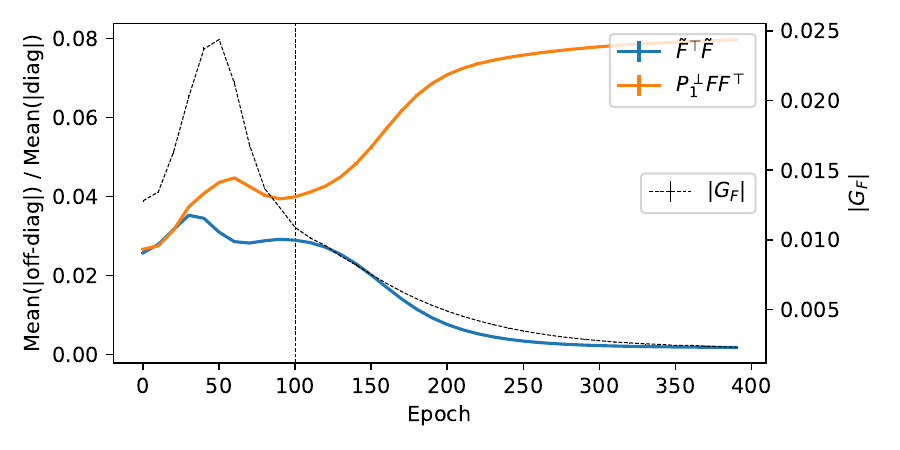}
    \includegraphics[width=0.32\textwidth]{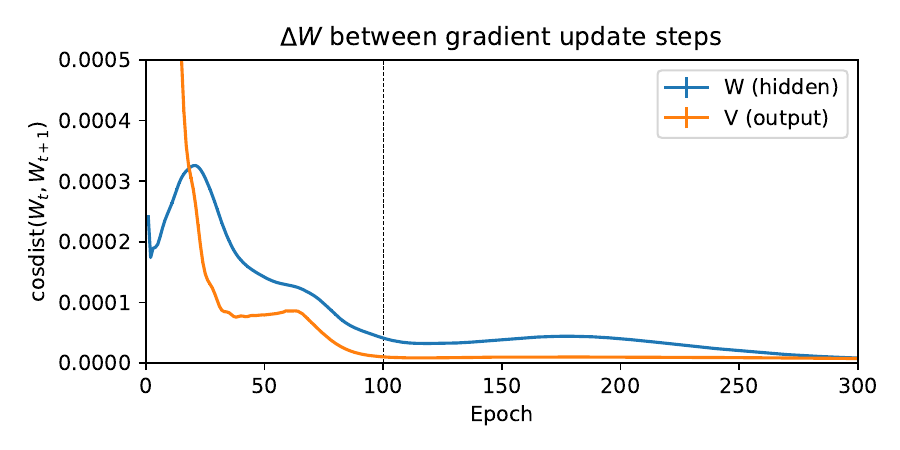}
    \includegraphics[width=0.32\textwidth]{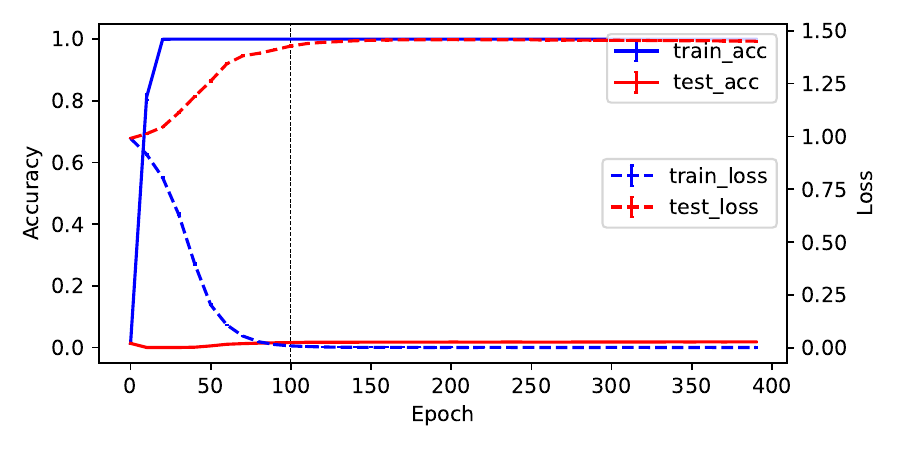}
    \includegraphics[width=0.32\textwidth]{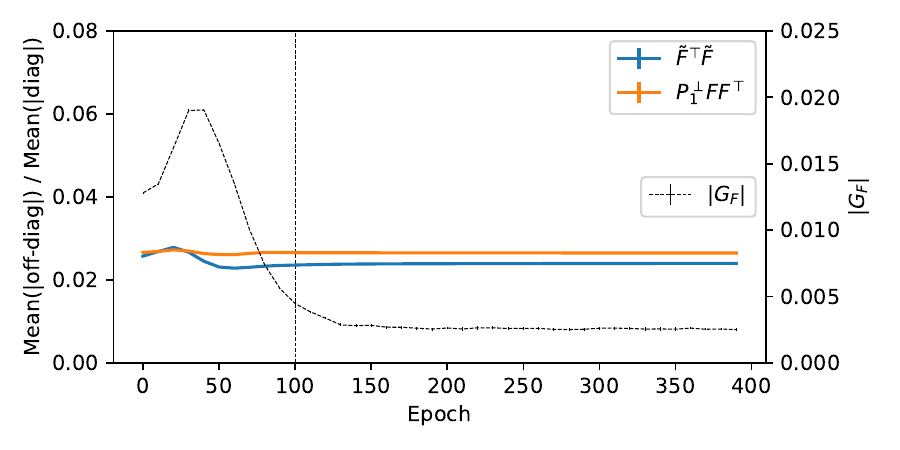}
    \includegraphics[width=0.32\textwidth]{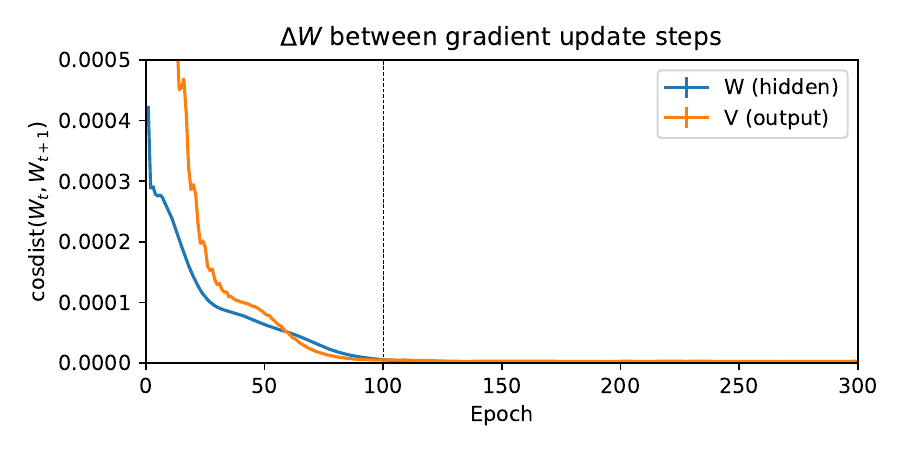}
    \vspace{-0.1in}
    \caption{\small Grokking dynamics on modular addition task with $M=71$, $K=2048$, $n=2016$ ($40\%$ training out of $71^2$ samples) with and without weight decay. \emph{Top}: $\eta = 0.0002$ and grokking happens. \emph{Bottom}: $\eta=0$ and no grokking happens. Weight decay leads to larger $|G_F|$ around epoch $100$ and induces grokking behavior. The weights difference $\Delta W$ between consecutive weights at time $t$ and $t+1$, measured by cosine distance, shows two-stage behaviors: first there is huge update on the output weight $V$, then large update on the hidden weight $W$. Throughout the training, $\tilde F^\top \tilde F$ and $P^\perp_1 F F^\top$ remains diagonal with up to $8\%$ error, validating our analysis (independent feature learning, Sec.~\ref{sec:independent_feature_learning}). Experiments averaged over $15$ seeds.}
    \label{fig:grokking_dynamics}
\end{figure}

\section{Stage I: Lazy Learning (Overfitting)}
\label{sec:overfitting}
Let $F = \sigma(XW)$ be the activation of the hidden layer and $\tilde F = P^\perp_1 F$ be the zero-mean version of it. Similarly define $\tilde Y = P^\perp_1 Y$. We first write down the backpropagated gradient $G_F$ sent to the hidden layer:
\begin{equation}
    G_{F} = -\frac{\partial J}{\partial F} = P^\perp_1 (Y - FV) V^\top 
\end{equation}

\subsection{Analysis of the backpropagated gradient $G_F$}

At the beginning of the training, both $W$ and $V$ are initialized with independent zero-mean random variables. Therefore, the backpropagated gradient $G_F$ is pure random noise. Over time, the hidden activation $F$ is mostly unchanged, and only the output layer $V$ learns. In this case, $F$ can be treated as fixed during this stage of learning, and we can prove the following properties of $G_F$ (Sec.~\ref{sec:dynamics-stage-I}):

\begin{proposition}
\label{prop:G_F_in_stage_I}
If $\tilde F$ is fixed and is full column rank, entries of $V(0)$ is initialized from normal distribution $N(0, \alpha^2)$ with $0<\alpha\ll 1$, then $\|G_F(0)\|_F = O(\epsilon\sqrt{KM})$ and the backpropagated gradient $G_F$ is dominated by the term $\tilde Y \tilde Y^\top F$ at initial time stamps:
\begin{equation}
  G_F(t) = t {\color{red}\tilde Y \tilde Y^\top \tilde F} + O(\alpha) + O(\alpha t) + O(t^2) \label{eqn:G_F_initial}
\end{equation} 
and converges exponentially to the following fixed point when $V = V_{\textrm{ridge}} = (\tilde F^\top \tilde F + \eta I)^{-1} \tilde F^\top \tilde Y$:
\begin{equation}
    G_F(+\infty) = \eta (\tilde F \tilde F^\top + \eta I)^{-1} {\color{red}\tilde Y \tilde Y^\top \tilde F} (\tilde F^\top \tilde F + \eta I)^{-1} \label{eqn:G_F_ridge}
\end{equation}
\end{proposition}

\textbf{$G_F$ at initial phase}. The proposition suggests that for small top layer initialization (measured by $\alpha$), $\|G_F\|$ will first increase from $O(\alpha)$ to $O(1)$ and then converge exponentially to $O(\eta)$. Fig.~\ref{fig:grokking_dynamics} shows that this is indeed the case for $\|G_F\|$, regardless whether grokking happens or not.

\textbf{$G_F$ at later phase}. The structure of $G_F(+\infty)$ is revealed by the following lemma: 
\begin{restatable}[Structure of backpropagated gradient $G_F$]{lemma}{gfstructure}
    \label{lemma:gfstructure}
    Assume that (1) entries of $W$ follow standard normal distribution $N(0,1)$, (2) $\|\vx_i\|_2 = \text{const}$, (3) $\|\vx_i^\top\vx_{i'} - \rho\|_2 \le \epsilon$ for all $i\neq i'$ and (4) large width $K$, then both $\tilde F^\top \tilde F$ and $\tilde F \tilde F^\top$ becomes a multiple of identity and Eqn.~\ref{eqn:G_F_ridge} becomes: 
    \begin{equation}
        G_F(+\infty) = \frac{\eta}{(Kc_1 + \eta)(nc_2 + \eta)} \tilde Y \tilde Y^\top \tilde F + O(K^{-1} \epsilon) \label{eqn:G_F_ridge_small_eta}
    \end{equation}
    where $c_1, c_2 > 0$ are constants related to nonlinearity. When $\eta$ is small, we have $G_F \propto \eta \tilde Y \tilde Y^\top \tilde F$. Note that the input features and/or weights can be scaled and what changes is $c_1$ and $c_2$. 
\end{restatable}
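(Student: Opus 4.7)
The plan is to start from Equation~\ref{eqn:G_F_ridge},
\begin{equation}
G_F = P_\eta\,\tilde Y\tilde Y^\top \tilde F\,(\tilde F^\top \tilde F + \eta I)^{-1}, \qquad P_\eta = \eta(\tilde F\tilde F^\top + \eta I)^{-1},
\end{equation}
and to reduce each of the two Gram matrices to a closed-form scalar/projection expression under the four hypotheses. Once both inverses are in scalar form, the claimed identity drops out by direct multiplication, using only that $P^\perp_1 \tilde Y = \tilde Y$ and $\tilde Y^\top \tilde F = \tilde Y^\top P^\perp_1 F = \tilde Y^\top F$.

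First I would compute $\tilde F\tilde F^\top$. By rotational invariance of $\vw_k \sim \mathcal{N}(0, I)$ together with the equal-norm and equal-inner-product hypotheses, $E[\sigma(\vx_i^\top \vw_k)\sigma(\vx_{i'}^\top \vw_k)]$ depends only on whether $i = i'$. Writing $a := E[\sigma(\vx^\top \vw)^2]$ and $c := E[\sigma(\vx_i^\top \vw)\sigma(\vx_{i'}^\top \vw)]$ for $i\neq i'$, each entry of $FF^\top$ is a sum of $K$ i.i.d.\ contributions across columns of $W$, so large-$K$ concentration gives $FF^\top \approx K(a-c)I_n + Kc\,\vone\vone^\top$. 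Sandwiching by $P^\perp_1$ annihilates the rank-one piece and yields $\tilde F\tilde F^\top \approx Kc_1\,P^\perp_1$ with $c_1 := a - c$ (positive by Cauchy--Schwarz unless $\sigma$ is degenerate on the relevant support). Then $\tilde F\tilde F^\top + \eta I$ has eigenvalue $\eta$ on $\mathrm{span}(\vone)$ and $Kc_1 + \eta$ on its orthogonal complement, so $P_\eta = \tfrac{1}{n}\vone\vone^\top + \tfrac{\eta}{Kc_1+\eta}P^\perp_1$; since $\vone^\top \tilde Y = 0$, this collapses to $P_\eta \tilde Y = \tfrac{\eta}{Kc_1+\eta}\tilde Y$.

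For $\tilde F^\top \tilde F$, I would use the decomposition $\tilde F^\top \tilde F = F^\top F - n\bar F\bar F^\top$ together with the independence of distinct columns $\vw_k, \vw_{k'}$. Writing $\mu := E[\sigma(\vx^\top \vw)]$, the off-diagonal $(F^\top F)_{kk'}$ has expectation $n\mu^2$, which is exactly cancelled by the corresponding term in $n\bar F\bar F^\top$, while the diagonal entries equal $(n-1)(a-c)$ in expectation. Large-$K$ independence of columns plus an $n$-sample concentration for the diagonal then give $\tilde F^\top \tilde F \approx nc_2\,I_K$ with $c_2$ proportional to $a - c$, and hence $(\tilde F^\top \tilde F + \eta I)^{-1} \approx \tfrac{1}{nc_2+\eta}I_K$.

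Plugging both simplifications back into the starting identity and using $\tilde Y^\top \tilde F = \tilde Y^\top F$ yields $G_F = \tfrac{\eta}{(Kc_1+\eta)(nc_2+\eta)}\,\tilde Y\tilde Y^\top F$, with the small-$\eta$ scaling immediate. The main obstacle I anticipate is the off-diagonal concentration of $\tilde F^\top \tilde F$: its off-diagonal entries are zero-mean sums of $n$ products of independent variables, of order $\sqrt{n}$, while the diagonal is of order $n$; large $K$ alone does not suppress them, so a fully rigorous version needs either an implicit large-$n$ assumption or an in-expectation reading of the identity. A minor further subtlety is that $\tilde F^\top \tilde F$ has rank at most $n-1$ and cannot literally equal $nc_2 I_K$ when $K \ge n$; the scalar inverse is accurate only on the image of $\tilde F^\top$, which is precisely the subspace to which $(\tilde F^\top \tilde F + \eta I)^{-1}$ is applied inside the formula.
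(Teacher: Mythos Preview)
Your proposal is correct and follows the same skeleton as the paper's proof: compute $FF^\top$ through the kernel $\mathcal K(i,i') = \ee_\vw[\sigma(\vx_i^\top\vw)\sigma(\vx_{i'}^\top\vw)]$, observe it takes only two values under the equal-norm/equal-correlation hypotheses, center to get $\tilde F\tilde F^\top \approx Kc_1 P^\perp_1$, handle $\tilde F^\top\tilde F$ via column independence, and substitute. The one substantive difference is that the paper computes the kernel explicitly via the Hermite expansion of $\sigma$ and Mehler's formula, obtaining $\mathcal K(i,i') = \sum_{l\ge 0} a_l^2\,\rho_{ii'}^{\,l}$ with $a_l$ the Hermite coefficients of $z\mapsto\sigma(sz)$; this yields closed forms for the constants in terms of $\sigma$ and $\rho$, whereas your rotational-invariance argument only establishes their existence. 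Your self-identified caveats about $\tilde F^\top\tilde F$---that off-diagonal concentration is governed by $n$ rather than by $K$, and that the rank bound $\mathrm{rank}(\tilde F^\top\tilde F)\le n-1$ means the scalar approximation holds only on $\mathrm{im}(\tilde F^\top)$---are well taken; the paper's own treatment of this step is terser (it simply asserts that i.i.d.\ zero-mean columns make $\tilde F^\top\tilde F$ a multiple of identity for large $K$) and does not engage with either point.
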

Interestingly, in both cases, we see that $G_F$ contains a key term ${\color{red}\tilde Y \tilde Y^\top \tilde F}$. As we will see, it plays a critical role in feature learning.
From Eqn.~\ref{eqn:G_F_ridge_small_eta}, it is clear that if $K\rightarrow +\infty$, then $G_F(+\infty) \rightarrow 0$ and there is no feature learning (i.e., NTK regime). Here we study the case when $K$ is large (so that Eqn.~\ref{eqn:G_F_ridge_small_eta} is valid) but not too large so that feature learning happens.

\def\zinit{\texttt{zero-init}}

\subsection{Zero-init Accelerates the Feature Learning Process}
Given the analysis in Sec.~\ref{sec:overfitting} of Stage I, it becomes tempting to think that if we initialize the top-layer weight $V$ to zero, then at the initial stage the backpropagated gradient $G_F(t)$ carries only the signal term $\tilde Y \tilde Y^\top F$ according to Eqn.~\ref{eqn:G_F_initial}, and the feature learning process should be accelerated. We named this new approach as \zinit. 

Experiments show that this is indeed the case. Fig.~\ref{fig:zero-init-acceleration-M41}, \ref{fig:zero-init-acceleration-M89} and \ref{fig:zero-init-acceleration-M127} show that for $M=41, 89, 127$, \zinit{} accelerates the feature learning process compared to normal initialization.

In multi-layer setting, the boost brought up by \zinit{} is larger (Fig.~\ref{fig:zero-init-multilayer-mse}), in particular when data are scarce and the feature learning process becomes slow. The gap can be as large as $10 \times$ (e.g., 100 epochs versus 1000 epochs) when training on MSE loss.  

All the experiments are in Appendix~\ref{sec:zero-init-acceleration-appendix}.

\section{Stage II: Independent feature learning}
\label{sec:independent_feature_learning}
\subsection{The energy function $\cE$}
Now let us explore the feature learning process with the help of $G_F$. Let $W = [\vw_1, \vw_2, \ldots, \vw_K]$ where $\vw_j \in \rr^d$ is the weight vector of $j$-th node, and $F = [\vf_1, \vf_2, \ldots, \vf_K]$ where $\vf_j = \sigma(X\vw_j)\in \rr^n$ is the activation of $j$-th node. For $G_F \propto \tilde Y \tilde Y^\top \tilde F$, as the structure shown in both initial stage (Eqn.~\ref{eqn:G_F_initial}) and later stage (Eqn.~\ref{eqn:G_F_ridge_small_eta}), the $j$-th column $\vg_j$ of $G_F$ is only dependent on $j$-th node $\vw_j$, and thus we can decouple the dynamics into $K$ independent ones, each corresponding to a single node: 
\begin{equation}
    \dot \vw_j = X^\top D_j \vg_j, \quad \vg_j \propto \tilde Y \tilde Y^\top \sigma(X\vw_j) \label{eqn:dynamics_w}
\end{equation}
where $D_j = \diag(\sigma'(X\vw_j))$ is the diagonal gating matrix of $j$-th node. Note that $\tilde Y^\top F = \tilde Y^\top \tilde F$ since $P^\perp_1$ is idempotent. A critical observation here is that Eqn.~\ref{eqn:dynamics_w} actually corresponds to the \emph{gradient ascent} dynamics of the energy function $\cE$. 

\begin{restatable}[The energy function $\cE$ for independent feature learning]{theorem}{energythm}
    \label{theorem:energythm}
    The dynamics (Eqn.~\ref{eqn:dynamics_w}) of independent feature learning is exactly the gradient ascent dynamics of the energy function $\cE$ w.r.t. $\vw_j$, a nonlinear canonical-correlation analysis (CCA) between the input $X$ and target $\tilde Y$: 
    \begin{equation}
        \cE(\vw_j) = \frac{1}{2} \|\tilde Y^\top \sigma(X\vw_j)\|^2_2 \label{eqn:energy_w}
    \end{equation}
\end{restatable}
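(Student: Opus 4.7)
The plan is to verify the theorem by a direct chain-rule computation: expand $\cE(\vw_j)$ as a quadratic form in $\vf_j = \sigma(X\vw_j)$, take the gradient with respect to $\vw_j$, and match it term-by-term with the right-hand side of Eqn.~\ref{eqn:dynamics_w}. Since the independent-learning dynamics were already derived from Lemma~\ref{lemma:gfstructure}, the assumptions of that lemma carry over, and the only remaining task is a calculus identity.

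First I would rewrite the energy as $\cE(\vw_j) = \frac{1}{2}\vf_j^\top A \vf_j$ with $A := \tilde Y \tilde Y^\top$. Note that $A$ is symmetric and positive semidefinite. Differentiating with respect to $\vf_j$ immediately gives $\partial \cE/\partial \vf_j = A\vf_j = \tilde Y \tilde Y^\top \sigma(X\vw_j)$, using symmetry of $A$. Next, since $\vf_j = \sigma(X\vw_j)$ is applied componentwise, the Jacobian is $\partial \vf_j/\partial \vw_j = D_j X$ where $D_j = \diag(\sigma'(X\vw_j))$ is exactly the gating matrix appearing in Eqn.~\ref{eqn:dynamics_w}. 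Composing by the chain rule yields
\begin{equation}
\nabla_{\vw_j} \cE(\vw_j) = X^\top D_j\, \tilde Y \tilde Y^\top \sigma(X\vw_j).
\end{equation}

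Comparing with Eqn.~\ref{eqn:dynamics_w}, the dynamics reads $\dot\vw_j = X^\top D_j \vg_j$ with $\vg_j \propto \eta\, \tilde Y \tilde Y^\top \sigma(X\vw_j)$, so that $\dot\vw_j$ equals $\nabla_{\vw_j} \cE$ up to the positive scalar absorbed into the effective learning rate. This is precisely gradient ascent on $\cE$. The nonlinear-CCA interpretation is then a matter of reading the objective: $\|\tilde Y^\top \sigma(X\vw_j)\|_2^2$ is the squared covariance between the zero-mean labels $\tilde Y$ and the nonlinear projection $\sigma(X\vw_j)$ of the inputs, which is exactly what a CCA-style criterion maximizes.

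The argument is essentially a one-line chain-rule identity, so I do not anticipate a serious obstacle. The one place to be a bit careful is ensuring the proportionality constant in Eqn.~\ref{eqn:dynamics_w} is positive (so ``ascent'' is the right word, not descent); this is guaranteed by Lemma~\ref{lemma:gfstructure}, whose prefactor $\eta/((Kc_1+\eta)(nc_2+\eta))$ is strictly positive for $\eta, c_1, c_2 > 0$. A secondary subtlety is that for non-smooth activations such as ReLU, $\sigma'$ is only defined almost everywhere, so the identity should be read in the sense of subgradients at the kinks; this is standard in the feature-learning literature and does not affect the conclusion.
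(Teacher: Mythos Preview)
Your proposal is correct and takes essentially the same approach as the paper: both compute $\nabla_{\vw_j}\cE = X^\top D_j \tilde Y\tilde Y^\top \sigma(X\vw_j)$ via the chain rule and match it against Eqn.~\ref{eqn:dynamics_w}. Your version is more explicit (spelling out the quadratic form, the Jacobian, and the sign of the proportionality constant), but the underlying argument is identical.
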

Therefore, the feature learned for each node $j$ is the one that maximizes the energy function $\cE(\vw_j)$.  
Since Eqn.~\ref{eqn:dynamics_w} can be unbounded, in the following, we put an additional constraint that $\|\vw_j\|_2 = 1$ due to weight decay regularization. Note that~\citep{tian2023understanding} also arrives at an energy function when studying feature learning in the context of contrastive loss, the resulting function is abstract and difficult to interpret its structure of its local maxima. Here the structure is much clearer, which we will explore below.

\subsection{Group Arithmetic Tasks}
To demonstrate a concrete example in solving the energy function $\cE$, we consider \emph{group arithmetic} tasks, i.e., for group $H$, the task is to predict $h = h_1 h_2$ given $h_1,h_2\in H$. One example is the modular addition task $h_1h_2 = h_1 + h_2 \mod M$, which has been extensively studied in grokking~\citep{power2022grokking,gromov2023grokking,huang2024unified,tian2024composing}.  

\textbf{Definition of group}. Here $h_1 h_2$ is the \emph{group operation} of two elements $h_1$ and $h_2$ in the group $H$. The group operation satisfies a few properties: (1) the group operation is associative, i.e., $(h_1 h_2) h_3 = h_1 (h_2 h_3)$ for all $h_1,h_2,h_3\in H$, (2) there exists an identity element $e$ such that $h e = e h = h$ for all $h\in H$, (3) for each $h\in H$, there exists an inverse $h^{-1}$ such that $h h^{-1} = h^{-1} h = e$. Note that the group operation is not commutative in general, i.e., $h_1 h_2 \neq h_2 h_1$ for some $h_1,h_2\in H$. If the operation is commutative, then we call the group \emph{Abelian}. Modular addition is an example of Abelian group. 

\textbf{The task}. We represent the group elements by one-hot vectors: each data sample $\vx_i \in \rr^{2M}$ is a concatenation of two $M$-dimensional one-hot vectors $(\ve_{h_1[i]}, \ve_{h_2[i]})$ where $h_1[i]$ and $h_2[i]$ are the indices of the two one-hot vectors. The output is also a one-hot vector $\vy_i = \ve_{h_1[i]h_2[i]}$, where $1\le i\le n=M^2$. Here the class number $M = |H|$ is the size of the group.  

\textbf{A crash course of group representation theory}. A mapping $\rho(h): H \mapsto \cc^{d\times d}$ is called a \emph{group representation} if the group operation is compatible with matrix multiplication: $\rho(h_1)\rho(h_2) = \rho(h_1 h_2)$ for any $h_1, h_2 \in H$. Let $R_h\in \rr^{M\times M}$ be the \emph{regular representation} of group element $h$ so that $\ve_{h_1 h_2} = R_{h_1} \ve_{h_2}$ for all $h_1,h_2\in H$, and $P\in \rr^{M\times M}$ be the group inverse operator so that $P\ve_h = \ve_{h^{-1}}$. Note that $P^2 = I$ and $P^\top = P^{-1} = P$.  

\textbf{A tale of two kind of ``representations''}. We often refer the activation vectors in neural network as ``neural representations'', which means that we use high-dimensional vectors to represent certain semantic concepts (e.g., word2vec~\citep{mikolov2013distributed}). For groups, the group element corresponds to ``transformation'' / ``action'' that acts on one object and turn it to another (i.e., rotation of a vector). Therefore, group representations often take the form of matrices (e.g., permutation / rotation matrix).  

\textbf{The decomposition of group representation}. The representation theory of finite group~\citep{fulton2013representation,steinberg2009representation} says that the regular representation $R_h$ admits a decomposition into complex \emph{irreducible} representations (or \textbf{\emph{irreps}}):
\begin{equation}
    R_h = Q\left(\bigoplus_{k=0}^{\irrepcnt{H}} \bigoplus_{r=1}^{m_k} C_{k}(h) \right) Q^*
\end{equation}
where $\irrepcnt{H}$ is the number of nontrivial irreps (i.e., not all $h$ map to identity), $C_{k}(h) \in \cc^{d_k \times d_k}$ is the $k$-th irrep block of $R_h$, $Q$ is the unitary matrix (and $Q^*$ is its conjugate transpose) and $m_k$ is the multiplicity of the $k$-th irrep. This means that in the decomposition of $R_h$, there are $m_k$ copies of $d_k$-dimensional irrep, and these copies are isomorphic to each other. So the $k$-th \textbf{\emph{irrep subspace}} $\cH_k$ has dimension $m_k d_k$. 

For regular representation $\{R_h\}$, one can prove that $m_k=d_k$ for all $k$ and thus $|H| = M = \sum_k d^2_k$. For Abelian group, all complex irreps are 1d (i.e., Fourier bases). One may also choose to do the decomposition in real domain. In this case, a pair of 1d complex irreps will become a 2d real irrep. For example, $e^{\i\theta}$ and $e^{-\i\theta}$ becomes a 2d matrix $[\cos(\theta), -\sin(\theta); \sin(\theta), \cos(\theta)]$. 

\subsection{Local maxima of the energy function}
\label{sec:local-maxima-of-energy}
Now we study the local maxima of $\cE$. With the decomposition, we can completely characterize the local maxima of the energy $\cE$ with group inputs, even that $\cE(\vw)$ is nonconvex. 

\begin{restatable}[Local maxima of $\cE$ for group input]{theorem}{localmaximagroup}
    \label{thm:local_maxima}
    For group arithmetics tasks with $\sigma(x) = x^2$, $\cE$ has multiple local maxima $\vw^\ast=[\vu; \pm P\vu]$. Either it is in a real irrep of dimension $d_k$ (with $\cE^\ast = M / 8d_k$ and $\vu\in \cH_k$), or in a pair of complex irrep of dimension $d_k$ (with $\cE^\ast = M / 16d_k$ and $\vu\in \cH_k \oplus \cH_{\bar k}$). These local maxima are not connected. No other local maxima exist. 
\end{restatable}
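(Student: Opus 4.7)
The plan is to reduce $\cE(\vw_j)$ to an explicit quartic in the irrep/Fourier coefficients of $\vw_j$, and then analyze the stationarity conditions block by block using Schur orthogonality.

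First I would split $\vw_j=[\vw_a;\vw_b]$ with $\vw_a,\vw_b\in\rr^M$. On the full dataset indexed by $H\times H$, $\sigma(X\vw_j)_{(h_1,h_2)}=(\vw_a[h_1]+\vw_b[h_2])^2$, and the constant-in-$h$ self-terms $\|\vw_a\|^2$ and $\|\vw_b\|^2$ are killed when $P^\perp_1$ is applied on the label side. The surviving cross term collects into a group convolution, giving
\begin{equation}
\cE(\vw_j)\;=\;2\,\|P^\perp_1 c_{ab}\|^2,\qquad c_{ab}[h]\;=\;\vw_a^\top R_h P\vw_b\;=\;\sum_{h_1}\vw_a[h_1]\,\vw_b[h_1^{-1}h].
\end{equation}
Substituting $R_h=Q\bigl(\bigoplus_k\bigoplus_r C_k(h)\bigr)Q^*$ and applying the group Plancherel identity (i.e., Schur orthogonality of matrix entries of inequivalent irreps), the energy decouples across nontrivial irreps into
\begin{equation}
\cE(\vw_j)\;=\;\sum_{k\neq 0}\frac{2M}{d_k}\,\bigl\|A_k B_k^\top\bigr\|_F^2,
\end{equation}
where $A_k,B_k\in\cc^{d_k\times d_k}$ are the $k$-th irrep blocks of the Fourier transforms of $\vw_a$ and $P\vw_b$ (using $m_k=d_k$ for the regular representation). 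This block formula is the core technical computation.

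Next I would impose the unit-norm constraint $\|\vw_j\|_2^2=1$ forced by weight decay and solve $\nabla\cE=\lambda\vw_j$. Because each $k$-th block of the gradient depends only on $(A_k,B_k)$, the stationarity condition decouples across irreps, producing for each $k$ an equation of the form $A_k(B_k^\top B_k)\propto\lambda\, A_k/d_k$ and a symmetric one for $B_k$. A single Lagrange multiplier must then be realized consistently across all populated blocks, but the effective per-block Rayleigh ratio scales as $1/d_k$, which forces all but one irrep block to vanish, a conjugate pair $(k,\bar k)$ being populated jointly to keep $\vw_j$ real (producing the complex-pair case). A further symmetrization exploiting $c_{ab}[h]=c_{ba}[h^{-1}]$, combined with the top-singular-pair structure inside the surviving block, pins $\vw_b=\pm P\vw_a$, yielding the stated form $\vw^\ast=[\vu;\pm P\vu]$. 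Plugging back in and collecting constants (using $m_k=d_k$) gives $\cE^\ast=M/(8d_k)$ for a real $d_k$-dimensional irrep, and $\cE^\ast=M/(16d_k)$ for a conjugate pair of complex $d_k$-dimensional irreps, with the extra factor of $1/2$ coming from the splitting of a real $2$d irrep into two $1$d complex ones.

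The main obstacle is the second-order analysis that certifies strict local maximality and disconnectedness. Perturbations of $\vu$ within its irrep block generate a connected orbit of equivalent critical points under the action of $H$ restricted to $\cH_k$, giving zero Hessian eigenvalues along the orbit which are harmless. The hard step is showing that any perturbation leaking mass into a distinct irrep $k'\neq k$ yields a strictly negative Hessian direction: this reduces to checking that, on the sphere, the quartic model $\tfrac{1}{d_k}\|\Pi_k\vu\|^4+\tfrac{1}{d_{k'}}\|\Pi_{k'}\vu\|^4$ (after Schur orthogonality kills cross-irrep bilinear terms) is strictly maximized by concentrating all mass in the block with smaller $d$, with ties resolved by the subleading interaction. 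Disconnectedness of the maxima across distinct irrep blocks then follows because any continuous path linking them must traverse a genuinely mixed configuration whose energy is strictly smaller than both endpoints, placing the two maxima in distinct connected components of the level set $\{\cE\ge\cE^\ast\}$.
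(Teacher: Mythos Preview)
Your strategy coincides with the paper's: isolate the cross term $2u_iv_j$ as the only part of $\sigma(X\vw)$ surviving the centering, decompose $\cE$ across irreps via Schur orthogonality, and then show that any local maximum concentrates on a single irrep block with $\vv=\pm\vu$. The paper writes $z_h=\langle R_h,S\rangle_F$ with $S_{ij}=(u_i+v_j)^2$, applies Schur directly to $\sum_h z_h^2$ to obtain $\cE=\frac{M}{2}\sum_{k\neq0}\frac{1}{d_k}\bigl|\sum_r\tr(\hat S_{k,r})\bigr|^2$ where $\hat S_{k,r}=\hat\vu_{k,r}\hat\vv_{k,r}^*$, and then chains Cauchy--Schwarz with AM--GM to reduce to maximizing $\sum_k c_k^2/d_k$ on the simplex $\sum_k c_k=1$; the vertices are the strict local maxima, and the equality conditions force $\hat\vv_{k,r}=\pm\hat\vu_{k,r}$, hence $\vv=\pm\vu$. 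Your convolution/Plancherel route is a clean alternative path to the same block decomposition.

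The gap in your plan is the step pinning $\vw_b=\pm P\vw_a$. The paper obtains this for free as the equality case of Cauchy--Schwarz on $\sum_r\hat\vu_{k,r}^*\hat\vv_{k,r}$, whereas your ``symmetrization plus top-singular-pair'' sketch does not obviously force $A_k=\pm B_k$: maximizing $\|A_kB_k^\top\|_F^2$ under $\|A_k\|_F^2+\|B_k\|_F^2=1$ only constrains $A_k,B_k$ to be rank one with a shared right singular vector when $d_k>1$, leaving the left singular vectors free. You should either reconcile your block functional with the paper's trace-squared form (they differ for non-abelian $d_k>1$) or supply the missing argument. For strict local maximality and disconnectedness across irreps, the paper's first-order leakage computation on the simplex is more direct than the Hessian analysis you outline.
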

Note that our proof can be extended to more general nonlinearity $\sigma(x) = ax+bx^2$ with $b > 0$ since linear part will be cancelled out due to zero-mean operators. We can show that local maxima of $\cE$ are flat, allowing moving around without changing $\cE$: 

\begin{restatable}[Flatness of local maxima of $\cE$ for group input]{corollary}{flatsol}
    \label{corollary:flatnessince}
    Local maxima of $\cE$ for group arithmetics tasks with $|H| = M > 2$ are flat, i.e., at least one eigenvalue of its Hessian is zero. 
\end{restatable}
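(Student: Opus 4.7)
The plan is to construct, at any local maximum $\vw^*$ of $\cE$ on the unit sphere, a nontrivial smooth curve of unit-norm weights along which both the constraint $\|\vw\|_2 = 1$ and the energy $\cE$ remain constant. The tangent vector of this curve at $\vw^*$ then lies in the kernel of the Hessian of $\cE$ restricted to the sphere, producing a zero eigenvalue.

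By Theorem~\ref{thm:local_maxima}, $\vw^* = [\vu^*; \pm P\vu^*]$ with $\vu^*$ lying either in a single real-irrep subspace $\cH_k$ or in a complex-conjugate pair $\cH_k\oplus\cH_{\bar k}$. In each case I would exhibit an explicit one-parameter family of orthogonal transformations $U(\theta)$ on $\rr^M$ that (i) preserves the relevant isotypic component, (ii) commutes with the inverse operator $P$ restricted to that component so that the form $[\vu;\pm P\vu]$ is preserved under $\vu\mapsto U(\theta)\vu$, and (iii) leaves $\cE$ invariant. For the complex-pair case, $U(\theta)$ arises from the $U(1)$ phase rotation $e^{\i\theta}$ on $\cH_k$ paired with $e^{-\i\theta}$ on $\cH_{\bar k}$; invariance of $\cE$ follows because with $\sigma(x)=x^2$ the vector $\sigma(X\vw)$ picks up a common factor of $e^{\pm 2\i\theta}$ on the relevant Fourier modes that cancels inside $\|\tilde Y^\top\sigma(X\vw)\|_2^2$. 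For the real-irrep case with $d_k\ge 2$, $U(\theta)$ comes from a one-parameter subgroup of the right $O(d_k)$-action on the $d_k$ isomorphic copies of the irrep inside $\cH_k$; this right-action commutes with the left regular representation governing $X$ and $\tilde Y$ by Schur's lemma, and therefore preserves $\cE$.

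The hypothesis $M>2$ enters precisely to guarantee that the orbit $\theta\mapsto U(\theta)\vu^*$ has nonzero velocity: when $M>2$, every non-trivial irrep subspace supporting a local maximum has real dimension at least two (either from a complex-pair contributing a $2$-plane, or from a real irrep with $d_k\ge 2$ contributing a $d_k^2$-dimensional isotypic component), so $\vxi := \tfrac{d}{d\theta}\big|_{\theta=0}[U(\theta)\vu^*;\pm P U(\theta)\vu^*]\neq \vzero$. The excluded $M=2$ case is exactly when the only non-trivial irrep is $1$-dimensional real and the orbit collapses to an isolated pair of points. Differentiating the invariance identity $\cE([U(\theta)\vu^*;\pm PU(\theta)\vu^*])=\cE(\vw^*)$ twice at $\theta=0$ yields $\vxi^\top\nabla^2\cE(\vw^*)\vxi=0$, and since $\vxi$ is tangent to the unit sphere (being the velocity of a unit-norm curve), it witnesses a zero eigenvalue of the Riemannian Hessian.

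The main obstacle I anticipate is step (iii) in the real-irrep case: verifying $\cE$-invariance requires tracking how the right $O(d_k)$-action on the irrep copies inside $\cH_k$ propagates through $\vw\mapsto X\vw \mapsto\sigma(X\vw)\mapsto \tilde Y^\top\sigma(X\vw)$. Because both $X$ (one-hot encoded) and $\tilde Y$ (built from $\ve_{h_1h_2}$) are intertwined with the regular representation, invariance should follow from Schur orthogonality applied to the decomposition used already in the proof of Theorem~\ref{thm:local_maxima}, but the bookkeeping of real-versus-complex irreps and the interaction with $P$ is the place where subtle sign issues and normalization factors can appear; the complex-pair case is much more direct because it reduces to a single global phase on $\sigma(X\vw)$.
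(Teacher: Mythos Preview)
Your approach is essentially the paper's: both exhibit flat directions from the continuous symmetries of the irrep structure (phase rotation on a complex pair, internal motion for real irreps with $d_k\ge 2$). The paper's argument is terser because it reads flatness directly off the energy decomposition formula (Eqn.~\ref{eqn:energy_decomposition_real}) established inside the proof of Theorem~\ref{thm:local_maxima}; once you have $\cE = \tfrac{M}{2}\sum_{k}\tfrac{1}{d_k}\big|\sum_r \tr(\hat S_{k,r})\big|^2$, the $\cE$-invariance you flag as the ``main obstacle'' in step (iii) is immediate, and the explicit construction of $U(\theta)$ and the tracking of $\sigma(X\vw)$ through $X$, $P$, and $\tilde Y$ become unnecessary.

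One caveat, shared by your proposal and the paper's own proof: your assertion that ``when $M>2$, every non-trivial irrep subspace supporting a local maximum has real dimension at least two'' fails for $1$-dimensional \emph{real} non-trivial irreps---e.g., the sign representation $\chi(m)=(-1)^m$ of $\zz_M$ for even $M>2$, whose isotypic component in the regular representation is one-dimensional. Theorem~\ref{thm:local_maxima} does place a local maximum there (the paper's modular-addition corollary lists $\vu_{M/2}\propto[(-1)^m]_{m=0}^{M-1}$ with $\cE^*=M/8$), and neither a phase rotation nor a right $O(1)$ action produces a nontrivial orbit at that point. The paper's Abelian/non-Abelian case split glosses over the same edge case, so neither argument as written covers these isolated maxima.
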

We can apply the above theorem to the popular modular addition task  which is an Abelian group. The resulting representation is Fourier bases. 
\begin{restatable}[Modular addition]{corollary}{localmaximamodular}
For modular addition with odd $M$, all local maxima are single frequency $\vu_k = a_k[\cos(km\omega)]_{m=0}^{M-1} + b_k[\sin(km\omega)]_{m=0}^{M-1}$ where $\omega := 2\pi/M$ with $\cE^\ast = M / 16$. For even $M$, $\vu_{M/2} \propto [(-1)^m]_{m=0}^{M-1}$ has $\cE^\ast = M / 8$. Different local maxima are disconnected.  
\end{restatable}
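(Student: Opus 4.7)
The plan is to specialize Theorem \ref{thm:local_maxima} to the cyclic group $H = \zz/M\zz$, whose representation theory is just the finite Fourier transform. Since modular addition is Abelian, every complex irrep is one-dimensional ($d_k = 1$), and the characters are $\chi_k(m) = e^{\i k m \omega}$ for $k = 0, 1, \ldots, M-1$ with $\omega = 2\pi/M$. The trivial character $\chi_0$ gives the constant vector, which is annihilated by the zero-mean projection $P^\perp_1$ and therefore cannot support a nonzero local maximum of $\cE$, so we may restrict to $k \neq 0$.

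Next I would classify the nontrivial characters over $\rr$. The conjugate pair $\chi_k, \chi_{M-k}$ fuses into a $2$-dimensional real irrep spanned by $[\cos(km\omega)]_{m=0}^{M-1}$ and $[\sin(km\omega)]_{m=0}^{M-1}$, unless $\chi_k$ is self-conjugate, which requires $2k \equiv 0 \pmod M$. For odd $M$ the only such $k$ is $0$ (already excluded), so \emph{every} nontrivial character belongs to a genuine complex-conjugate pair. For even $M$, the extra solution $k = M/2$ yields the real $1$-dimensional irrep with character $\chi_{M/2}(m) = (-1)^m$.

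Applying Theorem \ref{thm:local_maxima} then reads off the result immediately. For each conjugate pair indexed by $k \in \{1, \ldots, \lfloor (M-1)/2 \rfloor\}$ (or $\{1, \ldots, M/2 - 1\}$ in the even case), the associated local maxima lie in the real $2$-dimensional irrep subspace, giving exactly $\vu_k = a_k[\cos(km\omega)]_m + b_k[\sin(km\omega)]_m$ with energy $\cE^\ast = M/(16\, d_k) = M/16$ since $d_k = 1$. For even $M$, the additional self-conjugate irrep at $k = M/2$ is real $1$-dimensional, so Theorem \ref{thm:local_maxima}'s real-irrep branch applies and gives $\vu_{M/2} \propto [(-1)^m]_{m=0}^{M-1}$ with $\cE^\ast = M/(8\, d_k) = M/8$. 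The $\vw^\ast = [\vu; \pm P\vu]$ form is automatically consistent: $P$ acts on $\zz/M\zz$ as $m \mapsto -m$, sending $\vu_k \mapsto a_k[\cos(km\omega)]_m - b_k[\sin(km\omega)]_m$, still in the same irrep subspace; and $P[(-1)^m] = [(-1)^{-m}] = [(-1)^m]$ for even $M$. Disconnectedness between different $k$ is inherited from the last sentence of Theorem \ref{thm:local_maxima}.

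The work is essentially bookkeeping rather than new analysis, and the main subtlety is correctly identifying when a complex $1$d irrep is self-conjugate over $\rr$: missing the $k = M/2$ case would make the statement incomplete for even $M$, and conflating self-conjugate with conjugate-pair would give the wrong prefactor ($M/8$ vs $M/16$). Once the parity split is made, everything else follows by direct substitution of $d_k = 1$ into the two formulas of Theorem \ref{thm:local_maxima}.
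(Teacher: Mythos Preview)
Your proposal is correct and takes essentially the same approach as the paper, which treats this corollary as an immediate specialization of Theorem~\ref{thm:local_maxima} to the cyclic group $\zz/M\zz$ without writing out a separate proof. Your bookkeeping of the real versus complex-conjugate-pair irreps (and the parity split at $k=M/2$) is exactly the content needed to read off the energy values $M/16$ and $M/8$ from the two cases of Theorem~\ref{thm:local_maxima} with $d_k=1$.
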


\textbf{Role played by the nonlinearity}. Note that if $\sigma(x)$ is linear, then there is no local maxima, but only a global one, which is the maximal eigenvector of $X^\top \tilde Y \tilde Y^\top X$. This corresponds to Linear Discriminative Analysis (LDA)~\citep{balakrishnama1998linear} that finds directions that maximally separate the class-mean vectors. For group arithmetics tasks, for each target $h = h_1 h_2$, each group element ($h_1$ and $h_2$) appears once and only once, the class-mean vectors are identical and thus LDA fails to identify any meaningful directions. With nonlinearity, the learned $\vw$ has clear meanings.

\textbf{Meaning of the learned features}. First, the learned representation can offer a more efficient reconstruction of the target (see Thm.~\ref{thm:predictedtarget}) than simple memorization of all $M^2$ pairs. Second, learned representations naturally contain useful invariance. For example, some irreps of the cyclic group of $\zz_{15}$ behave like its subgroup $\zz_3$ and $\zz_5$, by mapping its element $h$ to $\mathrm{div}(h,3)$ and $\mathrm{div}(h,5)$. If we regard $h$ to be controlled by two hidden factors, then these features lead to focusing on one factor and invariant to others. More importantly, they emerge automatically without explicit supervision. 

\begin{figure}
    \centering
\includegraphics[width=0.8\textwidth]{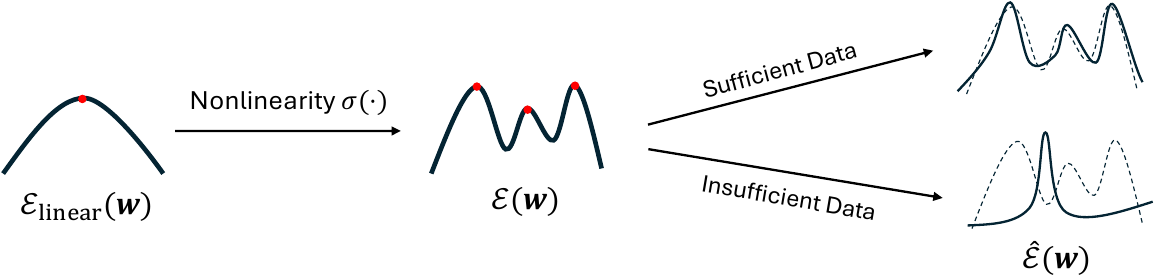}
\vspace{-0.1in}
\caption{\small Change of the landscape of the energy function $\cE$ (Thm.~\ref{theorem:energythm}). \textbf{Left:} $\cE$ with linear activation reduces to simple eigen-decomposition and only have one global maxima. \textbf{Middle:} With nonlinearity, the energy landscape now has multiple strict local maxima, each corresponds to a feature (Thm.~\ref{thm:local_maxima}). More importantly, these features are more efficient than memorization in target prediction (Thm.~\ref{thm:predictedtarget}). \textbf{Right:} With sufficient training data, the landscape remains stable and we can recover these (generalizable) features (Thm.~\ref{thm:dataforgeneralization}), with insufficient data, the landscape changes substantially and local maxima becomes memorization (Thm.~\ref{thm:memorization}).}
\end{figure}

\subsection{Representation power of learned features}
\label{sec:reconstruct-power-of-learned-feature}
With Thm.~\ref{thm:local_maxima}, we know that each node of the hidden layers will learn various representations. The question is whether they are sufficient to reconstruct the target $\tilde Y$ and how efficient they are. 
\begin{restatable}[Target Reconstruction]{theorem}{reconstructionoftarget}
    \label{thm:predictedtarget}
Assume (1) $\cE$ is optimized in complex domain $\cc$, (2) for each irrep $k$, there are $m^2_k d^2_k$ pairs of learned weights $\vw = [\vu; \pm P\vu]$ whose associated rank-1 matrices $\{\vu\vu^*\}$ form a complete bases for $\cH_k$ and (3) the top layer $V$ also learns with $\eta=0$, then $\hat Y = \tilde Y$. 
\end{restatable}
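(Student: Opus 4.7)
The plan is to reverse-engineer top-layer coefficients $V$ that make the network output the regular-representation entry $(R_h)_{h_1[i], h_2^{-1}[i]}$, which equals the target $Y_{i,h}$, and then invoke assumption (3) to argue the $V$-dynamics converges to that choice.

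First I would unpack the hidden activations for each paired weight. For $\vw_j^{\pm} = [\vu_j; \pm P\vu_j]$, using $(P\vu_j)_{h} = \vu_{j, h^{-1}}$, the pre-activation is $\vx_i^{\top} \vw_j^{\pm} = \vu_{j, h_1[i]} \pm \vu_{j, h_2^{-1}[i]}$. Under $\sigma(x)=x^{2}$ in the complex setting of assumption (1), the top layer can take a fixed $\pm$ combination to form $\tfrac{1}{4}(f_j^{+} - f_j^{-})(i) = \vu_{j, h_1[i]} \vu_{j, h_2^{-1}[i]} = (\vu_j \vu_j^{*})_{h_1[i], h_2^{-1}[i]}$. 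On the target side, $R_h \ve_{h_2^{-1}} = \ve_{h h_2^{-1}}$ gives $Y_{i,h} = [h_1[i] = h h_2^{-1}[i]] = (R_h)_{h_1[i], h_2^{-1}[i]}$. So column $h$ of $V$ perfectly reconstructs $Y_{\cdot,h}$ iff it solves the operator equation $\sum_j V_{j,h}\, \vu_j \vu_j^{*} = R_h$.

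Next I would decompose this equation across irreps. The block decomposition $R_h = \bigoplus_k R_h^{(k)}$ with $R_h^{(k)} \in \mathrm{End}(\cH_k)$ splits the problem into one equation per irrep, and the complex dimension of $\mathrm{End}(\cH_k)$ equals $(m_k d_k)^2 = m_k^2 d_k^2$, exactly the number of rank-one operators supplied by assumption (2). Since the complex span of rank-one Hermitians on a subspace is the full endomorphism algebra of that subspace, each $R_h^{(k)}$ has an expansion in this basis, which assembles into a global $V$ that achieves zero loss. Assumption (3) (top layer learning with $\eta=0$) then guarantees the $V$-dynamics converges to the unique minimizer of $\|\tilde Y - \tilde F V\|_F^{2}$, so $\hat Y = \tilde Y$, any row-constant being killed by $P^{\perp}_1$.

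The main obstacle will be reconciling the bilinear form produced by $\sigma(x)=x^{2}$ (naturally $\vu_j \vu_j^{\top}$) with the sesquilinear $\vu_j \vu_j^{*}$ in assumption (2); over $\cc$, $\{\vu\vu^{\top}\}$ spans only the symmetric matrices and falls short of the $m_k^2 d_k^2$-dimensional endomorphism algebra required to hit a general $R_h$. The cleanest fix is to adopt the complex activation $|x|^{2} = x \bar{x}$, under which $f_j^{+} - f_j^{-}$ recovers the sesquilinear cross term $\vu_{j,h_1} \overline{\vu_{j, h_2^{-1}}}$ directly; one then needs complex $V_{j,h}$ to express non-Hermitian targets like $R_h$, but the counting and span arguments carry over unchanged.
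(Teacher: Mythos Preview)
Your approach is correct and somewhat more direct than the paper's. Both arguments ultimately reduce to the same claim: the rank-one Hermitians $\{U_j=\vu_j\vu_j^*\}_{j\in J_k}$ span $\End(\cH_k)$ over $\cc$, so each $\Pi_k R_h\Pi_k$ can be written as $\sum_j V_{j,h}U_j$; reading off the $(h_1,h_2^{-1})$ entry then recovers $\tilde Y$ (the trivial-irrep piece dropping out, as you note). Where you exhibit such a $V$ and invoke least squares to say $\hat Y=\tilde Y$, the paper instead computes the LS solution explicitly: it shows the Gram $\tilde F^\top\tilde F$ is block-diagonal across isotypes via a Schur-type identity $\sum_h\langle \vc_U(h),\vc_V(h)\rangle=4\tr(UV^*)$, solves block by block, and lands on the closed form $\hat Y^{(k)}_{(\cdot,h),h'}=\diag(R_h^\top\Pi_k R_{h'}\Pi_k)$, which it sums over $k\neq 0$. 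Your route is shorter; the paper's buys the per-isotype formula that is reused later in the top-down modulation theorem.

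On your ``main obstacle'': you are right that $\sigma(x)=x^2$ over $\cc$ produces the bilinear cross term $(\vu_j\vu_j^\top)_{h_1,h_2^{-1}}$ rather than the sesquilinear $(\vu_j\vu_j^*)_{h_1,h_2^{-1}}$, and the complex span of complex-symmetric rank-ones does not contain a general $R_h$. The paper does not actually resolve this either: it simply \emph{declares} $U_j=\vu_j\vu_j^*$ and writes $\tilde\vf_{j,h}=2\diag(R_h^\top U_j)$ without deriving the conjugate from the forward pass, leaning on the surrounding discussion that in $\cc$ Hermitian matrices suffice to represent unitary ones. So your flag is sharp, and your $|x|^2$ patch is one honest way to close the gap; just be aware that the paper proceeds as if the features are already $\diag(R_h^\top\vu_j\vu_j^*)$ and does not take that detour.
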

From the theorem, we know that $K = 2\sum_{k\neq 0} m^2_k d^2_k \le 2 \left[(M-\irrepcnt{H})^2 + \irrepcnt{H} - 1\right]$ suffice. In particular, for Abelian group, $\irrepcnt{H} = M - 1$ and $K = 2M - 2$. This is much more efficient than a pure memorization solution that would require $M^2$ nodes, i.e., each node memorizes a single pair $(h_1,h_2) \in H\times H$. 

\textbf{Assumptions of the theorem}. Training the model in an end-to-end manner automatically satisfies the assumption (3). If we initialize the weights randomly, then with high probability, the resulting $\vu$ are not collinear and the assumption (2) can be satisfied. For assumption (1), interestingly, no such theorem can be stated in the real domain $\rr$, because in $\rr$, the subspace of orthogonal matrices, which group representations belong to, is not covered by the subspace of symmetric matrices spanned by $\{\vu\vu^\top\}$. In contrast, in the complex domain $\cc$, the subspace of unitary matrices can be represented by Hermitian matrices. Does that mean that the real representation is not that useful? Not really. If we change $\vw=[\vu;\pm P\vu]$ slightly to $\vw=[\vu;\pm P \vu']$ in which $\vu'$ is a small perturbation of $\vu$, then Thm.~\ref{thm:predictedtarget} holds for real solutions. This happens in the stage III when end-to-end backpropagation refines the representation. Fig.~\ref{fig:loss-complex-weights} shows that using complex weights still works and may grok faster. 

\subsection{The Scaling Laws of the boundary of memorization and generalization}
\label{sec:non-generalizable-solutions}

While Thm.~\ref{thm:local_maxima} shows the nice structure of local maxima (and features learned), it requires training on all $n = M^2$ pairs of group elements. One may ask whether these representations can still be learned if training on a subset. The answer is yes, by checking the stability of the local maximum.
\begin{restatable}[Amount of samples to maintain local optima]{theorem}{dataforgeneralization}
    \label{thm:dataforgeneralization}
    If we select $n \gtrsim d_k^2 M \log (M / \delta)$ data sample from $H\times H$ uniformly at random, then with probability at least $1-\delta$, the empirical energy function $\hat\cE$ keeps local maxima for $d_k$-dimensional irreps (Thm.~\ref{thm:local_maxima}). 
\end{restatable}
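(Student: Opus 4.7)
The plan is to show that subsampling of size $n \gtrsim d_k^2 M\log(M/\delta)$ produces an empirical energy $\hat\cE$ whose Hessian at a population local maximum $\vw^* = [\vu;\pm P\vu]$ is close enough to $\nabla^2\cE(\vw^*)$ that the structure from Thm~\ref{thm:local_maxima} survives. The argument proceeds via matrix concentration on the data-defining operators, a computation of the Hessian spectrum in the irrep basis, and a Morse--Bott implicit function argument to handle the flat manifold from Corollary~\ref{corollary:flatnessince}.

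First, using $\sigma(x)=x^2$, I would rewrite
$$\cE(\vw) = \tfrac{1}{2}\sum_{c=1}^M (\vw^\top B_c \vw)^2, \qquad B_c := \sum_{i=1}^{M^2}\tilde Y_{ic}\vx_i\vx_i^\top,$$
so that subsampling corresponds to replacing each $B_c$ by the unbiased estimator $\hat B_c = (M^2/n)\sum_{i\in S}\tilde Y_{ic}\vx_i\vx_i^\top$. Each $B_c$ has $\|B_c\|_{\mathrm{op}} = O(1)$ by the block structure coming from the regular representation, and since $(\vx_i\vx_i^\top)^2 = 2\vx_i\vx_i^\top$ the matrix variance proxy of the rescaled summand is $O(M^2/n^2)$ per sample with a uniform per-sample bound of $O(M^2/n)$. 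Applying the matrix Bernstein inequality and union-bounding over $c \in \{1,\dots,M\}$ yields, with probability at least $1-\delta$,
$$\max_c\|\hat B_c - B_c\|_{\mathrm{op}} \;\lesssim\; \sqrt{M^2\log(M/\delta)/n}.$$

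To transfer this to the Hessian, I would compute $\nabla^2\cE(\vw^*)$ in the irrep basis built in Thm~\ref{thm:local_maxima}: it vanishes on the flat tangent directions of the manifold of maxima (a subspace of dimension $O(d_k^2)$ inside $\cH_k\oplus\cH_{\bar k}$) and has negative eigenvalues of magnitude $\Theta(1/d_k)$ on the normal bundle, which is the quantity controlling stability of the local maximum. The Hessian perturbation $\|\nabla^2\hat\cE(\vw^*)-\nabla^2\cE(\vw^*)\|_{\mathrm{op}}$ is in turn bounded by $O(\sqrt M\cdot\max_c\|\hat B_c - B_c\|_{\mathrm{op}})$ once the $M$ class-indexed perturbations are assembled into a single matrix-Bernstein sum over the $n$ data points, so that orthogonality across classes produces a $\sqrt M$ rather than an $M$. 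Requiring this perturbation to be below the normal curvature $1/d_k$ gives $n \gtrsim d_k^2 M\log(M/\delta)$, after which a Morse--Bott implicit-function argument applied in the normal bundle of the flat manifold produces a nearby manifold of local maxima for $\hat\cE$. The main obstacle is precisely the $\sqrt M$ versus $M$ step: a naive union bound summing $\sum_c \|\hat B_c - B_c\|_{\mathrm{op}}$ loses a factor of $M$ and gives only $n\gtrsim d_k^2M^2\log(M/\delta)$, so exploiting independence across classes via a single matrix-valued concentration bound on $\nabla^2\hat\cE - \nabla^2\cE$ is essential to obtain the stated rate.
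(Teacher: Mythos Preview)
Your route differs materially from the paper's, and the step you flag as the ``main obstacle'' is in fact where your argument breaks. The paper never controls the Hessian of $\hat\cE$ in $\vw$-space. Instead it stays in the reduced coordinates $c_k$ (the irrep block masses) from the proof of Thm.~\ref{thm:local_maxima}, where the population energy is $\cE(c)=\tfrac{M}{8}\sum_k c_k^2/d_k$ on the simplex and each vertex $c=e_a$ is a local max by a \emph{first-order} directional argument: leaking $\epsilon$ mass to any other coordinate costs $\tfrac{M}{4d_a}\epsilon$. The concentration is then applied not to your $\hat B_c$ but to the irrep Gram operators $\hat{\mathsf A}_{k,k'}=\tfrac{1}{M}\sum_h w_h\,\overline{C_{k'}(h)}\otimes C_k(h)$, where $w_h$ is the (random) fraction of kept rows with target $h$. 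Because each summand is $\tfrac{1}{M}(w_h-1)$ times a \emph{unitary} matrix, the variance proxy is $\sum_h \tfrac{\mathrm{Var}(w_h)}{M^2}I\preceq \tfrac{1}{pM^2}I$, and matrix Bernstein over $M$ class-indexed summands gives $\|E\|_{\mathrm{op}}\lesssim\sqrt{\log(M/\delta)/(Mp)}$ directly. Requiring $\|E\|_{\mathrm{op}}<1/d_a$ yields $n=pM^2\gtrsim d_a^2 M\log(M/\delta)$. There is no $\sqrt M$-versus-$M$ tension because the variance computation already exploits unitarity and Schur orthogonality.

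Your $B_c$ decomposition does not give you access to this structure. Bounding $\max_c\|\hat B_c-B_c\|$ first and then summing over $c$ in $\nabla^2\cE=\sum_c[2(\vw^\top B_c\vw)B_c+4B_c\vw\vw^\top B_c]$ loses a full factor of $M$, as you say. Your proposed repair---``assemble into a single matrix-Bernstein sum over the $n$ data points''---is not a repair but an abandonment of the $B_c$ bound: if you apply matrix Bernstein once to $\nabla^2\hat\cE-\nabla^2\cE$ as a sum over sampled rows, the $\hat B_c$ estimates play no role, and you must compute the variance proxy of that single sum from scratch. Doing so correctly requires precisely the identity $\sum_h\vecop(C_k(h))\vecop(C_{k'}(h))^*=\tfrac{M}{d_k}\delta_{kk'}\vecop(I)\vecop(I)^*$ that the paper invokes; the ``orthogonality across classes'' you appeal to is this Schur relation, and it lives in the irrep basis, not in the $B_c$ basis.

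A second issue: your claim that the normal-bundle Hessian eigenvalues are $\Theta(1/d_k)$ is asserted, not computed, and the scaling is wrong. Since $\cE^*=M/(8d_k)$, a direct calculation along a geodesic toward another irrep vertex gives second derivative $-\Theta(M/d_k)$, not $\Theta(1/d_k)$. This factor of $M$ must then be tracked in the perturbation as well; the paper avoids this bookkeeping because in the reduced $c$-coordinates both the slope $M/(4d_a)$ and the perturbation $\tfrac{M}{4}\|E\|_{\mathrm{op}}$ carry the same prefactor, which cancels cleanly. Your Morse--Bott machinery for the flat directions is reasonable in principle, but it is moot until the normal-direction comparison is established at the correct scale.
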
  
\begin{figure}[t]
\centering
\includegraphics[width=.48\textwidth]{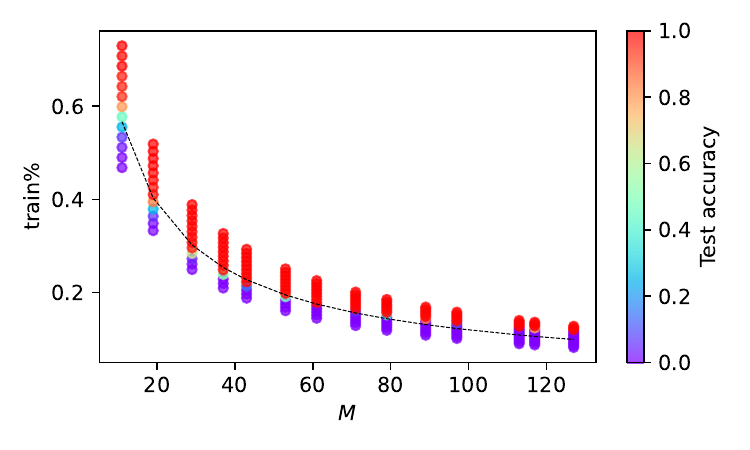}
\hfill
\includegraphics[width=.48\textwidth]{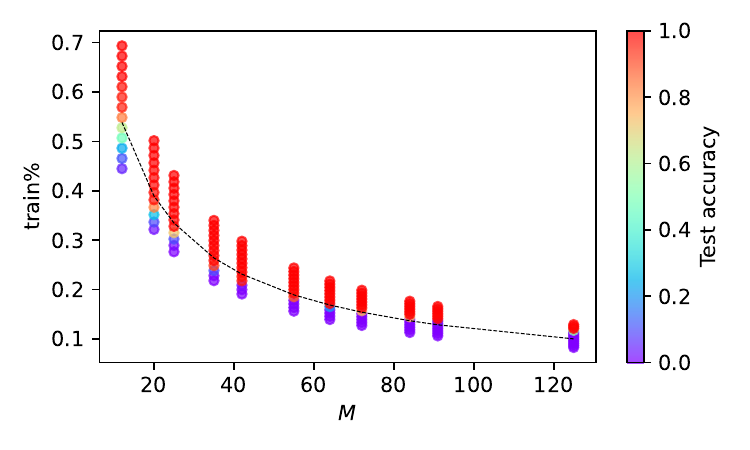}

\includegraphics[width=0.48\textwidth]{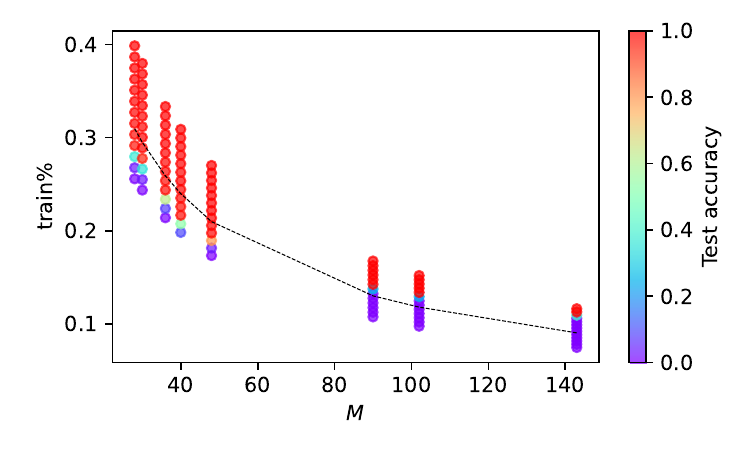}
\hfill
\includegraphics[width=0.48\textwidth]{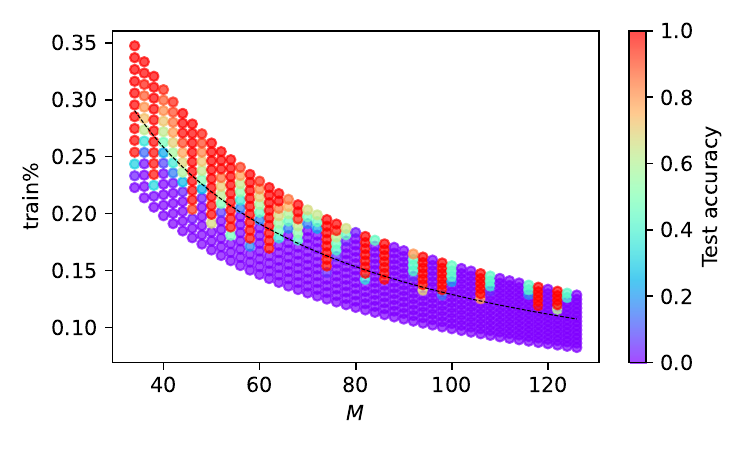}
\vspace{-0.1in}
\caption{\small Generalization/memorization phase transition in modular addition tasks. When $M$ grows, the training data ratio $p = n / M^2$ required to achieve generalization decreases. This coincides with Thm.~\ref{thm:dataforgeneralization} which predicts $p \sim M^{-1}\log M$ (dotted line). We use learning rate $0.0005$, weight decay $0.0002$ and $K = 2048$. Results averaged over 20 seeds. \textbf{Top Left:} Simple cyclic group $\zz_M$ for prime $M$. \textbf{Top Right:} $\zz_M$ for composite $M$. \textbf{Bottom Left:} Product group $\zz_{4}\otimes \zz_{7}$, $\zz_{5}\otimes \zz_{6}$, $\zz_{2}\otimes \zz_{2} \otimes \zz_{9}$, $\zz_{13}\otimes \zz_{11}$, $\zz_{5}\otimes \zz_{2} \otimes \zz_{2} \otimes \zz_{2}$, $\zz_{6}\otimes \zz_{4} \otimes \zz{2}$, $\zz_{3}\otimes \zz_{2} \otimes \zz_{17}$, $\zz_{2}\otimes \zz_{3} \otimes \zz_{3} \otimes \zz_5$. \textbf{Bottom Right:} Non-Abelian groups with $\max_k d_k = 2$ (maximal irreducible dimension $2$). 
These non-Abelian groups are generated from GAP programs (See Appendix Sec.~\ref{sec:gap}).
} 
\label{fig:mem_gen_boundary}
\end{figure}

The theorem above says that we do not need all $M^2$ samples, but only $O(M \log M)$ samples suffice to learn these features, which will generalize to unseen data according to Thm.~\ref{thm:predictedtarget}. Fig.~\ref{fig:mem_gen_boundary} demonstrates that the empirical results closely match the theoretical prediction, and there is a clear phase transition around the boundary (test accuracy $0 \rightarrow 1$), where the training data ratio $p := n/M^2 = O(M^{-1} \log M)$. 


\textbf{Memorization}. On the other hand, we can also construct cases when memorization is the only local maximum of $\cE$. This happens when we only collect samples for one target $h$ but missing others, and diversity is in question.  

\begin{restatable}[Memorization solution]{theorem}{energyfewdata}
    \label{thm:memorization}
Let $\phi(x) := \sigma'(x) / x$ and assume $\sigma'(x) > 0$ for $x>0$. For group arithmetic tasks, suppose we only collect sample $(g, g^{-1}h)$ for one target $h$ with probability $p_g$. Then the global optimal of $\cE$ is a memorization solution, either (1) a \emph{focused memorization} $\vw = \frac{1}{\sqrt{2}}(\ve_{g^*}, \ve_{g^{*-1}h})$ for $g^* = \arg\max p_g$ if $\phi$ is nondecreasing, or (2) a \emph{spreading memorization} with $\vw = \frac12\sum_g s_g[\ve_{g}, \ve_{g^{-1}h}]$, if $\phi$ is strictly decreasing. Here $s_g = \phi^{-1}(2\lambda/p_g)$ and $\lambda$ is determined by $\sum_g s_g^2 = 2$. No other local optima exist.  
\end{restatable}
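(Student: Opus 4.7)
The plan is to reduce the constrained maximization of $\cE$ on $\|\vw\|_2 = 1$ to a one-dimensional problem over pair-activations $s_g := \vw_1[g] + \vw_2[g^{-1}h]$ for $g \in H$, and then run a Lagrangian analysis with a case split on the monotonicity of $\phi$. First I would expand $\cE$ under the single-target data distribution: since each input $(\ve_g, \ve_{g^{-1}h})$ contributes $\sigma(\vw_1[g]+\vw_2[g^{-1}h]) = \sigma(s_g)$, the relevant class-$h$ component of $\cE$ reduces, up to positive constants, to $\bigl(\sum_g p_g\sigma(s_g)\bigr)^2$, while the norm constraint becomes $\sum_g(\vw_1[g]^2 + \vw_2[g^{-1}h]^2) = 1$. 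Since the objective depends on $(\vw_1, \vw_2)$ only through $\{s_g\}$ whereas the constraint decomposes over individual coordinates, AM--QM yields $\vw_1[g]^2 + \vw_2[g^{-1}h]^2 \geq s_g^2/2$ with equality iff $\vw_1[g] = \vw_2[g^{-1}h] = s_g/2$. Every local maximum must therefore have this symmetric form, reducing the problem to maximizing $f(\mathbf{s}) := \sum_g p_g \sigma(s_g)$ over the sphere $\sum_g s_g^2 = 2$, with $\vw$ recovered as $\vw = \tfrac12\sum_g s_g(\ve_g, \ve_{g^{-1}h})$.

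Next I would form the Lagrangian $L = f - \lambda(\sum_g s_g^2 - 2)$, giving the stationarity condition $\phi(s_g) = 2\lambda/p_g$ on the support of $\mathbf{s}$. When $\phi$ is strictly decreasing, $\phi^{-1}$ is well-defined on $(0, \infty)$, so the unique full-support stationary point is $s_g = \phi^{-1}(2\lambda/p_g)$ with $\lambda$ pinned by $\sum_g s_g^2 = 2$; partial-support configurations cannot be maxima because the divergence of $\phi$ near $0^+$ makes activating any off-support coordinate strictly profitable at first order on the tangent sphere. When $\phi$ is nondecreasing, $\sigma$ is convex on $(0, \infty)$: restricted to any two-coordinate great circle $s_{g_1} = \sqrt{2}\cos\theta$, $s_{g_2} = \sqrt{2}\sin\theta$, the function $p_{g_1}\sigma(\sqrt{2}\cos\theta) + p_{g_2}\sigma(\sqrt{2}\sin\theta)$ attains its maximum on $[0, \pi/2]$ at an endpoint by convexity; iterating this pairwise shows every local maximum concentrates on a single coordinate, and among these $f = p_g\sigma(\sqrt{2})$ is globally maximized at $g^* = \arg\max_g p_g$, yielding the focused solution.

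The main obstacle is the second-order bookkeeping needed to exclude unlisted local optima. In the spreading case, one must verify that the constrained Hessian at the full-support stationary is negative definite on the tangent space of the sphere, which reduces to showing that $\mathrm{diag}(p_g \sigma''(s_g)) - 2\lambda I$ restricted to the tangent subspace is negative --- a fact controlled by strict monotonicity of $\phi$ (equivalently, $x\sigma''(x) < \sigma'(x)$). In the focused case, one must show every stationary with support of size $\geq 2$ is a strict saddle along some two-coordinate great circle, again by strict convexity of $\sigma$ along the relevant chord. Additional technical items include justifying the reduction of $\cE$ to the class-$h$ component when samples are concentrated on target $h$, handling $\sigma(0)=0$ and the behavior of $\phi$ near $0$, and resolving the sign ambiguity of $s_g$ (which flips $f$ only if $\sigma$ is odd and leaves it invariant if $\sigma$ is even).
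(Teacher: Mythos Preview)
Your reduction to maximizing $F(\mathbf{s})=\sum_g p_g\,\sigma(s_g)$ on $\sum_g s_g^2\le 2$ via the AM--QM pairing $\vw_1[g]=\vw_2[g^{-1}h]=s_g/2$, and your KKT treatment of the spreading case, match the paper's proof exactly. For the focused case the paper uses the same two-coordinate move along the sphere (a ``squared-mass transfer'' $s_i(t)=\sqrt{s_i^2+t}$, $s_j(t)=\sqrt{s_j^2-t}$, with $\Psi'(t)=\tfrac12[p_i\phi(s_i(t))-p_j\phi(s_j(t))]>0$), so the strategy is the same.

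There is one slip in your justification, though: convexity of $\sigma$ alone does \emph{not} force the maximum on the great-circle arc to lie at an endpoint. For instance $\sigma(x)=x^{3/2}$ is strictly convex but has $\phi(x)=\tfrac32 x^{-1/2}$ decreasing, and on the arc $p_1\sigma(\sqrt2\cos\theta)+p_2\sigma(\sqrt2\sin\theta)$ has an interior maximum. The property you actually need is convexity of $u\mapsto\sigma(\sqrt u)$: its derivative is $\tfrac12\phi(\sqrt u)$, so this is precisely the hypothesis that $\phi$ is nondecreasing. With that substitution $u_g=s_g^2$, the objective $\sum_g p_g\,\sigma(\sqrt{u_g})$ is convex on the simplex $\sum_g u_g=2$ and hence attains its maximum at a vertex, which gives the focused solution directly. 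After this fix your argument is complete and, in fact, cleaner than the paper's pairwise transfer, since it handles ties in $p_g$ uniformly.
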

We can verify that power activations (e.g., $\sigma(x) = x^2$) lead to focused memorization, while more practical ones (e.g., ReLU, SiLU, Tanh and Sigmoid) lead to spreading memorization. We leave it for future work whether this property leads to better results in large scale settings.  

\begin{figure}
    \centering
\includegraphics[width=0.32\textwidth]{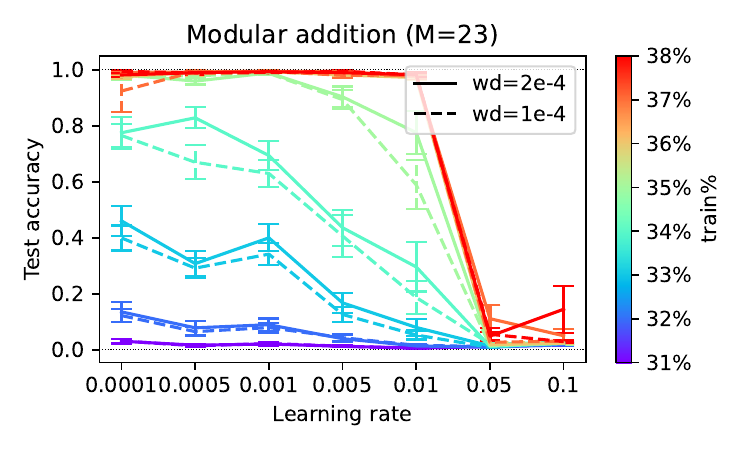}
\includegraphics[width=0.32\textwidth]{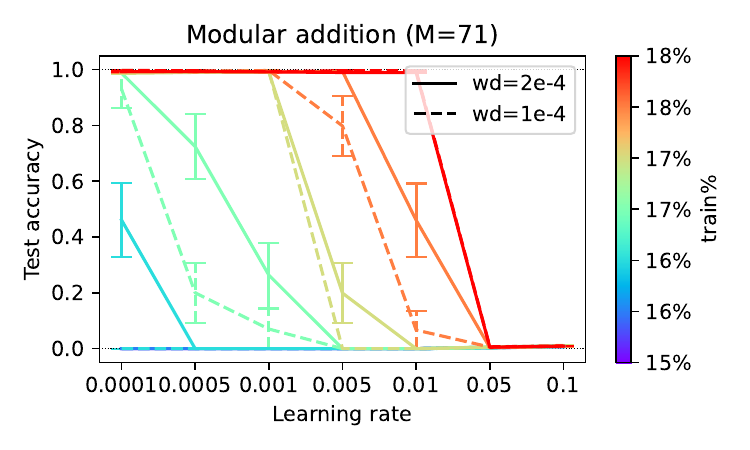}
\includegraphics[width=0.32\textwidth]{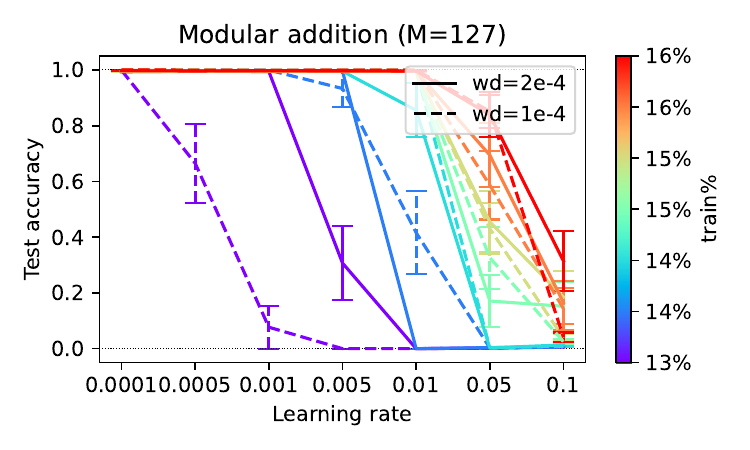}
\caption{\small Phase transition from generalizable (\gsol{}) to non-generalizable solutions (\ngsol{}) in modular addition tasks ($M=23,71,127$) with $K = 1024$. Around this critical region, small learning rate more likely lead to \gsol{}, due to the fact that small learning rate keeps the trajectory staying within the basin towards \gsol{}, while large learning rate converges to solutions with higher $\cE$ (Fig.~\ref{fig:small_data_regime}). Results averaged over 15 seeds.}
\label{fig:phase-transition}
\end{figure} 

\begin{figure}
    \centering
    \includegraphics[width=0.3\textwidth]{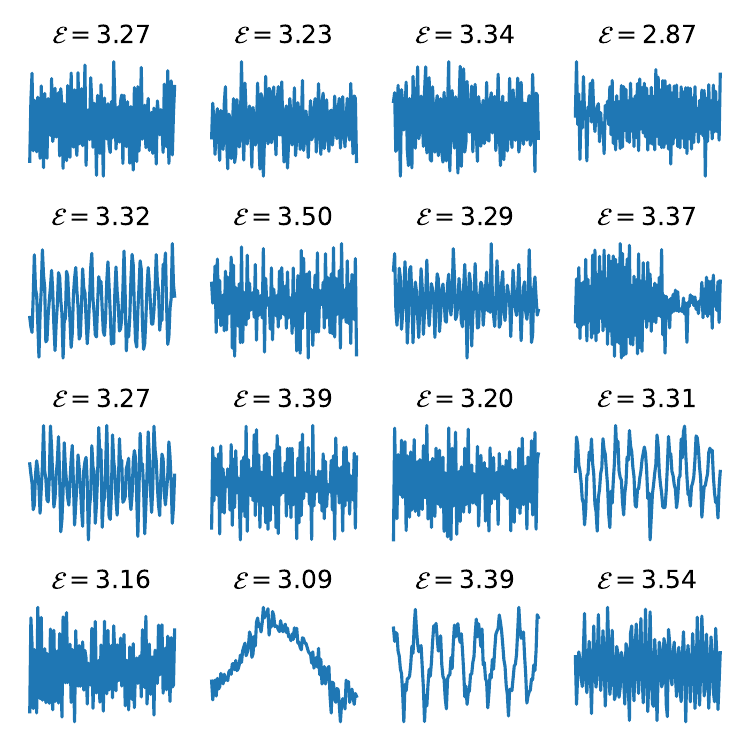}
    \hfill
    \includegraphics[width=0.3\textwidth]{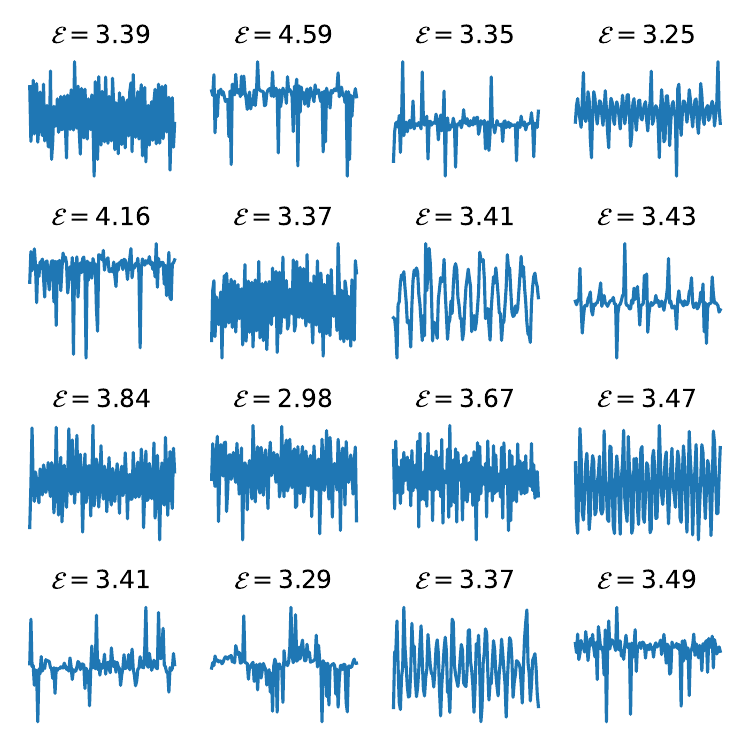}
    \hfill
    \includegraphics[width=0.3\textwidth]{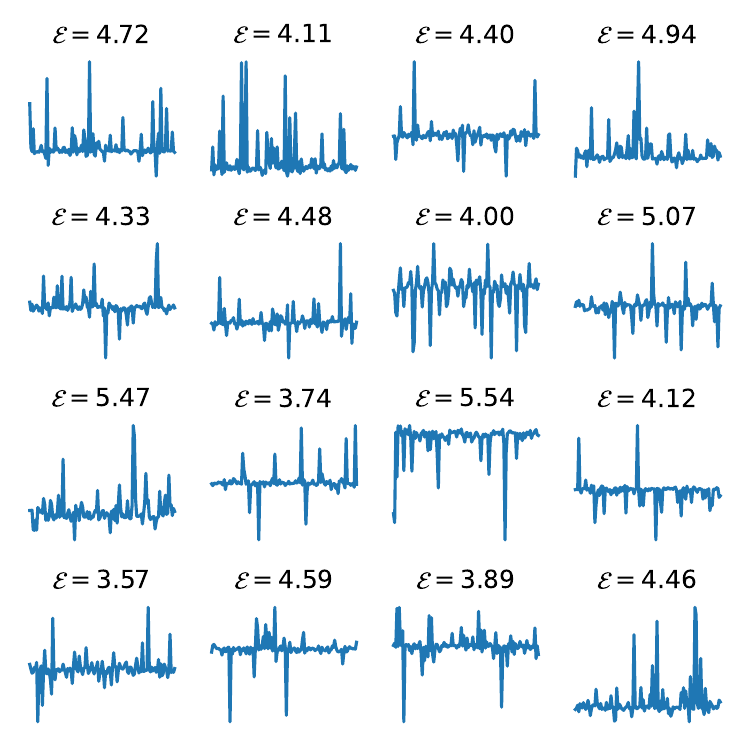}
    \caption{\small In small data regime of modular addition with $M=127$ and $n=3225$ (20\% training out of $127^2$ samples), Adam optimizer with small learning rate (($0.001$, left) and ($0.002$, middle)) leads to generalizable solutions (Fourier bases) with low $\cE$, while with large learning rate ($0.005$, right), Adam found non-generalizable solutions (e.g., memorization) with much higher $\cE$.}
    \label{fig:small_data_regime}
\end{figure}

\textbf{Boundary of generalization and memorization (\emph{semi-grokking}~\citep{varma2023explaining})}. In between the two extreme cases, local maxima of both memorization and generalization may co-exist. In this case, small learning rate keeps the optimization within the attractive basin and converges to \gsol{}, while large learning rate leads to \ngsol{} which has better energy $\cE$ (Fig.~\ref{fig:small_data_regime}). 

Our analysis fits well with the observed empirical behaviors that there exists a critical data size~\citep{varma2023explaining} or more precisely a critical data ratio~\citep{wang2024grokked,abramov2025grokking,liu2022towards}, above which the grokking suddenly leads to generalization. Our framework \ours{} gives a theory-backed explanation: different data distribution leads to different landscapes, which gives either generalization or memorization local maxima that the weights will fall into. It also explains the \emph{ungrokking} phenomenon: a grokked model can move back to memorization when continues to train on a smaller dataset~\citep{varma2023explaining}, and is consistent with \citep{nguyen2025differential} that shows that task diversity is important for generalization. The empirical hypothesis (e.g., ``memorization/generalization circuits'') now becomes a natural consequence of changed landscape after training. 


\section{Stage III: Interactive feature learning}
\label{sec:interactive-feature-learning}
The starting point of Stage II is to simplify the exact backpropagated gradient $G_F = P_\eta \tilde Y \tilde Y^\top \tilde F B$ (Eqn.~\ref{eqn:G_F_ridge}) with $B := (\tilde F^\top \tilde F + \eta I)^{-1}$ to $G_F \propto \eta \tilde Y \tilde Y^\top F$, by two approximations: (1) $B \propto I$, and (2) $P_\eta \propto \eta I$. The two approximations are valid due to Thm.~\ref{lemma:gfstructure} when the hidden weights $W$ is randomly initialized. When training continues, $W$ evolves from random initialization and the conditions may not hold anymore. In this section we put them back and study their behaviors. 

\subsection{Repulsion of similar features}
\label{sec:repulsive-similar-feature}
We first study the effect of $B$, which leads to interplay of hidden nodes. Over the training, the activations of two nodes can be highly correlated and the following theorem shows that similar features leads to repulsion. 
\begin{restatable}[Repulsion of similar features]{theorem}{repulsionthm}
    \label{thm:repulsion}
    The $j$-th column of $\tilde F B$ is given by: 
\begin{equation}
    [\tilde F B]_j = b_{jj}\tilde \vf_j + \sum_{l=1}^K b_{jl} \tilde \vf_l 
\end{equation}
where $\sign(b_{jl}) = -\sign(\tilde \vf_{j}^\top P_{\eta,-jl} \tilde \vf_{l})$. Here $P_{\eta,-jl} := I - \tilde F_{-jl} (\tilde F_{-jl}^\top \tilde F_{-jl} + \eta I)^{-1} \tilde F_{-jl}^\top$ is a projection matrix constructed from $\tilde F_{-jl}$, which is $\tilde F$ excluding the $l$-th and $j$-th columns. 
\end{restatable}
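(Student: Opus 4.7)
The first identity in the statement is just the definition of matrix multiplication with $B = (\tilde F^\top \tilde F + \eta I)^{-1}$, so the substance lies entirely in the sign characterization of $b_{jl}$. The plan is a single Schur-complement argument applied to the $2\times 2$ principal submatrix of $M := \tilde F^\top \tilde F + \eta I$ indexed by $\{j,l\}$.

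First, I would permute the columns of $\tilde F$ so that columns $j$ and $l$ appear last, writing $\tilde F = [\tilde F_{-jl},\, C]$ with $C := [\tilde \vf_j, \tilde \vf_l] \in \rr^{n\times 2}$; this permutation only relabels entries of $B$ and does not affect $b_{jl}$. Partitioning $M$ accordingly and applying the standard block-inverse formula, the bottom-right $2\times 2$ block of $B = M^{-1}$ is exactly $S^{-1}$, where $S$ is the Schur complement of the top-left block. A short simplification gives
\begin{equation}
    S \;=\; C^\top C + \eta I_2 \;-\; C^\top \tilde F_{-jl}(\tilde F_{-jl}^\top \tilde F_{-jl} + \eta I)^{-1}\tilde F_{-jl}^\top C \;=\; C^\top P_{\eta,-jl} C + \eta I_2,
\end{equation}
where the second equality is immediate from the paper's definition of $P_{\eta,-jl}$. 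This is exactly the step where using $P_{\eta,-jl}$ (rather than a bare orthogonal projector) cleans things up, since the shift $+\eta I$ inside the pseudoinverse is what matches the Schur complement.

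Next, I would expand $S$ into its four scalar entries (diagonal $\tilde \vf_j^\top P_{\eta,-jl} \tilde \vf_j + \eta$ and $\tilde \vf_l^\top P_{\eta,-jl} \tilde \vf_l + \eta$, off-diagonal $\tilde \vf_j^\top P_{\eta,-jl} \tilde \vf_l$) and invert this $2\times 2$ matrix by the cofactor formula to read off
\begin{equation}
    b_{jl} \;=\; \frac{-\,\tilde \vf_j^\top P_{\eta,-jl} \tilde \vf_l}{\det S}.
\end{equation}
To conclude, I would argue $\det S > 0$: by the Woodbury identity already invoked in Lemma~\ref{lemma:gfstructure}, $P_{\eta,-jl} = \eta(\tilde F_{-jl}\tilde F_{-jl}^\top + \eta I)^{-1} \succeq 0$, hence $S \succeq \eta I_2 \succ 0$ for $\eta>0$, and in particular $\det S > 0$. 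The stated sign identity then follows directly.

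\textbf{Main obstacle.} There is no genuine obstruction; this is a routine block-matrix computation. The only points worth a careful check are (i) that the Schur-complement simplification lines up with the paper's $P_{\eta,-jl}$ without residual cross terms, and (ii) the positivity of $\det S$, which I would justify via the Woodbury identity rather than by arguing that $P_{\eta,-jl}$ is idempotent (it isn't).
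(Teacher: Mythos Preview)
Your proposal is correct and follows essentially the same Schur-complement argument as the paper: permute the two distinguished columns together, block-partition $M=\tilde F^\top\tilde F+\eta I$, identify the relevant $2\times 2$ block of $M^{-1}$ as the inverse of the Schur complement $S=C^\top P_{\eta,-jl}C+\eta I_2$, and read off $b_{jl}$ by the $2\times2$ cofactor formula. The only cosmetic difference is that the paper places the pair in positions $(1,2)$ and argues $\det S>0$ from $G\succ 0\Rightarrow S\succ 0$, whereas you place them last and invoke Woodbury to get $P_{\eta,-jl}\succeq 0$; both routes are equivalent.
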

\textbf{Remark}. Intuitively, if $\tilde \vf_{j}$ and $\tilde \vf_{l}$ are similar, then $b_{jl}$ will be negative and the resulting $j$ and $l$ columns of $\tilde F B$ will be pushed away from each other and vise versa. 

\def\cS{\mathcal{S}}

\subsection{Top-down Modulation}
\label{sec:top-down-modulation}
Over the training process, it is possible that some local optima are learned first while others learned later. When the representations are learned partially, the backpropagation offers a mechanism to focus on missing pieces, by changing the landscape of the energy function $\cE$.   

\begin{restatable}[Top-down Modulation]{theorem}{topdownmodulation}
\label{thm:top-down-modulation}
For group arithmetic tasks with $\sigma(x) = x^2$, if the hidden layer learns only a subset $\cS$ of irreps, then the backpropagated gradient $G_F \propto (\Phi_\cS \otimes \vone_M)(\Phi_\cS \otimes \vone_M)^\ast F$ (see proof for the definition of $\Phi_\cS$), which yields a modified $\cE_\cS$ that only has local maxima on the missing irreps $k\notin\cS$.  
\end{restatable}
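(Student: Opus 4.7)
The plan is to trace how the exact backpropagated gradient $G_F = P_\eta\, \tilde Y \tilde Y^\top \tilde F B$ (Eqn.~\ref{eqn:G_F_ridge}) deforms once the hidden layer has already concentrated on the irreps indexed by $\cS$. First I would rewrite $\tilde Y \tilde Y^\top$ in the basis induced by the regular representation. Because each row of $Y$ is $\ve_{h_1 h_2}$, character orthogonality applied to the decomposition $R_h = Q \left(\bigoplus_k \bigoplus_r C_k(h)\right) Q^\ast$ yields a block split
\begin{equation}
\tilde Y \tilde Y^\top \;=\; \sum_{k \ne 0}\, (\Phi_k \otimes \vone_M)(\Phi_k \otimes \vone_M)^\ast,
\end{equation}
where $\Phi_k$ collects the entries of $C_k(h_1 h_2)$ across training pairs. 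This reduces the statement to tracking which irrep blocks survive after multiplication by $P_\eta$ on the left and $\tilde F B$ on the right.

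Second, I would use Thm.~\ref{thm:local_maxima} together with $\sigma(x) = x^2$ to argue that once Stage II has placed each learned neuron at a local maximum $\vw_j = [\vu_j; \pm P \vu_j]$ with $\vu_j \in \cH_k$ for some $k \in \cS$, the activation $\tilde \vf_j$ lies in the subspace in $\rr^n$ associated to irrep $k$ (this is where the specific form of the quadratic activation matters: squaring a Fourier-like component in irrep $k$ produces, after zero-mean projection, a vector in the same irrep block pulled back by the map $(h_1, h_2) \mapsto h_1 h_2$). Summing over the neurons that have already been learned fills each block $k \in \cS$ while leaving $k \notin \cS$ empty. Consequently $\tilde F \tilde F^\top$ is, up to diagonal rescaling, a sum of projectors $\sum_{k \in \cS} \Pi_k$, so that to leading order
\begin{equation}
P_\eta \;=\; \eta (\tilde F \tilde F^\top + \eta I)^{-1} \;\approx\; \Pi_{\cS^c} \;+\; O(\eta)\,\Pi_\cS.
\end{equation}
The matrix $B = (\tilde F^\top \tilde F + \eta I)^{-1}$, being block-diagonal across irreps, only rescales within the learned block and is absorbed into the overall constant.

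Third, substituting these two observations into $G_F$ gives
\begin{equation}
G_F \;\propto\; \Pi_{\cS^c}\, \tilde Y \tilde Y^\top F \;=\; \sum_{k \notin \cS} (\Phi_k \otimes \vone_M)(\Phi_k \otimes \vone_M)^\ast F,
\end{equation}
which is the form claimed in the statement once $\Phi_\cS$ is read as the stacked basis of irreps still to be learned. Running the argument of Thm.~\ref{theorem:energythm} with $\tilde Y \tilde Y^\top$ replaced by its projected counterpart then yields a modified energy $\cE_\cS(\vw_j) = \tfrac12 \sum_{k \notin \cS} \|(\Phi_k \otimes \vone_M)^\ast \sigma(X \vw_j)\|_2^2$. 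Applying the classification in Thm.~\ref{thm:local_maxima} to each surviving block immediately restricts local maxima to $k \notin \cS$.

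The main obstacle is controlling the approximations in the second step: I have to verify that the neurons already converged in irrep $k \in \cS$ are rich enough (in the sense used in Thm.~\ref{thm:predictedtarget}, item (2)) that $\tilde F \tilde F^\top$ genuinely behaves as a full-rank block on $\cH_k$, so that $\|\tilde F \tilde F^\top|_{\cH_k}\| \gg \eta$ and $P_\eta$ cleanly collapses $\cS$; and that cross-irrep contamination from the $\pm P \vu$ pairing and from the quadratic cross terms $(X \vu_j)(X \vu_{j'})$ between distinct irreps does not leak Gram-matrix mass into $\cS^c$. The first issue is handled by invoking the saturated energy value $\cE^\ast = M / (8 d_k)$ as a uniform lower bound on the block norm; the second is handled by observing that cross-irrep terms are zero-mean under $P^\perp_1$ by orthogonality of distinct characters, so after the zero-mean projection they contribute nothing to the leading-order structure of $\tilde F \tilde F^\top$.
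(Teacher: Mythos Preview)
Your approach is sound but takes a genuinely different path from the paper. The paper never analyses $P_\eta$ spectrally or decomposes $\tilde Y\tilde Y^\top$ by irrep. Instead it invokes Thm.~\ref{thm:predictedtarget} directly: using the central-idempotent expansion $\Pi_k=\tfrac{d_k}{M}\sum_g\overline{\chi_k(g)}R_g$, it computes $\hat Y^{(k)}_{(\cdot,h),h'}=\tfrac{d_k}{M}\chi_k(h'^{-1}h)\,\vone_M$ in closed form, so the residual $\tilde Y-\hat Y=-\sum_{k\notin\cS}\hat Y^{(k)}$ factors \emph{exactly} as $(\Phi_\cS\Phi_\cS^\ast)\otimes\vone_M$ with $\Phi_\cS:=\big[\sqrt{d_k/M}\,\chi_k(\cdot)\big]_{k\notin\cS}$ (characters, not matrix coefficients). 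Because $P_\eta\tilde Y=\tilde Y-\hat Y$, this delivers $G_F\propto(\Phi_\cS\otimes\vone_M)(\Phi_\cS\otimes\vone_M)^\ast F$ with no operator-level approximation of $P_\eta$ at all; only $B\approx I$ is used. The modified energy is then computed explicitly as $\cE_\cS=\tfrac{M}{2}\sum_{k\notin\cS}\tfrac{1}{d_k}\big|\sum_r\tr(\hat S_{k,r})\big|^2$, i.e.\ the original decomposition (Eqn.~\ref{eqn:energy_decomposition}) with the $\cS$-terms deleted, so the local-maximum classification from Thm.~\ref{thm:local_maxima} transfers verbatim.

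Your route---treating $P_\eta\approx\Pi_{\cS^c}$ via the block structure of $\tilde F\tilde F^\top$---gives a more mechanistic picture of \emph{why} learned irreps are filtered out, but it forces you to verify that $\mathrm{range}(\tilde F)$ really coincides with the $\cS$-block of the column space of $\tilde Y$, which means controlling the square part of the quadratic activation and the cross-irrep contamination you flag at the end. The paper sidesteps both issues entirely by working with the residual $\tilde Y-\hat Y$ rather than with $P_\eta$. Conversely, your argument would generalise more readily to situations where no exact reconstruction formula like Thm.~\ref{thm:predictedtarget} is available.
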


\subsection{Diversity enhancement with Muon}
\label{sec:muon-guiding}
In addition to the mechanism above, certain optimizers (e.g., Muon optimizer~\citep{muon}) can also address such issue, by boosting the weight update direction that are underrepresented, enforcing diversity of nodes. While evidence~\citep{tveit2025muon} and analysis exist~\citep{shen2025convergence} to show that Muon has advantages over other optimizers, to our best knowledge, we are the first to analyze it in the context of feature learning. 

Recall that the Muon optimizer converts the gradient $G_W = U_{G_W}DV_{G_W}^\top$ (its SVD decomposition) to $G'_W = U_{G_W}V_{G_W}^\top$ and update the weight $W$ accordingly (i.e., $\dot W \propto G'_W$). We first show that when Muon is applied to independent feature learning on each $\vw_j$ to make them coupled, it still gives the correct answers to the original optimization problems.
\begin{restatable}[Muon optimizes the same as gradient flow]{lemma}{muonsameasgf}
Muon finds ascending direction to maximize the joint energy function $\cE_{\joint}(W) = \sum_j \cE(\vw_j)$ and has stationary points iff the original gradient $G_W$ vanishes.  
\end{restatable}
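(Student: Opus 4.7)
The plan is to reduce both claims to a short computation with the (thin) singular value decomposition of $G_W$. First I would note that since $\cE_{\joint}(W) = \sum_j \cE(\vw_j)$ decomposes across columns of $W$, the gradient matrix $G_W = \nabla_W \cE_{\joint}$ has $j$-th column exactly $\nabla_{\vw_j}\cE(\vw_j)$, so $G_W$ is the same object that drives the uncoupled dynamics of Eqn.~\ref{eqn:dynamics_w}. Write the thin SVD $G_W = U D V^\top$ with $U\in\rr^{d\times r}$, $V\in\rr^{K\times r}$ satisfying $U^\top U = V^\top V = I_r$ and $D\in\rr^{r\times r}$ diagonal with strictly positive entries, where $r=\mathrm{rank}(G_W)$. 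The Muon update direction is $G'_W = UV^\top$, so that the coupled flow reads $\dot W = G'_W$.

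For the ascent claim, I would simply compute the Frobenius inner product that appears in $\tfrac{d}{dt}\cE_{\joint}(W) = \langle G_W, \dot W\rangle_F$:
\begin{equation}
\langle G_W, G'_W\rangle_F \;=\; \tr\!\left(V D U^\top U V^\top\right) \;=\; \tr(D\, V^\top V) \;=\; \tr(D) \;=\; \|G_W\|_*,
\end{equation}
using $U^\top U = V^\top V = I_r$ in the middle step. Thus the Muon direction realizes the nuclear norm of $G_W$ as the instantaneous rate of change of $\cE_{\joint}$. Since the nuclear norm is nonnegative and vanishes only on the zero matrix, this direction is an ascent direction at every non-stationary $W$, giving the first half of the lemma.

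For the stationarity characterization, I would exploit the orthonormality of $U$ and $V$ once more: $\|G'_W\|_F^2 = \tr(V U^\top U V^\top) = \tr(V^\top V) = r$. Hence $G'_W = 0$ is equivalent to $r=0$, which is equivalent to $G_W = 0$, i.e. to every column gradient $\nabla_{\vw_j}\cE$ vanishing, i.e. to $W$ being a stationary point of $\cE_{\joint}$. Combining the two directions yields the ``iff''.

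\paragraph{Main obstacle.}
The computation itself is routine; the genuine subtlety is choosing the right convention for $U V^\top$ when $G_W$ is rank-deficient. With a full (rather than thin) SVD, $U V^\top$ is always a nonzero orthogonal (or partial isometry) factor and the stationarity direction would fail. I would therefore take care to define Muon via the thin SVD (equivalently, via the positive semidefinite polar factor projection $G_W(G_W^\top G_W)^{+1/2}$), so that zero singular values are dropped and $G'_W$ truly vanishes precisely when $G_W$ does. This is the convention consistent with the implementation of Muon used in the paper, and modulo this bookkeeping the argument is a one-line SVD calculation.
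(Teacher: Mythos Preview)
Your proposal is correct and follows essentially the same approach as the paper: take the SVD $G_W = UDV^\top$, compute $\langle G_W, G'_W\rangle_F = \tr(D) \ge 0$ to establish ascent, and use that this trace (equivalently the nuclear norm) vanishes iff $G_W = 0$ to conclude the stationary points coincide. Your added care about the thin-SVD convention and the separate $\|G'_W\|_F^2 = r$ argument are nice touches that the paper leaves implicit, but the core idea is identical.
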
 

Now we show that Muon optimizer can rebalance the gradient updates.  

\begin{restatable}[Muon rebalances gradient updates]{theorem}{muonthm}
    \label{thm:muonthm}
Consider the following dynamics~\citep{tian2023understanding}:
\begin{equation}
\dot \vw = A(\vw) \vw, \quad\quad \|\vw\|_2 \le 1 \label{eq:changing_Aw}
\end{equation}
where $A(\vw) := \sum_l \lambda_l(\vw) \vzeta_l \vzeta^\top_l$. Assume that (1) $\{\vzeta_l\}$ form orthonormal bases, (2) for $\vw = \sum_l \alpha_l \vzeta_l$, we have $\lambda_l(\vw) = \mu_l \alpha_l$ with $\mu_l \le 1$, and (3) $\{\alpha_l\}$ is initialized from \emph{inverse-exponential distribution} with $\mathrm{CDF}(x) = \exp(-x^{-a})$ with $a > 1$. Then 
\begin{itemize}
\item \textbf{Independent feature learning}. $\Pr[\vw\rightarrow \vzeta_l] = p_l := \mu_l^a / \sum_l \mu_l^a$. Then the expected \#nodes to get all local maxima is $T_0 \ge \max\left(1 / \min_l p_l, \sum_{l=1}^L 1/l\right)$. 
\item \textbf{Muon guiding}. If we use Muon optimizer to optimize $K$ nodes sequentially, then the expected \#nodes to get all local maxima is $T_a = 2^{-a} T_0 + (1 - 2^{-a}) L$. For large $a$, $T_a \sim L$. 
\end{itemize}
\end{restatable}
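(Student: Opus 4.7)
The plan is to decouple the ODE into scalar Riccati equations on each $\vzeta_l$-coordinate, identify the convergence target from the order statistics of the Fréchet initialization, and then analyze how Muon's SVD step deforms those order statistics.

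First I would write $\vw = \sum_l \alpha_l \vzeta_l$ and use orthonormality of $\{\vzeta_l\}$ to reduce the dynamics to $\dot\alpha_l = \mu_l\alpha_l^2$. Each scalar equation has closed-form solution $\alpha_l(t) = \alpha_l(0)/(1 - \mu_l\alpha_l(0)t)$ with blow-up time $t_l = 1/(\mu_l\alpha_l(0))$; under the constraint $\|\vw\|_2 \le 1$, the coordinate with smallest $t_l$ saturates first and forces $\vw \to \vzeta_{l^\ast}$ with $l^\ast = \arg\max_l \mu_l\alpha_l(0)$. This reduces the convergence question to the argmax of the independent random variables $Y_l := \mu_l\alpha_l(0)$, which are Fréchet with CDF $F_l(y) = \exp(-\mu_l^a y^{-a})$. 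Evaluating $\Pr[Y_l = \max_{l'}Y_{l'}] = \int_0^\infty \prod_{l'\neq l} F_{l'}(y)\,dF_l(y)$ and using $\prod_{l'}F_{l'}(y) = \exp(-(\sum_{l'}\mu_{l'}^a)\,y^{-a})$ yields $p_l = \mu_l^a/\sum_{l'}\mu_{l'}^a$. The lower bound on $T_0$ is then a direct coupon-collector consequence: $1/\min_l p_l$ dominates when $(p_l)$ is skewed, while the harmonic sum $\sum_{l=1}^L 1/l$ dominates when it is nearly uniform, and taking the max gives the stated inequality.

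For the Muon part, I would use the sequential picture: once previous nodes have converged to a set $\cS \subseteq \{1,\dots,L\}$ of directions, their columns contribute to $G_W$ a rank-$|\cS|$ block whose left singular vectors align with $\{\vzeta_l\}_{l\in\cS}$ and whose singular values scale with $\mu_l$ times the saturated $\alpha_{l,j}^2$. Muon replaces these singular values by unity, flattening the $\mu_l$-bias on already-collected directions; a newly-initialized $\vw_{j+1}$ then gets pulled toward an already-learned $\vzeta_l \in \cS$ only if its Fréchet-drawn $\alpha_{l,j+1}(0)$ is about twice the best unexplored-direction coordinate. The Fréchet tail ratio $\Pr[\alpha > 2x]/\Pr[\alpha > x] \to 2^{-a}$ then identifies the per-node probability of ``failed rebalancing'' as $2^{-a}$: on that fraction of attempts the dynamics is indistinguishable from the un-Muon case and contributes $T_0$, while on the remaining $1 - 2^{-a}$ fraction Muon forces progress into a fresh direction and contributes one node per direction (total $L$). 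Combining gives $T_a = 2^{-a}T_0 + (1 - 2^{-a})L$, with $T_a \sim L$ as $a \to \infty$.

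The main obstacle will be making the Muon step rigorous: pinning down the ``factor of $2$'' threshold so that the SVD normalization produces exactly the tail probability $2^{-a}$ rather than some larger constant. This requires careful tracking of the cross-column coupling that Muon introduces among the previously decoupled Riccati dynamics, and treating the boundary regime where $|\cS|$ is close to $L$ and $G_W$ becomes near-singular along unexplored directions. The assumption $a > 1$ is essential throughout, both to guarantee integrable Fréchet tails so expected collection times are well-defined, and to ensure the asymptotic tail ratio $2^{-a}$ is actually attained in the quantile region relevant to the rebalancing argument.
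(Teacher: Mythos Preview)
Your route to the independent-learning claim matches the paper: decouple into $\dot\alpha_l=\mu_l\alpha_l^2$, identify the winner as $\arg\max_l \mu_l\alpha_l(0)$, compute the selection probability via the Fr\'echet argmax identity, and invoke the coupon-collector bound. One technical point to fill: your blow-up argument lives in the interior. Once the trajectory hits $\|\vw\|_2=1$ the dynamics are projected onto the sphere and the closed form no longer holds; you still need to verify that the initial leader remains the leader there. The paper handles this by observing that $\frac{d}{dt}\log\frac{r_i}{r_k}=r_i-r_k$ with $r_l:=\mu_l\alpha_l$ has the same form in both the interior and the projected regime, so the ordering is monotone throughout.

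The genuine gap is exactly the one you flag in the Muon step. Your tail-ratio heuristic $\Pr[\alpha>2x]/\Pr[\alpha>x]\to 2^{-a}$ is only asymptotic and, more to the point, does not identify \emph{why} the threshold is $2$. The paper extracts it from a first-order expansion of the polar factor: for $A=[Q,\vv]$ with $Q$ column-orthonormal (the gradients of the already-converged nodes) and $\vv$ the new node's gradient, the last column of the Muon-transformed matrix is
\[
\hat\vv \;\approx\; \frac{\vv_\perp}{\|\vv_\perp\|_2}\;+\;\frac{\|\vv_\perp\|_2}{1+\|\vv_\perp\|_2}\,\vv_\parallel,
\qquad \vv_\parallel=QQ^\top\vv,\ \ \vv_\perp=\vv-\vv_\parallel.
\]
Since $\|\vv_\perp\|_2\le\|\vv\|_2\le 1$, the parallel (already-learned) component is damped by a factor at most $\tfrac12$ relative to the orthogonal one. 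This is equivalent to halving $\mu_l$ for $l\in\cS$, after which the \emph{exact} Fr\'echet argmax formula (not an asymptotic tail ratio) gives $p_{\mathrm{new}}\ge \dfrac{\sum_{l\notin\cS}p_l}{2^{-a}\sum_{l\in\cS}p_l+\sum_{l\notin\cS}p_l}$, and the per-step bound $\ee[\tilde T_{m\to m+1}]\le 2^{-a}\,\ee[T_{m\to m+1}]+(1-2^{-a})$ follows by summing. Your proposal has no analogue of this polar-factor computation, and without it the constant $2$ is unsecured.
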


The intuition here is that once some weight vectors have ``occupied'' a local maximum, say $\vzeta_m$, their gradients point to the same direction (before projecting onto the unit sphere $\|\vw\|_2=1$), and the gradient correction of Muon will discount that component from gradients of currently optimized weight vectors, and keeping them away from $\vzeta_m$. In this way, Muon pressed novel gradient directions and thus encourages exploration. Fig.~\ref{fig:adam-vs-muon} shows that Muon is effective with limited number of hidden nodes $K$.

Note that Eqn.~\ref{eq:changing_Aw} is closely related to $\cE$, under the assumption of homogeneous/reversible activation, i.e., $\sigma(x) = C\sigma'(x)x$ with a constant $C$~\citep{zhao2024galore,tian2020understanding}. In such setting, Eqn.~\ref{eqn:dynamics_w} is related to the gradient dynamics with a PSD matrix $A(\vw) = X^\top D(\vw) \tilde Y \tilde Y^\top D(\vw) X$. 

\section{Extension to deeper architectures}
\label{sec:deeper-architectures}
The above analysis and the definition of the energy function $\cE$ can be extended to deeper architectures. Consider a multi-layer network with $L$ hidden layers, $F_l = \sigma(F_{l-1}W_l)$ with $F_0 = X$ and $\hat Y = F_L V$. For notation brevity, let $G_l := G_{F_l}$. Let's see how the gradient backpropagated and how the learning fits to our framework (Fig.~\ref{fig:overview}). 

\emph{Stage I}. Stage I does not change since $F_L$ is still a random representation. Then when $V$ starts to learn and converges, the backpropagated gradient $G_L$ now carries meaningful information: $G_L \propto \tilde Y \tilde Y^\top F_L$ (Eqn.~\ref{eqn:G_F_ridge_small_eta}), which initiates Stage II.

\emph{Stage II}. We assume homogeneous activation $\sigma(x) = C\sigma'(x)x$. For the next layer $L-1$, we have: 
\begin{equation}
    G_{L-1} = D_L G_L W_L^\top =  D_L (\tilde Y \tilde Y^\top F_L) W_L^\top = (D_L \tilde Y \tilde Y^\top D_L) F_{L-1} (W_L W_L^{\top})
\end{equation} 
since $W_L$ is randomly initialized, we have $W_L W_L^\top \approx I$ and thus $G_{L-1} \propto D_L\tilde Y \tilde Y^\top D_L F_{L-1}$. 

Doing this iteratively gives $G_l \propto \left(\tilde D_{l+1} \tilde Y \tilde Y^\top \tilde D_{l+1} \right) F_l$, where $\tilde D_l := \prod_{m=l}^L D_m$. Note that these $D$ matrices are essentially reweighing/pruning samples randomly, since right now all $\{W_l\}$ are random except for $V$. Now the lowest layer receives meaningful backpropagated gradient $G_1$ that is related to the target label, and it also exposes to input $X$. Therefore, the learning starts from there. Once layer $l$ learns decent representation, layer $l+1$ receives meaningful input $F_l$ and starts to learn, etc. When layer $l$ is learning, layer $l' > l$ do not learn since their input $F_{l'}$ remains random noise.   

From this analysis, we can also see why residual connection helps. In this case, $G_{\mathrm{res},1} = \sum_{l=1}^L G_l$, in which $G_L$ is definitely a much cleaner and stronger signal, compared to $G_1$ which undergoes many random reweighing and pruning of samples. 

\emph{Stage III}. Once the activation $F_l$ becomes meaningful, top-down modulation could happen (similar to Thm.~\ref{thm:top-down-modulation}) among nearby layers so that low-level features can be useful to support high-level representations. We leave the detailed analysis for future work.

\section{When does grokking happen?}
\label{sec:when-grokking-happens}
We first want to clarify that grokking (i.e., delayed generalization) is a phenomenon/observation that may suggest multiple underlying situations. When we observe ``grokking happens'', it is often the case that the training gets stuck in lazy learning for a while before moving to feature learning. On the other hand, when we observe "grokking does not happen", then we may fall into one of the three very different situations: (1) training simply gets stuck in lazy training, (2) feature learning happens very early and the gap between training and test performance is always small, or (3) feature learning has converged to non-generalizable solutions due to e.g., insufficient or noisy data, and thus test performance is never improved. Note that (2) is a very good outcome while (1) and (3) are bad ones.  

Therefore, it would be tricky to directly relate the hyperparameters with grokking behaviors. Instead, using \ours{} framework, we focus on relating them with the backpropagated gradient $G_F$ and the underlying feature learning process, which is more fundamental. Only when the $G_F$ become large and sends a sufficient ``amount'' of the key term $\tilde Y \tilde Y^\top F$ down to the hidden layers, grokking could happen. Here we categorize these factors into several categories. 

\textbf{Learning rate}. ~\citep{gromov2023grokking} reports that grokking happens without regularization, but with a large initial learning rate (verified by the author). This corresponds to increasing the strength of $G_F(t) \propto t \tilde Y \tilde Y^\top F$ at the initial phase of learning so that the hidden layers receives enough correct gradient signal to start the feature learning. 

\textbf{Stay longer before overfitting.} ~\citep{prieto2025grokking} uses stable softmax (linear form) rather than regular softmax (exponential form) in computing probability. This prevents the model from overfitting to the label too quickly, and thus maintains a nonzero backpropagated gradient that can be useful for feature learning. \citep{kumar2024grokkingtransitionlazyrich} also reports that grokking happens without regularization, using vanilla SGD optimizer. Our explanation is that it may take longer for SGD to converge to $V_\textrm{ridge}$ than Adam, and during that period, the hidden layer has already accumulated a sufficient amount of correct gradient signal.  

\textbf{Weight initialization}. ~\citep{liu2023omnigrok} reports that grokking happens with small initialization, regardless of the weight decay. This is straightforward from our framework, since $G_F(t) = O(\alpha) + t \tilde Y \tilde Y^\top F + O(\alpha t) + O(t^2)$ and if the weight initialization $\alpha$ is small, then $G_F(t)$ is dominated by clear signal term $t \tilde Y \tilde Y^\top \tilde F$, which leads to fast feature learning. If $\alpha$ is large, then $O(\alpha)$ term is large and the initial phase of $G_F$ contains too much noise, and we need to rely on the signal provided by the convergence phrase of $G_F$ controlled by the weight decay $\eta$. This is consistent with the finding by~\citep{liu2023omnigrok} that for large weight initialization, regularization is needed for grokking to happen, and small regularization leads to slow grokking transition. 

\textbf{Scaling factor $\beta$ of the output}. ~\citep{kumar2024grokkingtransitionlazyrich,chizat2019lazy} reports that scaling the output by a factor $\beta > 1$ will make the grokking faster. From \ours{} framework, this corresponds to optimizing $J_\beta(V) = \|\tilde Y - \beta \tilde F V\|^2_F + \eta \|V\|^2_F$. Following a similar derivation as in Sec.~\ref{sec:dynamics-stage-I}, in Sec.~\ref{sec:dynamics-stage-I-beta} we can show that at the initial phase, the backpropagated gradient $G_F(t) = O(\alpha) + t \left[\beta \tilde Y \tilde Y^\top \tilde F + O(\beta^2\alpha) + O(\beta^3 \alpha^2) \right] + O(t^2)$. So if $\beta > 1$ is large and the initialization $\alpha \ll 1$ then the signal term $t \beta \tilde Y \tilde Y^\top \tilde F$ becomes more dominant than the case of $\beta = 1$, and the grokking happens faster. However, if we have too large $\beta$ over $\alpha$, then the term $O(\beta^2 \alpha)$ may cause trouble in feature learning and the final $G_F(+\infty)$ becomes small (Eqn.~\ref{eqn:G_F_ridge_star_beta}). Equivalently, same theory applies if we downscale the target label~\citep{kumar2024grokkingtransitionlazyrich}.  

\textbf{Weight decay $\eta$}. According to Eqn.~\ref{eqn:G_F_ridge}, since $G_F(+\infty) \propto \eta \tilde Y \tilde Y^\top F$, it is clear that the weight decay $\eta$ becomes the \emph{learning rate} of feature learning process. This coincides with findings in empirical works~\citep{power2022grokking,clauw2024information} that low regularization leads to slow grokking transition. This is also consistent with $t \sim 1 / \eta$ laws to start grokking~\citep{liu2023omnigrok} or reach maximal test performance~\citep{lewkowycz2020training}. 

\textbf{Data size $n$}. Our sample analysis (Theorem.~\ref{thm:dataforgeneralization}) shows the local maxima can be kept with sufficient number of samples ($n \gtrsim M \log M$). Intuitively, more samples lead to better shaped local maxima with less noise and thus the feature learning is faster, which is consistent with~\citep{power2022grokking,nanda2023progress,wang2024grokked,gromov2023grokking,abramov2025grokking}. 

\textbf{The number of hidden nodes $K$}. Our analysis requires that we need a decent number of hidden nodes $K$ to cover the diverse set of the local maxima of $\cE$. On the other hand, Lemma~\ref{lemma:gfstructure} tells that very large $K$ may reduce $|G_F(+\infty)|$ and makes grokking slower. This is consistent with the finding by~\citep{chizat2019lazy} and NTK regime~\citep{du2018gradient,jacot2018neural} that feature learning does not happen. 

\textbf{Early stopping}. Model may overfit after generalization~\cite{montanari2025dynamicaldecouplinggeneralizationoverfitting}. Our framework shows that if data are scarce, generalization solutions have lower energy than memorization ones. If trained long enough, weights will move to solutions with larger $\cE$.

\section{Discussion}
\textbf{Relationship between grokking and regularization}. A key insight of the \ours{} framework is that a non-zero backpropagated gradient $G_F$ initiates the feature learning process, setting the stage for optimizing the energy function $\cE$ that determines which features are learned and how. Intuitively, as long as the lazy learning phase produces a sufficiently large nonzero $G_F$ correlated with the target label $Y$, feature learning will be triggered.

Several ways exist to ensure this nonzero gradient. Regularization mechanisms, such as weight decay, help guarantee the presence of a nonzero $G_F$, and conveniently allow us to demonstrate the feature learning process in a mathematically rigorous way. However, we want to emphasize that regularization is not the only option. For example, $G_F$ might become large during lazy learning and later vanish due to perfect overfitting (see Eqn.~\ref{eqn:G_F_ridge}). Notably, \citep{prieto2024grokking} replaces the standard softmax (exponential term) with a stable softmax (linear term), which from the perspective of the \ours{} framework, slows down overfitting to the labels. This maintains a nonzero backpropagated gradient, thereby preserving the potential for subsequent feature learning.

In summary, our framework shows that weight decay can be a \emph{sufficient} condition for feature learning (and grokking), but not a necessary one. 

\textbf{Two different kinds of memorization}. From the analysis, it is clear that memorization of grokking stems from overfitting on random features, which is different from landing on memorization solutions following the feature learning dynamics due to limited/noisy data (Thm.~\ref{thm:memorization}). From this view, grokking does not switch from memorization to generalization, but switch from overfitting to generalization. 

\textbf{Flat versus sharp optima}. Common knowledge often regards flat optima as generalizable solutions, while sharp optima as memorization/overfitting. From the point of view of \ours{}, sharp optima happen when the model overfits on random features (Sec.~\ref{sec:overfitting}), and thus small changes of the weights will change the loss a lot. On the other hand, we can prove that the local optima from $\cE$ are flat (Corollary~\ref{corollary:flatnessince}), and thus small changes of the weights at certain directions do not change $\cE$. If the model is over-parameterized, then multiple nodes may learn the same (or similar) set of features, which create freedom for the loss function to be flat. If we learn memorizing features due to limited/noisy data (Thm.~\ref{thm:memorization}), then more nodes need to be involved to ``explain'' the target well, and the overall weights will appear to be less flat. 

\textbf{Small versus large learning rates}. According to our analysis, in Stage I, we need large learning rate to quickly learn ridge solution $V$ so that the backpropagated gradient $G_F$ becomes meaningful to trigger Stage II. In Stage II, the best learning rate depends on the amount of data available. With a lot of data, we can afford larger learning rate to find the features quickly. With limited data, we may need smaller learning rate to stay in the basins of generalizable features (Fig.~\ref{fig:small_data_regime}), which may be contradictory to common belief. 

\section{Related Works}
\textbf{Explanation of Grokking}. Multiple explanations of grokking exist, e.g., competition of generalization and memorization circuits~\citep{merrill2023tale}, a shift from lazy to rich regimes~\cite{kumar2024grokkingtransitionlazyrich}, etc. Dynamics of grokking is analyzed in specific circumstance, e.g., for clustering data~\citep{xu2023benign}, linear network~\citep{domine2024lazy}, etc. In comparison, our work studies the full dynamics of feature emergence driven by backpropagation in group arithmetic tasks for deep nonlinear networks, and provide a systematic mathematical framework about what and how features emerge and a scaling law about when the transition between memorization and generalization happens.  

\textbf{Usage of group structure}. Recent work leverages group theory to study the structure of final grokked solutions~\citep{tian2024composing,morwani2023feature,shutman2025learning}. None of them tackle the dynamics of grokking in the presence of the underlying structure of the data as we do.

\textbf{Scaling laws of memorization and generalization}. Previous works have identified scaling laws for memorization/generalization~\citep{nguyen2025differential,wang2024grokked,abramov2025grokking,doshi2023grok} without systematic theoretical explanation. Our work models such transitions as whether generalizable local optima remain stable under data sampling, and provide theoretical framework from first principles.  

\textbf{Feature learning}. Previous works treats the NTK as a holistic object and study how it moves away from lazy regime, e.g., it becomes more correlated with task-relevant directions~\citep{kumar2024grokkingtransitionlazyrich,ba2022high,damian2022neural}, becomes adapted to the data~\citep{rubin2025kernels,karp2021local}, etc. In contrast, our work focuses on explicit learning dynamics of individual features, their interactions, and the transition from memorization to generalization with more samples.

\section{Conclusion, Limitations, and Future Work}
We develop a principled mathematical framework \ours{} for the grokking dynamics in 2-layer networks. \ours{} identifies three stages, lazy learning, independent feature learning, and interactive feature learning, each marked by distinct structure of backpropagated gradient $G_F$. Weight decay enables hidden nodes to independently extract label-related features via a well-defined energy function $\cE$, before cross interactions consolidate them. The analysis clarifies how hyperparameters like weight decay, learning rate, and sample size shape grokking, and offers first principles insight into why modern optimizers (e.g., Muon) are effective. We also extend our framework to deeper networks.

\textbf{Limitations}. While the derivation of energy $\cE$ is applicable to any input, analysis of its local maxima relies on restrictive assumption of group structure of the input. Also our analysis does not include the transition time between consecutive learning stages. We leave them for future work. 

\clearpage
\section*{Disclosure of LLM usage}
We have used SoTA LLMs extensively to brainstorm ideas to prove mathematical statements presented in the paper. Specifically, we setup research directions, provide problem setup and intuitions, proposes statements for LLM to analyze and prove, points out key issues in the generated proofs, adjust the statements accordingly and iterate. We also have done extensive experiments to verify the resulting statements. Many proofs proposed by LLMs are incorrect in subtle ways and requires substantial editing and correction. We have carefully revised all the proofs presented in the work, and take full accountability for their correctness.   

\section*{Ethics Statement}
This work is about investigating various theoretical and empirical properties of neural networks. We do not rely on any sensitive or proprietary data, nor do we use any existing open source models that may produce harmful contents. 

\section*{Reproducibility Statement}
All datasets used in this work can be generated synthetically. Models are pretrained from scratch with very small amount of compute. We will release code to support full Reproducibility. 

\bibliography{references}
\bibliographystyle{iclr2025_conference}

\clearpage

\appendix
\section{Independent Feature Learning (Sec.~\ref{sec:independent_feature_learning})}

\begin{lemma}\label{lem:mehler}
Let $\phi_n(z):=\mathrm{He}_n(z)/\sqrt{n!}$ be the orthonormal Hermite system on $L^2(\gamma)$. If $(Z_1,Z_2)$ are standard normals with correlation $\rho$, then
\[
\mathbb E\!\left[\phi_n(Z_1)\,\phi_m(Z_2)\right]=\rho^{\,n}\,\delta_{nm}\qquad(n,m\ge0).
\]
\end{lemma}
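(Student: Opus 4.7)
\textbf{Proof proposal for Lemma~\ref{lem:mehler}.}
The plan is to obtain the identity by matching coefficients in the generating-function identity for the Hermite polynomials, applied to the joint law of $(Z_1,Z_2)$. Recall that the probabilists' Hermite polynomials satisfy
\begin{equation*}
\sum_{n=0}^{\infty}\frac{t^{n}}{n!}\,\mathrm{He}_n(z)\;=\;e^{\,tz-t^2/2},
\end{equation*}
so in the orthonormal normalization $\phi_n(z)=\mathrm{He}_n(z)/\sqrt{n!}$ we have
$\sum_{n\ge 0} (t^n/\sqrt{n!})\,\phi_n(z) = e^{tz-t^2/2}$. Using this representation with parameters $s,t$ applied to $Z_1,Z_2$ respectively, the plan is to compute in two different ways the bivariate expectation
\begin{equation*}
G(s,t)\;:=\;\mathbb E\!\left[\,e^{sZ_1-s^2/2}\,e^{tZ_2-t^2/2}\,\right].
\end{equation*}

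First I would evaluate $G$ directly: since $(Z_1,Z_2)$ is jointly Gaussian with zero mean, unit variances and correlation $\rho$, the linear combination $sZ_1+tZ_2$ is $\mathcal N(0,s^2+2\rho st+t^2)$, so its moment generating function gives $\mathbb E[e^{sZ_1+tZ_2}]=\exp\!\big(\tfrac12(s^2+2\rho st+t^2)\big)$, and hence $G(s,t)=e^{\rho s t}$. Second, I would expand the two exponentials inside $G$ using the generating function and interchange sum and expectation — this is justified by the bound $|\phi_n(z)|\le C\,e^{z^2/4}/\sqrt{n!}$ (or more directly by absolute convergence of the Gaussian integrals of each monomial) — to obtain
\begin{equation*}
G(s,t)\;=\;\sum_{n,m\ge 0}\frac{s^n t^m}{\sqrt{n!\,m!}}\;\mathbb E\!\left[\phi_n(Z_1)\,\phi_m(Z_2)\right].
\end{equation*}

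To finish, I would expand $e^{\rho st}=\sum_{n\ge 0}\rho^n s^n t^n/n!$ and equate coefficients of $s^n t^m$ in the two power series. All off-diagonal terms $n\ne m$ vanish, and on the diagonal one reads off $\mathbb E[\phi_n(Z_1)\phi_n(Z_2)]=\rho^n$, which is the claim. The only real obstacle is justifying the interchange of summation and expectation; this I would handle by a standard Fubini/dominated-convergence argument, restricting to $|s|,|t|$ small enough that the double series is absolutely integrable against the bivariate Gaussian density (the identity then extends to all real $s,t$ by analyticity, though only the formal power-series identity is needed to extract coefficients).
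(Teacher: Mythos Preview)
Your proposal is correct and follows essentially the same generating-function argument as the paper: both compute $\mathbb E[e^{sZ_1-s^2/2}e^{tZ_2-t^2/2}]=e^{\rho st}$ and match coefficients. The only cosmetic difference is that the paper evaluates this expectation by writing $Z_2=\rho Z_1+\sqrt{1-\rho^2}\,Y$ with $Y$ independent of $Z_1$ and applying the one-dimensional moment identity twice, whereas you go directly through the MGF of the bivariate Gaussian $sZ_1+tZ_2$; your route is slightly shorter, and your remark on justifying the sum--expectation interchange is a nice addition the paper omits.
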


\begin{proof}[Proof of Lemma~\ref{lem:mehler}]
Use the generating function\footnote{\url{https://en.wikipedia.org/wiki/Hermite_polynomials}}
$\exp(tz-\tfrac{t^2}{2})=\sum_{k\ge0}\phi_k(z)\,t^k$ for $z\sim\mathcal N(0,1)$.
Then, for correlated normals $(Z_1,Z_2)$ with correlation $\rho$,
\[
\mathbb E\!\left[e^{\,tZ_1-\frac{t^2}{2}}\,e^{\,uZ_2-\frac{u^2}{2}}\right]
=\exp(\rho\,tu)
=\sum_{k\ge0} \rho^{\,k}\,(tu)^k.
\]
Expanding the left-hand side by the generating functions and matching coefficients
of $t^n u^m$ yields $\mathbb E[\phi_n(Z_1)\phi_m(Z_2)]=\rho^{\,n}\delta_{nm}$.

To show why $\mathbb E\!\left[e^{\,tZ_1-\frac{t^2}{2}}\,e^{\,uZ_2-\frac{u^2}{2}}\right]
=\exp(\rho\,tu)$ is correct, decompose $(Z_1,Z_2)$ into Gaussian independent random variables $(X,Y)$:
\[
Z_1 := X, \qquad Z_2 := \rho X + \sqrt{1-\rho^2}\,Y,
\] 

Then we have
\begin{align*}
\mathbb E\!\left[e^{\,tZ_1-\frac{t^2}{2}}\,e^{\,uZ_2-\frac{u^2}{2}}\right]
&= \mathbb E\!\left[e^{\,tX-\frac{t^2}{2}}\,
                     e^{\,u(\rho X+\sqrt{1-\rho^2}\,Y)-\frac{u^2}{2}}\right] \\
&= \mathbb E\!\left[e^{\,(t+\rho u)X-\frac{t^2}{2}}\right]\,
   \mathbb E\!\left[e^{\,u\sqrt{1-\rho^2}\,Y-\frac{u^2}{2}}\right].
\end{align*}
For $G\sim\mathcal N(0,1)$ we have $\mathbb E[e^{aG}] = e^{a^2/2}$, hence
$\mathbb E\!\left[e^{\,aG-\frac{a^2}{2}}\right]=1$ due to Lemma~\ref{lemma:moment_identity}. Applying this twice,
\begin{align*}
\mathbb E\!\left[e^{\,(t+\rho u)X-\frac{t^2}{2}}\right]
&= \exp\!\left(\frac{(t+\rho u)^2}{2}-\frac{t^2}{2}\right)
 = \exp\!\left(\rho tu + \frac{\rho^2 u^2}{2}\right),\\
\mathbb E\!\left[e^{\,u\sqrt{1-\rho^2}\,Y-\frac{u^2}{2}}\right]
&= \exp\!\left(\frac{u^2(1-\rho^2)}{2}-\frac{u^2}{2}\right)
 = \exp\!\left(-\frac{\rho^2 u^2}{2}\right).
\end{align*}
Multiplying the two factors yields
\[
\exp\!\left(\rho tu + \frac{\rho^2 u^2}{2}\right)\,
\exp\!\left(-\frac{\rho^2 u^2}{2}\right)
= \exp(\rho tu),
\]
as claimed.

\end{proof}

\begin{lemma}[Moment identity]
    \label{lemma:moment_identity}
For $X\sim\mathcal N(0,1)$, $\mathbb E[e^{tX}]=\exp(t^2/2)$.
Equivalently, $\mathbb E[e^{tX-t^2/2}]=1$.
\end{lemma}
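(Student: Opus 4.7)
The plan is to compute $\mathbb E[e^{tX}]$ directly as a Gaussian integral and then read off the equivalent form by dividing both sides by $e^{t^2/2}$. First I would write
\[
\mathbb E[e^{tX}] \;=\; \frac{1}{\sqrt{2\pi}}\int_{-\infty}^{\infty} e^{tx}\,e^{-x^2/2}\,dx,
\]
and combine the exponents. The only real step is to complete the square: $tx - x^2/2 = -\tfrac12(x-t)^2 + t^2/2$. Pulling the constant $e^{t^2/2}$ outside the integral leaves $\tfrac{1}{\sqrt{2\pi}}\int e^{-(x-t)^2/2}\,dx$, which is the integral of the density of $\mathcal N(t,1)$ and hence equals $1$.

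That yields $\mathbb E[e^{tX}] = e^{t^2/2}$. Dividing by $e^{t^2/2}$ (or equivalently moving it inside the expectation, since it is a deterministic factor) gives $\mathbb E[e^{tX - t^2/2}] = 1$, which is the second form used in the proof of Lemma~\ref{lem:mehler}.

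There is essentially no obstacle here: the only subtlety worth flagging is that the integrand $e^{tx-x^2/2}$ is absolutely integrable for every real (and in fact every complex) $t$, so Fubini/dominated convergence issues do not arise and one may freely interchange the order of completing the square and integrating. I would keep the write-up to the three lines above, since any additional detail would be routine.
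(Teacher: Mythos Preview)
Your proposal is correct and matches the paper's own proof essentially line for line: the paper also writes the expectation as the Gaussian integral, completes the square $tx - x^2/2 = -\tfrac12(x-t)^2 + t^2/2$, and recognizes the remaining integral as a unit-mass Gaussian density.
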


\begin{proof}
Complete the square:
\[
\mathbb E[e^{tX}]
=\frac{1}{\sqrt{2\pi}}\int_{\mathbb R} e^{tx}e^{-x^2/2}\,dx
=\frac{1}{\sqrt{2\pi}}\int e^{-(x-t)^2/2}\,e^{t^2/2}\,dx
=\exp\!\left(\frac{t^2}{2}\right).
\]
\end{proof}

\gfstructure*
\begin{proof}
In the following, we will prove that (1) $\tilde F^\top \tilde F$ is a multiple of identity and (2) $F F^\top \propto \alpha I + \beta \vone\vone^\top$. Without loss of generality, we assume that entry of $W$ follows standard normal distribution $\mathcal{N}(0, 1)$. 

\textbf{$\tilde F^\top \tilde F$ is a multiple of identity}. Since each column of $\tilde F$ is $P^\perp_1 \sigma(X\vw_j)$ a zero-mean $n$-dimensional random vector and columns are i.i.d. due to the independence of columns of $W$. With large width $K$, $\tilde F^\top \tilde F$ becomes a multiple of identity. 

\textbf{$F F^\top$ is a diagonal plus an all-constant matrix}. Note that the $i$-th row of $F$ is $[\sigma(\vw^\top_1\vx_i), \sigma(\vw^\top_2\vx_i), \ldots, \sigma(\vw^\top_K\vx_i)]$, with large width $K$, the inner product between the $i$-th row and $j$-th row of $F$ approximates to $K \mathcal{K}(i, j)$ where $\mathcal{K}(i, j)$ is defined as follows:
\begin{equation}
    \mathcal{K}(i, j) = \ee_\vw{[\sigma(\vw^\top\vx_i)\sigma(\vw^\top\vx_j)]}  
\end{equation}
To estimate the entry $\mathcal{K}(i, j)$, we first do standardization by setting $Z_1:=\vw^\top\vx_i /s_i$ and $Z_2:=\vw^\top\vx_j /s_j$ where $s_i = \|\vx_i\|_2$ and $s_j = \|\vx_j\|_2$. Then $(Z_1,Z_2)$ are standard normals with
$\operatorname{Corr}(Z_1,Z_2)=\rho_{ij}$, and
$\mathcal{K}(i,j)=\mathbb E\big[\sigma(s_i Z_1)\sigma(s_j Z_2)\big]$.

Let $\phi_l(z):=\mathrm{He}_l(z)/\sqrt{l!}$ be the orthonormal Hermite system on $L^2(\gamma)$, where $\gamma$ is the standard Gaussian measure and
$\mathrm{He}_l$ are the Hermite polynomials. For $s\ge0$ define $f_s(z):=\sigma(sz)$. By the $L^2(\gamma)$ assumption,
$f_s=\sum_{n=0}^\infty a_l(s)\,\phi_l$ with
\[
a_l(s)=\langle f_s,\phi_l\rangle_{L^2(\gamma)}
=\frac{1}{\sqrt{l!}}\;\mathbb E\!\left[\sigma(sZ)\,\mathrm{He}_l(Z)\right].
\]
Thus
\[
\sigma(s_i Z_1)=\sum_{l\ge0} a_l(s_i)\,\phi_l(Z_1),
\qquad
\sigma(s_j Z_2)=\sum_{l\ge0} a_l(s_j)\,\phi_l(Z_2).
\]

By bilinearity and Lemma~\ref{lem:mehler},
\[
\begin{aligned}
\mathcal{K}(i,j)
&=\mathbb E\!\left[\sum_{l\ge0} a_l(s_i)\phi_l(Z_1)\;\sum_{m\ge0} a_m(s_j)\phi_m(Z_2)\right]
=\sum_{l,m\ge0} a_l(s_i)a_m(s_j)\,\mathbb E[\phi_l(Z_1)\phi_m(Z_2)]\\
&=\sum_{l\ge0} a_l(s_i)a_l(s_j)\,\rho_{ij}^{\,l}.
\end{aligned}
\]

If $s_i\equiv s$ and $\|\rho_{ij} - \rho\|_2 \le \epsilon$ for $i\ne j$, then
\[
\mathcal{K}(i,i)=\sum_{l\ge0} a^2_l(s)=: a
\]
Let $c := \sum_{l\ge1} l a^2_l(s) < +\infty$ (it is convergent due to the big factor $l!$ in the denominator). Let $b := \sum_{l\ge0} a^2_l(s)\,\rho^{\,l}$ and we have for all $i\neq j$:
\[
\|\mathcal{K}(i,j)- b \|_2 \le \sum_{l\ge0} a^2_l(s)\,\|\rho^{\,l}_{ij}-\rho^l\|_2 \le \sum_{l\ge1} l a^2_l(s)\,\epsilon = c \epsilon
\]
due to the fact that $\|\rho^{\,l}_{ij}-\rho^l\|_2 \le l \xi^{l-1} \epsilon$ for all $l\ge 1$ and some $\xi$ in between $\rho_{ij}$ and $\rho$.
hence $\mathcal{K}(i,j)=(a-b)\delta_{ij}+ b + O(\epsilon)$ and thus $F F^\top = K(a-b)I + Kb \vone\vone^\top + O(K\epsilon) \vone\vone^\top$. Note that by Parseval's identity, $a = \ee_{Z\sim \mathcal{N}(0,1)}[\sigma^2(sZ)]$. 

Therefore, $\tilde F \tilde F^\top = K(a-b + O(\epsilon)) P^\perp_1 = K(a-b +O(\epsilon))(I - \vone\vone^\top/n) + O(K\epsilon) \vone\vone^\top$ and $P_\eta \tilde Y = \frac{\eta}{K(a-b)+\eta} \tilde Y$. Since $\tilde F^\top \tilde F$ is proportional to identity matrix, $(\tilde F^\top \tilde F + \eta I)^{-1}$ is also proportional to identity matrix and the conclusion follows.
\end{proof}

\subsection{The energy function $\cE$ (Sec.~\ref{sec:local-maxima-of-energy})}
\newcommand{\inner}[2]{\langle #1,#2\rangle}
\newcommand{\cO}{\mathcal{O}}
\newcommand{\wh}{\widehat}
\newcommand{\wt}{\widetilde}
\newcommand{\op}{\mathrm{op}}
\newcommand{\Var}{\mathrm{Var}}

\energythm*
\begin{proof}
Taking gradient of $\cE$ w.r.t. $\vw_j$, and we have $\cdot \vw_j = X^\top D_j \tilde Y\tilde Y^\top \sigma(X\vw_j)$, which proves the theorem.
\end{proof}

\localmaximagroup*
\begin{proof}
Following this setting, if ordered by target values, we can write down the data matrix $X = [X_{h_1}; X_{h_2}; \ldots X_{h_M}]$ (i.e., each $X_h$ occupies $M$ rows of $X$) in which each $X_h = [R^\top_{h}, P]\in \rr^{M\times 2M}$. Here $R_{h}$ is the \emph{regular representation} (a special case of permutation representation) of group element $h$ so that $\ve_{h_1 h_2} = R_{h_1} \ve_{h_2}$ for all $h_1,h_2\in H$, and $P$ is the group inverse operator so that $P\ve_h = \ve_{h^{-1}}$. This is because each row of $X$ that corresponds to the target $h$ can be written as $[\ve^\top_{hh_1}, \ve^\top_{h^{-1}_1}] = [\ve^\top_{h_1}R^\top_{h}, \ve^\top_{h_1}P]$. Stacking the rows that lead to target $h$ together, and order them by $h_1$, we get $X_h = [R^\top_{h}, P]$. 

Let $\vw = [\vu; P\vv]$. Let matrix $S_{ij} := \sigma(u_i + v_j)$, since $R_h$ is a permutation matrix, then $\sigma(X_h\vw) = \sigma(R^\top_h \vu + \vv)$ is a row shuffling of $S$. Therefore, $\sigma(X_h\vw) = \diag(R^\top_h S) \vone_M$, where $\diag(\cdot)$ is the diagonal of a matrix. Note that in this target label ordering, we have $Y = I_M \otimes \vone_M$. So for each column $h$ of $Y$, we have $\vy_h = \ve_h \otimes \vone_M$. So 
\begin{equation}
    z_h := \vy_h^\top \sigma(X\vw) = \vone_M^\top \sigma(X_h\vw) = \vone_M^\top \diag(R^\top_h S) \vone_M = \tr(R^\top_h S) = \langle R_h, S \rangle_F \label{eq:z_h} 
\end{equation}
where $\langle A, B \rangle_F := \tr(A^\top B)$ is the Frobenius inner product. And the energy $\cE$ can be written as:
\begin{equation}
    \cE(\vw) = \frac{1}{2} \sum_h (z_h - \bar z)^2 
\end{equation}
where $\bar z := \frac{1}{M}\sum_h z_h = \frac{1}{M}\sum_h \langle R_h, S \rangle_F = \langle \frac{1}{M}\sum_h R_h, S \rangle_F = \frac{1}{M}\langle \vone_M\vone_M^\top, S \rangle_F$. Therefore, using $R_h \vone_M = \vone_M$, $\cE(\vw)$ can be written as:
\begin{equation}
    \cE(\vw) = \frac{1}{2} \sum_h \langle \tilde R_h, S \rangle_F^2 
\end{equation}
where $\tilde R_h = R_h P^\perp_1$. Now we study its property. We decompose $\{\tilde R_h\}$ into complex irreducible representations:
\begin{equation}
    \tilde R_h = Q\left(\bigoplus_{k\neq 0} \bigoplus_{r=1}^{m_k} C_{k}(h) \right) Q^*
\end{equation}
where $C_{k}(h)$ is the $k$-th irreducible representation block of $R_h$, $Q$ is the unitary matrix (and $Q^*$ is the conjugate transpose of $Q$) and $m_k$ is the multiplicity of the $k$-th irreducible representation. Since $\tilde R_h$ is a zero-meaned representation, we remove the trivial representation $C_{0}(h)$ and thus $Q^* \vone = 0$. Let $\hat S = Q^\top S Q$. Then 
\begin{equation}
    \langle \tilde R_h, S \rangle_F = \langle Q\left(\bigoplus_{k\neq 0}\bigoplus_{r=1}^{m_k} C_{k}(h) \right) Q^*, S \rangle_F = \langle \bigoplus_{k\neq 0} \bigoplus_{r=1}^{m_k} C_{k}(h), \hat S\rangle_F = \sum_{k\neq 0} \sum_{r=1}^{m_k} \tr(C^*_{k}(h)\hat S_{k,r}) 
\end{equation}
where $\hat S_{k,r}$ is the $(k,r)$-th principle (diagonal) block of $\hat S$. Therefore, we have:
\begin{align}
    \sum_h \langle \tilde R_h, S \rangle_F^2 &= \sum_h \sum_{(k,r),(k',r')} \tr(C^*_{k}(h)\hat S_{k,r})\tr(C^*_{k'}(h)\hat S_{k',r'}) \\
    &= \sum_{(k,r),(k',r')} \vecop^*(\hat S_{k,r}) \left[\sum_h \vecop(C_{k}(h))\vecop(C^*_{k'}(h))\right]\vecop(\hat S_{k',r'})
\end{align}

\textbf{Case 1.} If $k\neq k'$ are inequivalent irreducible representations of dimension $d_k$ and $d_{k'}$, then we can prove that $\sum_h \vecop(C_{k}(h))\vecop(C^*_{k'}(h)) = 0$. To see this, let $\mathsf{A}_{k,k'}(Z) = \sum_h C_{k}(h)Z C^{-1}_{k'}(h)$, then $\mathsf{A}_{k,k'}(Z)$ is a $H$-invariant linear mapping from $d_k$ to $d_{k'}$ dimensional space. Thus by Schur's lemma, $\mathsf{A}_{k,k'}(Z) = 0$ for any $Z$. But since $\vecop(\mathsf{A}_{k,k'}(Z)) = \left(\sum_h \bar C_{k'}(h) \otimes C_{k}(h)\right) \vecop(Z)$, we have $\sum_h \bar C_{k'}(h) \otimes C_{k}(h) = 0$. Expanding each component, we have $\sum_h \vecop(C_{k}(h))\vecop(C^*_{k'}(h)) = 0$. 

\textbf{Case 2.} If $k= k'$ are equivalent irreducible representations (and both have dimension $d_k$), then we can prove that $\sum_h \vecop(C_{k}(h))\vecop(C^*_{k}(h)) = \frac{M}{d_k} \vecop(I_{d_k})\vecop^*(I_{d_k})$. Then with Schur's average lemma, we have $\mathsf{A}_{kk}(Z) = \frac{M}{d_k} \tr(Z) I_{d_k}$. A vectorization leads to $\left(\sum_h \bar C_{k}(h) \otimes C_{k}(h)\right)\vecop(Z) = \frac{M}{d_k} \tr(Z) \vecop(I_{d_k})$. Notice that $\vecop^*(I_{d_k})\vecop(Z) = \tr(Z)$ and we arrive at the conclusion.  

Therefore, for the objective function we have:
\begin{equation}
   \cE(\vw) = \frac12 \sum_h \langle \tilde R_h, S \rangle_F^2 = \frac{M}{2} \sum_{k\neq 0} \frac{1}{d_k} \Big|\sum_r \tr(\hat S_{k,r})\Big|^2 \label{eqn:energy_decomposition}
\end{equation}

\textbf{Special case of quadratic activation}. If $\sigma(x) = x^2$, then we have $S = (\vu \circ \vu)\vone^\top + \vone (\vv\circ \vv) + \vu \vv^\top$ and thus $\hat S = \hat\vu \hat\vv^*$, where $\hat\vu = Q^* \vu$ and $\hat\vv = Q^* \vv$. Therefore, since $Q^*\vone = 0$, $\hat S_{k,r} = \hat\vu_{k,r} \hat\vv_{k,r}^*$ and $\tr(\hat S_{k,r}) = \hat\vu_{k,r}^* \hat\vv_{k,r}$. Therefore, with Cauchy-Schwarz inequality, we have
\begin{equation}
    \cE = \frac12 \sum_h \langle \tilde R_h, S \rangle_F^2 = \frac{M}{2} \sum_{k\neq 0} \frac{1}{d_k} \Big|\sum_r \hat \vu^*_{k,r}\hat \vv_{k,r}\Big|^2 \le \frac{M}{2} \sum_{k\neq 0} \frac{1}{d_k} \left(\sum_r |\hat \vu_{k,r}|^2\right) \left(\sum_r|\hat \vv_{k,r}|^2\right)
\end{equation}
Let $a_k = \sum_r |\hat \vu_{k,r}|^2$, $b_k = \sum_r |\hat \vv_{k,r}|^2$, and $c_k = a_k + b_k \ge 0$. Then we have:
\begin{equation}
    \cE = \frac12 \sum_h \langle \tilde R_h, S \rangle_F^2 \le \frac{M}{2} \sum_{k\neq 0} \frac{a_k b_k}{d_k} \le \frac{M}{8} \sum_{k\neq 0} \frac{c_k^2}{d_k},\quad\quad\mathrm{subject\ to \ } \sum_{k\neq 0} c_k = 1 
    \label{eqn:energy_decomposition_c}
\end{equation}
which has one global maxima (i.e., $c_{k_0} = 1$ for $k_0 = \arg\min_k d_k$) and multiple local maxima. The maximum is achieved if and only if $\hat \vu_{k_0,r} = \pm\hat \vv_{k_0,r}$ for all $r$ and $\sum_r |\hat \vu_{k_0,r}|^2 = \sum_r |\hat \vv_{k_0,r}|^2 = 1/2$. 

\textbf{Local maxima}. For each irreducible representation $k_0$, $c_{k_0} = 1$ is a local maxima. This is because for small perturbation $\epsilon$ that moves the solution from $c_k = \mathbb{I}(k=k_0)$ to $c'_k = \left\{ \begin{array}{ll} 1 - \epsilon & \text{if } k = k_0 \\ \epsilon_k & \text{if } k \neq k_0 \end{array} \right.$ with $\epsilon_k \ge 0$ and $\sum_{k\neq k_0} \epsilon_k = \epsilon$, for $\cE = \cE(\{c_k\})$ and $\cE' = \cE(\{c'_k\})$ we have:
\begin{align}
    \cE' &= \frac{M}{8} \sum_{k\neq 0} \frac{(c'_k)^2}{d_k} = \frac{M}{8}\left(\frac{(c_{k_0} - \epsilon)^2}{d_{k_0}} + \sum_{k\neq k_0,0} \frac{\epsilon_k^2}{d_k}\right) \\
    &\le \frac{M}{8}\left(\frac{c_{k_0}^2}{d_{k_0}} - \frac{2\epsilon}{d_{k_0}}\right) + O(\epsilon^2) < \frac{M}{8}\frac{c_{k_0}^2}{d_{k_0}} = \frac{M}{8} \sum_{k\neq 0} \frac{c_k^2}{d_k} = \cE
\end{align} 
All local maxima are flat, since we can always move around within $\hat \vu_{k,r}$ and $\hat \vv_{k,r}$, while the objective function remains the same. 
\end{proof}
\textbf{Optimizing in Real domain}. The above analysis uses complex irreducible representations. For real $\vw$, $\hat S_{k,r}$ will be a complex conjugate of $\hat S_{-k,r}$ for conjugate irreducible representations $k$ and $-k$. This means that we can partition the sum in Eqn.~\ref{eqn:energy_decomposition} into real and complex parts:
\begin{equation}
    \cE(\vw) = \frac{M}{2} \sum_{k\neq 0, k\ \mathrm{real}} \frac{1}{d_k} \Big|\sum_r \tr(\hat S_{k,r})\Big|^2 + M \sum_{k\neq 0, k\ \mathrm{complex,\ take\ one}} \frac{1}{d_k} \Big|\sum_r \tr(\hat S_{k,r})\Big|^2 \label{eqn:energy_decomposition_real}
\end{equation}
The above equation holds since $R_g$ is real, and for any complex irreducible representation $k$, its conjugate representation $-k$ is also included. 
Therefore, to optimize $\cE$ in the real domain $\rr$, we can just optimize only on the real part plus the complex part taken one of the conjugate pair in the complex domain $\cc$. 

\textbf{Zero-meaned one hot representation}. Note that if we use zero-meaned one hot representation $\tilde\ve_h = P^\perp_1 \ve_h$, then $R_{h_1} \tilde\ve_{h_2} = \tilde \ve_{h_1 h_2}$ and $P\tilde\ve_h = \tilde\ve_{h^{-1}}$ still hold, and $\tilde X_h = P^\perp_1 X_h = P^\perp_1 [R^\top_{h}, P] = [R^\top_{h}, P] [P^\perp_1; P^\perp_1]$. This means that we can still use $X_h$ but enforce zero-meaned constraints on $\vu$ and $\vv$, which is already included since $Q^*\vone = 0$. 

\flatsol*
\begin{proof}
    For Abelian group $H$ with $|H| = M > 2$, all irreducible representations are 1-dimensional, and at least one of it is complex. Since $\cc$ is treated as 2D space in optimization, it has at least 1 degree of freedom to change without changing its function value (Eqn.~\ref{eqn:energy_decomposition_real}). So the Hessian has at least 1 zero eigenvalue. For non-Abelian group, there is at least one irreducible representation $k$ with dimension greater than 1, which means it has at least 1 degrees of freedom to change $\hat S_{k,r}$ without changing $|\sum_r \tr(\hat S_{k,r})|^2$ and thus its function value (Eqn.~\ref{eqn:energy_decomposition_real}). So the Hessian has at least 1 zero eigenvalue. 
\end{proof}

\def\vc{\mathbf{c}}
\def\End{\mathrm{End}}

\subsection{Reconstruction power of learned features (Sec.~\ref{sec:reconstruct-power-of-learned-feature})}

\reconstructionoftarget*
\begin{proof}
For each nontrivial irrep $k$, let $\Pi_k$ be the central idempotent projector onto the isotypic subspace $\mathcal{H}_k = I_{m_k}\otimes \cc^{d_k}$ (for the regular rep, $m_k=d_k$). Let $\End(\cH_k)$ be the space of all linear operators that map $\cH_k$ to itself. Note that the dimensionality of $\cH_k$ is $D_k := m_k d_k$. 

Let $\vw_j = [\vu_j, P\vv_j]$ be the weights learned by optimizing the energy function $\cE$ with quadratic activation $\sigma(x) = x^2$. From Thm.~\ref{thm:local_maxima}, we know that at local optima, $\vu_j = \pm \vv_j$ and $\vone^\top \vu_j = 0$. Therefore, the feature $\tilde\vf_{j,h} \in \rr^M$ is given by ($\circ$ denotes the Hadamard product)
\[
\tilde\vf_{j,h} = \pm 2\,(R_h^\top\vu_j)\circ \vu_j + (R_h^\top\vu_j)^{\circ 2} - \frac{1}{M} \sum_h (R_h^\top\vu_j)^{\circ 2} 
\]
The third term $\vu^{\circ 2}$ is a constant across all $h$ and was removed in the zero-meaned projection. By our assumption we have node $j$ and $j'$ with both positive and negative signs. So $\frac{1}{2}\left(\tilde\vf_{j,h} - \tilde\vf_{j',h}\right) = 2\,(R_h^\top\vu_j)\circ \vu_j$. If a linear representation of $\{\tilde \vf_j\}$ can perfectly reconstruct the target $\tilde Y$, so does the original representation. So for now we just let feature $\tilde\vf_{j,h} = 2\,(R_h^\top\vu_j)\circ \vu_j = 2\diag(R_h^\top\vu_j\vu_j^\ast)$. Let $U_j := \vu_j \vu_j^\ast$, which is Hermitian in $\End(\cH_k)$, then $\tilde\vf_{j,h} = 2\diag(R_h^\top U_j)$. 

\textbf{Gram block diagonalization.} For each irrep $k$, let $J_k$ be the set of all node $j$ that converges to the $k$-th irrep. For any Hermitian operator $U$ supported in $\cH_k$ (i.e. $U=\Pi_k U\Pi_k$), define the centered quadratic cross-feature
\[
\vc_{U}(h)\ :=\ 2\,\diag(R_h^\top U)\in\cc^M,
\]
and write $\vc_{U_j} = [\vc_{U_j}(h)]_{h\in H} \in \cc^{M^2}$ as a concatenated vector.

For $U,V\in\End(\cH_k)$, define 
$\mathcal G(U,V):=\sum_{h\in H}\langle \vc_{U}(h),\vc_{V}(h)\rangle$.  
On $\cH_k$, $R_h=I_{m_k}\otimes C_k(h)$, so the map $U\mapsto \vc_{U}(h)$ is linear and the bilinear form $\mathcal G$ is invariant under $U\mapsto (I\otimes C_k(g))U(I\otimes C_k(g))^\ast$. By Schur’s lemma, $\mathcal G(U,V)=\alpha_k\langle U, V\rangle = \alpha_k\tr(UV^*)$ for some scalar $\alpha_k$. Evaluating on rank-one $U=V$ (or by a direct calculation) gives $\alpha_k=4$, hence
\[
\sum_{h}\langle \vc_{U}(h),\vc_{V}(h)\rangle=4\,\tr(UV^*).
\]
For $U_j = \vu_j\vu_j^*$ and $U_\ell = \vu_\ell\vu_\ell^*$ from $\cH_k$ and $\cH_\ell$ with $k\neq \ell$, we have 
\begin{align*}
\sum_h\langle \vc_{U_j}(h),\vc_{U_\ell}(h)\rangle & = 4\vone^\top \sum_h \diag(R_h^\top \vu_j \vu_j^*)\circ \diag(R_h^\top \bar\vu_\ell\bar\vu_\ell^*) \\ 
& = 4\vone^\top \sum_h (R_h^\top \vu_j) \circ \bar\vu_j \circ R_h^\top \bar\vu_\ell \circ \vu_\ell = 4 \vone^\top \left[\left(\sum_h R_h\right) (\vu_j \circ \bar\vu_\ell)\right] \circ \bar\vu_j \circ \vu_\ell \\ 
& = 4|\vu_j^*\vu_\ell|^2
\end{align*}
This means that $\langle \tilde \vf_j, \tilde \vf_\ell\rangle = \langle \vc_{U_j}, \vc_{U_\ell}\rangle = 0$. And thus the Gram matrix $G := \tilde F^\top \tilde F$ is block diagonal with each block $G_k$ corresponding to an irrep subspace $k$. Here $G_k \in \cc^{N_k\times N_k}$. Note that since we sample $D^2_k = m^2_kd^2_k$ weights, then $\{U_j\}_{j\in J_k}$ becomes a complete set of bases (not necessarily orthogonal bases) and thus $G_k$ is invertible. 

\textbf{Right-hand side.} 
For any $U \in \End(\cH_k)$,
\[ 
r_U(h')=\sum_x \vc_{U}(h')_x=2\,\tr\big((\Pi_k R_{h'}\Pi_k)U\big)
=2\,\tr\big((I_{m_k}\otimes C_k(h'))U\big).
\]
and we have $[\tilde \vf_j^\top Y]_{h'} = [\tilde \vf_j^\top \tilde Y]_{h'} = r_{U_j}(h')$.

\textbf{Solve LS.} Now we try to solve the LS problem $G V = \tilde F^\top \tilde Y$. Due to the block diagonal nature, this can be solved independently for each $G_k$. Consider $G_k V_k = \tilde F^\top_k \tilde Y$. Here $\tilde F_k = [\tilde\vf_j]_{j\in J_k}$ collects the subset column $J_k$ from $\tilde F$. 

Therefore, $V_k = G^{-1}_k \tilde F^\top_k \tilde Y$ and $v_j(h')$ as the $(j, h')$ entry of $V_k$, has $v_j(h') = \sum_l [G^{-1}_k]_{jl} r_{U_l}(h') = 2\sum_l [G^{-1}_k]_{jl}\tr\big((I_{m_k}\otimes C_k(h'))U_l\big)$. Then we have $\hat Y^{(k)} = \tilde F_k V_k$:

\[
\hat Y^{(k)}_{(\cdot,h),\,h'}
=\sum_{j\in J_k} v_j(h')\,\vc_{U_j}(h)
=4 \sum_{j\in J_k} \sum_l [G_k^{-1}]_{jl} \tr\big((I\otimes C_k(h'))U_l\big)\cdot \diag(R_h^\top U_j).
\]

By linearity in $U$ and completeness of $\{U_j\}$ (the Hermitian bases span all operators in $\cH_k$), we have for any $A \in \End(\cH_k)$: 
\[ 
4\sum_{jl} [G^{-1}_k]_{jl} \tr(AU_l)\,\diag(R_h^\top U_j)=4\diag\left(R_h^\top \left(\sum_{jl} [G^{-1}_k]_{jl} \langle A, U_l\rangle U_j\right)\right)=\diag(R_h^\top A)
\]
The last equality holds by noticing that $\langle A, U_l\rangle = \vecop^*(U_l)\vecop(A)$ and thus $4\sum_{jl} [G^{-1}_k]_{jl} \langle A, U_l\rangle U_j = A$. Take $A=I\otimes C_k(h') = \Pi_k R_{h'}\Pi_k \in \End(\cH_k)$, and we have:

\[
\boxed{\quad
\hat Y^{(k)}_{(\cdot,h),\,h'}\;=\;\diag\!\Big(R_h^\top\Pi_k R_{h'}\Pi_k\Big)
\qquad (h,h'\in H).
\quad}
\]

To see why $\hat Y = \tilde Y$, we have:
\[
\hat Y^{(k)}_{(\cdot,h),h'}=\diag\!\big(R_h^\top(\Pi_k R_{h'}\Pi_k)\big)
\ \Rightarrow\
\sum_{k\neq 0}\hat Y^{(k)}_{(\cdot,h),h'}
=\diag\!\Big(R_h^\top\Big(\sum_{k\neq 0}\Pi_k R_{h'}\Pi_k\Big)\Big).
\]
Since $\sum_k\Pi_k=I$ and $\Pi_k R_{h'}=R_{h'}\Pi_k$,
\[
\sum_{k\neq 0}\Pi_k R_{h'}\Pi_k
=R_{h'}-\Pi_0.
\]
where $\Pi_0 = \frac{1}{M}\vone_M \vone_M^\top$ is the central idempotent projector onto the trivial irrep. Thus
\[
\sum_{k\neq 0}\hat Y^{(k)}_{(\cdot,h),h'}
=\diag(R_h^\top R_{h'})-\diag(R_h^\top \Pi_0)
=\begin{cases}
(1-\tfrac{1}{M})\,\vone_M,& h=h',\\[2pt]
-\tfrac{1}{M}\,\vone_M,& h\neq h',
\end{cases}
\]
because $\diag(R_h^\top R_{h'})=\vone_M$ iff $h=h'$ and $0$ otherwise, while
$\diag(R_h^\top \Pi_0)=\tfrac{1}{M}\vone_M$ for all $h$.
Hence $\sum_{k\neq 0}\hat Y^{(k)}=P_1^\perp Y=\tilde Y$.
\end{proof}

\textbf{Remark.} The above proof also works for real $\vw$ since we can always take a real decomposition of $R_h$ and all the above steps follow. 

\textbf{Property of the square term.} With quadratic features the class-centered column for node $j$ and block $h$ decomposes as $\tilde F=[A, B]$, where for $B$ each column $j$ (and block $h$) is $\vb_{j,h}:=R_h^\top(\vu_j^{\circ 2})-\tfrac{\|\vu_j\|_2^2}{M}\vone_M$ (the ``square'' part) and for $A$ each column $j$ (and block $h$) is $\va_{j,h}:=2\,(R_h^\top \vu_j)\circ \vu_j$ (the ``cross'' part we discussed above). The vector $\vb_{j}$ is entrywise mean-zero, i.e.\ $\sum_x \vb_{j}(x)=0$ for all $h$, hence it has zero correlation with any class-centered target column $\tilde Y_{(\cdot,h')}\propto \vone$:
$\, (\vb_{j,h}^\top \tilde Y)_{h'}=\sum_x \vb_{j,h'}(x)=0$. Moreover, under $\vone^\top \vu_j=\vone^\top \vu_\ell=0$ one has $\sum_h\!\langle \vb_{j,h},\va_{\ell,h}\rangle=0$. So the normal equation becomes 
\[
    \tilde F^\top \tilde F V = \begin{bmatrix} A^\top A & A^\top B \\ B^\top A & B^\top B \end{bmatrix} V = \begin{bmatrix} A^\top \tilde Y \\ B^\top \tilde Y \end{bmatrix}
\]
which gives
\[
   \begin{bmatrix} A^\top A & 0 \\ 0 & B^\top B \end{bmatrix} V = \begin{bmatrix} A^\top \tilde Y \\ 0 \end{bmatrix}
\]
So even with the square term $B$ in $\tilde F$, $V$ will still have zero coefficient on them. 

\def\col{\mathrm{col}}
\def\subspan{\mathrm{span}}

\subsection{Scaling laws of memorization and generalization (Sec.~\ref{sec:non-generalizable-solutions})}
\dataforgeneralization*

\begin{proof}

\textbf{Overview}. We keep the setting and notation of the theorem in the prompt (group $H$, $|H|=M$, quadratic activation, $S$ as defined there, $z_h=\inner{R_h}{S}=\tr(R_h^\top S)$, zero-mean removal already folded into $\tilde R_h$). We analyze random row subsampling and show that the empirical objective keeps the same local-maxima structure with $n \gtrsim M\log(M/\delta)$ retained rows.

\textbf{Setup}. There are $M^2$ rows indexed by pairs $(h_1,h_2)\in H\times H$, with target $h=h_1h_2$.
For each $h\in H$, exactly $M$ rows map to $h$; we index them by $j\in[M]$ after ordering by $h_1$ as in the proof, and write
\[
s_{h,j} \;:=\; \big(R_h^\top S\big)_{jj}.
\qquad\text{so that}\qquad
z_h \;=\; \sum_{j=1}^M s_{h,j} \;=\; \inner{R_h}{S}.
\]
We subsample \emph{rows} independently with keep-probability $p\in(0,1]$. Let $\xi_{h,j}\in\{0,1\}$ be the keep indicator for the row $(h,j)$:
\[
\Pr(\xi_{h,j}=1)=p, \quad \text{i.i.d. over }(h,j).
\]
The number of kept rows for target $h$ is
\[
\wh m_h \;:=\; \sum_{j=1}^M \xi_{h,j} \;\sim\; \mathrm{Bin}(M,p),
\qquad
\mathbb E[\wh m_h]=pM,\quad \mathrm{Var}(\wh m_h)=Mp(1-p).
\]

\paragraph{Estimator for $z_h$.}
We use the \emph{linear/unbiased} (Horvitz--Thompson) target-wise estimator
\[
\wh z_h \;:=\; \frac{1}{p}\sum_{j=1}^M \xi_{h,j}\, s_{h,j}.
\qquad\Rightarrow\qquad
\mathbb E[\wh z_h\,|\,S]=z_h.
\]
Define the diagonal sampling matrix
\[
W_h^{\mathrm{HT}} \;:=\; \diag\!\Big(\frac{\xi_{h,1}}{p},\ldots,\frac{\xi_{h,M}}{p}\Big),
\quad\text{so}\quad
\wh z_h=\tr\!\big(R_h^\top S\,W_h^{\mathrm{HT}}\big)=\inner{R_h W_h^{\mathrm{HT}}}{S}.
\]

\textbf{The empirical Gram operator.} Set the normalized per-target weight
\[
w_h \;:=\; \frac{\wh m_h}{pM},
\qquad \mathbb E[w_h]=1,
\qquad \mathrm{Var}(w_h)=\frac{1-p}{pM}\;\le\;\frac{1}{pM}.
\]
Decompose $W_h^{\mathrm{HT}}$ into its mean and zero-mean parts:
\[
W_h^{\mathrm{HT}} \;=\; w_h I \;+\; \Delta_h,
\qquad \tr(\Delta_h)=0,
\qquad \mathbb E[\Delta_h\,|\,\wh m_h]=0.
\]
Therefore
\begin{equation}
\label{eq:zhat_decomp}
\wh z_h \;=\; \inner{R_h(w_h I+\Delta_h)}{S}
\;=\; w_h\, z_h \;+\; \varepsilon_h,
\qquad
\varepsilon_h \;:=\; \inner{R_h\Delta_h}{S},
\qquad \mathbb E[\varepsilon_h\,|\,S,\wh m_h]=0.
\end{equation}

Using the decomposition
\[
z_h \;=\; \sum_{k\neq 0}\sum_{r=1}^{m_k} \tr\!\big(C_{k,h}^* \,\wh S_{k,r}\big)
\;=\; \sum_{k\neq 0}\sum_{r=1}^{m_k} \vecop(\wh S_{k,r})^* \vecop(C_{k,h}),
\]
we obtain
\begin{align}
\sum_{h} \wh z_h^2
&= \sum_{h} \big(w_h z_h + \varepsilon_h\big)^2
= \underbrace{\sum_h w_h^2 z_h^2}_{\text{signal}}
\;+\; 2\underbrace{\sum_h w_h z_h\varepsilon_h}_{\text{mixed}}
\;+\; \underbrace{\sum_h \varepsilon_h^2}_{\text{noise}}. \label{eq:three_terms}\\
\intertext{The signal term can be written as a quadratic form over irrep blocks:}
\sum_h w_h^2 z_h^2
&= \sum_{(k,r),(k',r')}
\vecop(\wh S_{k,r})^*
\Big[\sum_h w_h^2\, \vecop(C_{k,h})\,\vecop(C_{k',h})^* \Big]
\vecop(\wh S_{k',r'}).
\label{eq:weighted_gram}
\end{align}
Recall that the full-data operator is 
\[
\mathsf A_{k,k'} \;:=\; \frac{1}{M}\sum_h \overline{C}_{k',h}\otimes C_{k,h}.
\]
and $\vecop(C_{k,h})\,\vecop(C_{k',h})^*$ is just a column and row reshuffling of $\overline{C}_{k',h}\otimes C_{k,h}$. In the following we will study approximation errors of $\mathsf A_{k,k'}$ instead. Let 
\[
\wh{\mathsf A}_{k,k'}^{(2)} \;:=\; \frac{1}{M}\sum_h w_h^2\, \overline{C}_{k',h}\otimes C_{k,h}
\qquad\text{and}\qquad
\wh{\mathsf A}_{k,k'} \;:=\; \frac{1}{M}\sum_h w_h\, \overline{C}_{k',h}\otimes C_{k,h}
\]
the \emph{second-} and \emph{first-weighted} empirical Gram operators, respectively. By construction, $\mathbb E[\wh{\mathsf A}_{k,k'}]=\mathsf A_{k,k'}$ and
$\mathbb E[\wh{\mathsf A}_{k,k'}^{(2)}] = \mathsf A_{k,k'} + \frac{1-p}{pM}\,\mathsf A_{k,k'}$ (a tiny bias of order $1/(pM)$).

\textbf{Error bounds for each $(k,k')$ block.}
We will control three deviations, uniformly over all $(k,k')$:
\begin{align}
\label{eq:E1}
\mathbf{E1:}\quad
&\left\| \wh{\mathsf A}_{k,k'} - \mathsf A_{k,k'} \right\|_{\op}
\;\le\; c_1 \sqrt{\frac{\log(M/\delta)}{Mp}},\\
\label{eq:E2}
\mathbf{E2:}\quad
&\left\| \wh{\mathsf A}_{k,k'}^{(2)} - \wh{\mathsf A}_{k,k'} \right\|_{\op}
\;\le\; c_2 \sqrt{\frac{\log(M/\delta)}{Mp}} \;+\; \frac{c_2'}{Mp},\\
\label{eq:E3}
\mathbf{E3:}\quad
&\left|\sum_h w_h z_h\varepsilon_h\right| \;\le\; c_3 \|z\|_2 \sqrt{\frac{M\log(M/\delta)}{p}},
\qquad
\sum_h \varepsilon_h^2 \;\le\; c_4 \frac{M\log(M/\delta)}{p},
\end{align}
for numerical constants $c_i,c_i'$, with probability at least $1-\delta/3$.

\paragraph{Tool: Matrix Bernstein (self-adjoint dilation form)~\citep{tropp2012user}.}
Let $\{X_i\}$ be independent, mean-zero random $d\times d$ matrices with $\|X_i\|\le L$ and
$\left\|\sum_i \mathbb E[X_i X_i^*]\right\|\le v$.
Then for all $t>0$,
\[
\Pr\!\left(\left\|\sum_i X_i\right\|\ge t\right)
\;\le\;
2d\;\exp\!\left( -\frac{t^2/2}{v + Lt/3} \right),
\]

\paragraph{Proof of \eqref{eq:E1}.}
Fix $(k,k')$ and define $B_h:=\overline{C}_{k',h}\otimes C_{k,h}$ (unitary, so $\|B_h\|=1$). Consider
\[
X_h \;:=\; \frac{1}{M}\,(w_h-1)\,B_h,
\qquad
\mathbb E[X_h]=0,
\qquad
\|X_h\|\le \frac{|w_h-1|}{M} \le \frac{1}{M}.
\]
We have
\[
\mathbb E[X_h X_h^*]
= \frac{\mathbb E[(w_h-1)^2]}{M^2} B_h B_h^*
= \frac{\mathrm{Var}(w_h)}{M^2} I
\;\preceq\; \frac{1}{pM^3} I.
\]
Summing over $h$ gives variance proxy $v\le M\cdot \frac{1}{pM^3}=\frac{1}{pM^2}$. Since $d\le M$, with probability at least $1-\delta/3$, Matrix Bernstein yields
\[
\left\| \wh{\mathsf A}_{k,k'} - \mathsf A_{k,k'} \right\|_{\op}
= \left\|\sum_h X_h \right\|
\;\lesssim\;
\sqrt{\frac{\log(M/\delta)}{Mp}},
\]
which is \eqref{eq:E1}.

\paragraph{Proof of \eqref{eq:E2}.}
Write
\[
\wh{\mathsf A}_{k,k'}^{(2)} - \wh{\mathsf A}_{k,k'}
= \frac{1}{M}\sum_h (w_h^2-w_h)\,B_h
= \underbrace{\frac{1}{M}\sum_h \big((w_h-1)^2 + (w_h-1)\big)\,B_h}_{:=\Sigma_1+\Sigma_2}.
\]
For $\Sigma_2$ we reuse the argument of \eqref{eq:E1}. For $\Sigma_1$, note that $\mathbb E[(w_h-1)^2]=\Var(w_h)\le 1/(pM)$, and $(w_h-1)^2$ is sub-exponential with scale $\cO(1/(pM))$, so matrix Bernstein again gives that with probability at least $1-\delta/3$,
\[
\|\Sigma_1\|_{\op} \;\lesssim\; \sqrt{\frac{\log(M/\delta)}{Mp}} \;+\; \frac{1}{Mp}.
\]
Combining yields \eqref{eq:E2}.

\paragraph{Bounds for the mixed and noise terms in \eqref{eq:E3}.}
Conditional on $S$ and $\{w_h\}$, the $\{\varepsilon_h\}$ are independent, mean-zero, and
\[
|\varepsilon_h|=\big|\inner{R_h\Delta_h}{S}\big|
\le \|R_h\Delta_h\|_F \,\|S\|_F
\le \|\Delta_h\|_F \,\|S\|_F,
\quad
\mathbb E[\varepsilon_h^2\,|\,S,w_h]\;\lesssim\; \frac{\|S\|_F^2}{p}.
\]
Hence by scalar Bernstein (and Cauchy--Schwarz for the mixed sum),
\[
\left|\sum_h w_h z_h\varepsilon_h\right|
\;\le\; \|w\|_\infty \,\|z\|_2 \,\|\varepsilon\|_2
\;\lesssim\; \|z\|_2 \sqrt{\frac{M\log(M/\delta)}{p}},
\qquad
\sum_h \varepsilon_h^2 \;\lesssim\; \frac{M\log(M/\delta)}{p},
\]
with probability at least $1-\delta/3$, which is \eqref{eq:E3}. 

Combine the above three bounds, we know that with probability at least $1-\delta$, \eqref{eq:E1}--\eqref{eq:E3} hold at the same time.

\textbf{Stability of local maxima.} For the quadratic case (after mean removal), with the collinear and equal length $\vu$ and $\vv$ required by local maxima, $\cE$ can be written as a positive semidefinite quadratic in the block masses $c_k$ (Eqn.~\ref{eqn:energy_decomposition_c}):
\[
\mathcal E(c)
\;=\; \frac{M}{8}\sum_{k\neq 0}\frac{c_k^2}{d_k},
\qquad
\sum_{k\neq 0} c_k = 1,\;\; c_k\ge 0.
\]
The empirical energy has the form
\[
\wh{\mathcal E}(c) \;=\; \frac{M}{8}\, c^\top (D+E)\, c \;+\; \text{(terms independent of $c$)},
\]
where $D=\mathrm{diag}(1/d_k)$ and $E$ is the symmetric perturbation induced by replacing $\mathsf A_{k,k'}$ with $\wh{\mathsf A}_{k,k'}^{(2)}$ and by the mixed/noise terms. By \eqref{eq:E1}--\eqref{eq:E3},
\begin{equation}
\label{eq:Eop_final}
\|E\|_{\op} \;\lesssim\; \sqrt{\frac{\log(M/\delta)}{Mp}} \;+\; \frac{1}{Mp}
\end{equation}
with probability at least $1-\delta$.

\paragraph{Directional slope at a vertex (no gap needed).}
Consider a pure-irrep vertex $c=\mathbf e_a$ and leak $\varepsilon$ mass to any other coordinate $b\neq a$:
$c_a'=1-\varepsilon$, $c_b'=\varepsilon$, others $0$.
Population change:
\[
\Delta \mathcal E
= \frac{M}{8}\left(\frac{(1-\varepsilon)^2-1}{d_a} + \frac{\varepsilon^2}{d_b}\right)
= -\,\frac{M}{4d_a}\,\varepsilon \;+\; \cO(\varepsilon^2).
\]
Hence every leakage direction is strictly downhill at rate $\frac{M}{4d_a}$, even if multiple $d_k$ tie. Therefore, a first-order approximation of $\Delta \wh{\mathcal E}$ is
\[
\Delta \wh{\mathcal E}
= \Delta \mathcal E \;+\; \frac{M}{8}\,\Delta\big(c^\top E c\big)
= -\,\frac{M}{4d_a}\,\varepsilon \;+\; \cO(\varepsilon^2) \;+\; \frac{M}{4}\,\cO\!\big(\|E\|_{\op}\,\varepsilon\big).
\]
Therefore $\Delta \wh{\mathcal E}<0$ for all sufficiently small $\varepsilon>0$ provided
\[
\frac{M}{4}\,\|E\|_{\op} \;<\; \frac{M}{4d_a}
\qquad\Longleftrightarrow\qquad
\|E\|_{\op} \;<\; \frac{1}{d_a}.
\]
Combining with \eqref{eq:Eop_final}, a sufficient sampling condition is
\[
\sqrt{\frac{\log(M/\delta)}{Mp}} \;+\; \frac{1}{Mp}
\;<\; \frac{1}{C\,d_a}
\quad\Rightarrow\quad
Mp \;\gtrsim\; d^2_a \log\!\frac{M}{\delta},
\]
for a universal numerical constant $C$. Since the total number of kept rows is $n=pM^2$, this is exactly
\[
\boxed{\quad n \;\gtrsim\; M \,d^2_a\,\log\!\frac{M}{\delta}\quad}
\]
(up to universal constants). Under this condition, with probability at least $1-\delta$, every pure-irrep vertex remains a strict local maximum of the empirical objective (energies shift by $\cO(\sqrt{\log(M/\delta)/(Mp)})$). When several irreps have the same $d_k$ (tied energies), \emph{which} one is the global maximizer may swap, but the local-maxima set is preserved.
\end{proof}

\subsection{Memorization}
\newcommand{\RR}{\mathbb{R}}
\newcommand{\one}{\mathbf 1}
\newcommand{\argmax}{\mathop{\mathrm{argmax}}}

\textbf{Setting}. Fix a group element \(h\). The admissible training pairs are \((g,\,g^{-1}h)\) for \(g\in H\)
with probabilities \(p_g:=p_{g,\,g^{-1}h}\) and a unique maximum at \(g^*\), i.e., \(p_{g^*}>p_g\) for all \(g\neq g^*\).
Let \(w=[u;v]\in\RR^{2M}\) with budget \(\|u\|_2^2+\|v\|_2^2=1\).
Define the pair-sums \(s_g:=u_g+v_{g^{-1}h}\ge 0\). Then \(\sum_g s_g^2\le 2\) and the (single-target) objective reduces to
\[
F(s)\ :=\ \sum_{g} p_g\,\sigma(s_g)\qquad\text{subject to}\qquad s_g\ge 0,\ \ \sum_g s_g^2\le 2,
\]
where \(\sigma\in C^1([0,\infty))\) is strictly increasing on \((0,\infty)\).
Maximizing the energy \(\mathcal E\) is equivalent (up to a fixed positive factor) to maximizing \(F\).

\begin{lemma}[KKT characterization via \(\phi=\sigma'/x\)]
\label{lem:KKT-phi}
Assume \(\sigma'(x)>0\) for \(x>0\), and define \(\phi(x):=\sigma'(x)/x\) for \(x>0\).
Let \(s^\star\) be an optimal solution. Then there exists \(\lambda\ge 0\) such that for each \(g\):
\begin{equation}
\label{eq:KKT-balance}
p_g\,\phi\big(s^\star_g\big) \;=\; 2\lambda,\quad\text{if } s^\star_g>0,\\[2pt]
\end{equation}
Moreover, the budget is tight: \(\sum_g (s^\star_g)^2=2\) (hence \(\lambda>0\)).
If \(\phi\) is strictly monotone on \((0,\infty)\), then for every active coordinate \(s^\star_g>0\),
\begin{equation}
\label{eq:profile}
s^\star_g \;=\; \phi^{-1}\!\left(\frac{2\lambda}{p_g}\right).
\end{equation}
\end{lemma}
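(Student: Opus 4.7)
The plan is to apply standard KKT machinery to the constrained maximization of $F(s) = \sum_g p_g \sigma(s_g)$ over the compact feasible set $\mathcal{K} := \{s\in\rr^{|H|}: s_g\ge 0,\ \sum_g s_g^2 \le 2\}$. Continuity of $F$ on $\mathcal{K}$ guarantees an optimizer $s^\star$ exists. I would first verify a constraint qualification: at any nonzero feasible point the gradient of the budget constraint $\nabla(\sum_g s_g^2 - 2) = 2s$ is nonzero and not colinear with the gradients of the active nonnegativity constraints (which are coordinate vectors at the inactive coordinates, hence orthogonal to $s$), so LICQ holds wherever $s \ne 0$.

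Next I would form the Lagrangian $L = \sum_g p_g \sigma(s_g) - \lambda\bigl(\sum_g s_g^2 - 2\bigr) + \sum_g \mu_g s_g$ with $\lambda, \mu_g \ge 0$, write out stationarity $p_g \sigma'(s_g^\star) - 2\lambda s_g^\star + \mu_g = 0$, and invoke complementary slackness. At any active coordinate $s_g^\star > 0$, $\mu_g = 0$, so $p_g\sigma'(s_g^\star) = 2\lambda s_g^\star$; dividing by $s_g^\star$ gives the balance equation \eqref{eq:KKT-balance}, namely $p_g\phi(s_g^\star) = 2\lambda$.

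Third, I would argue the budget is tight and $\lambda > 0$ by a scaling perturbation. The point $s = \sqrt{2}\,e_{g^*}$ is feasible and yields $F = p_{g^*}\sigma(\sqrt{2}) > 0$, so $s^\star \ne 0$, hence at least one coordinate is active. If $\sum_g (s_g^\star)^2 < 2$ strictly, then for any active $g$ I can replace $s_g^\star$ by $(1+\varepsilon) s_g^\star$ for sufficiently small $\varepsilon > 0$ and remain feasible; strict monotonicity of $\sigma$ on $(0,\infty)$ makes $F$ strictly increase, contradicting optimality. So $\sum_g (s_g^\star)^2 = 2$. Plugging any active coordinate into the balance equation then forces $\lambda = p_g\sigma'(s_g^\star)/(2 s_g^\star) > 0$. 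Finally, when $\phi$ is strictly monotone on $(0,\infty)$ it is a bijection onto its image, so \eqref{eq:profile} follows by directly inverting $\phi$ in the balance equation.

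The only subtle point will be ensuring the perturbation step is valid, in particular that scaling a single active coordinate keeps us in $\mathcal{K}$ (this requires the budget to have strict slack, which is exactly what we assume for contradiction) and that LICQ genuinely applies at $s^\star$ so that KKT are truly necessary; once both are in hand the rest is bookkeeping. An additional remark I would include is that $\lambda$ is implicitly determined by substituting \eqref{eq:profile} into $\sum_g (s_g^\star)^2 = 2$, giving the defining equation $\sum_g [\phi^{-1}(2\lambda/p_g)]^2 = 2$ used in Thm.~\ref{thm:memorization}.
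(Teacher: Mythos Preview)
Your proposal is correct and follows essentially the same route as the paper: form the Lagrangian, read off stationarity to get $p_g\sigma'(s_g^\star)=2\lambda s_g^\star$ on active coordinates, argue budget tightness from strict monotonicity of $\sigma$, and invert $\phi$ when it is strictly monotone. Your additions (checking LICQ explicitly and noting how $\lambda$ is pinned down by $\sum_g [\phi^{-1}(2\lambda/p_g)]^2=2$) are sound and slightly more careful than the paper's sketch, but the argument is the same.
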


\begin{proof}
Consider the Lagrangian
\(L(s,\lambda,\mu)=\sum_g p_g\,\sigma(s_g) - \lambda(\sum_g s_g^2-2) - \sum_g \mu_g s_g\),
with \(\lambda\ge 0\), \(\mu_g\ge 0\).
Stationarity gives \(p_g\,\sigma'(s_g) - 2\lambda s_g - \mu_g=0\).
If \(s_g>0\), then \(\mu_g=0\) and \(p_g\,\sigma'(s_g)=2\lambda s_g\), i.e.,
\(p_g\,\phi(s_g)=2\lambda\). If \(s_g=0\), complementary slackness allows \(\mu_g\ge 0\) and the stationarity reads
\(p_g\,\sigma'(0)-\mu_g=0\). Interpreting \(\phi(0^+):=\lim_{x\downarrow 0}\sigma'(x)/x\) (possibly \(+\infty\)),
the inequality \(p_g\,\phi(0^+)\le 2\lambda\) encodes the fact that activating \(s_g>0\) would violate the KKT balance.
Since \(\sigma'>0\) and the objective is increasing in each \(s_g\), the budget must be tight at optimum,
hence \(\sum_g s_g^2=2\) and \(\lambda>0\). If \(\phi\) is strictly monotone, \eqref{eq:KKT-balance} uniquely determines
\(s_g\) as in \eqref{eq:profile}.
\end{proof}

\begin{lemma}[Memorization vs.\ spreading by \(\phi\)-monotonicity]
\label{thm:phi-mono}
Under the setup above and assuming \(\phi(x)=\sigma'(x)/x\) is continuous on \((0,\infty)\):
\begin{itemize}
\item[(A)] If \(\phi\) is \emph{nondecreasing} on \((0,\sqrt2]\), then the unique maximizer is the \emph{memorization} (peaked) solution
\[
s^\star_{g^*}=\sqrt{2},\qquad s^\star_{g\neq g^*}=0,
\]
realized by \(u=\tfrac{1}{\sqrt2}e_{g^*}\), \(v=\tfrac{1}{\sqrt2}e_{(g^*)^{-1}h}\).
\item[(B)] If \(\phi\) is \emph{strictly decreasing} on \((0,\infty)\), then the unique maximizer \emph{spreads} and is given by
\[
s^\star_g \;=\; \phi^{-1}\!\Big(\frac{2\lambda}{p_g}\Big)\quad(\text{for all }g\text{ with }2\lambda/p_g<\phi(0^+)),
\]
and \(s^\star_g=0\) for any \(g\) with \(2\lambda/p_g\ge \phi(0^+)\) (if \(\phi(0^+)<\infty\)).
The multiplier \(\lambda>0\) is uniquely determined by the budget \(\sum_g (s^\star_g)^2=2\).
In particular, if \(\phi(0^+)=\infty\) (e.g., ReLU on \([0,\infty)\): \(\phi(x)=1/x\); SiLU: \(\phi(x)=\frac{\mathrm{sigmoid}(x)}{x}+\mathrm{sigmoid}(x)(1-\mathrm{sigmoid}(x))\)), then all coordinates are strictly positive and
\[
p_i>p_j \ \ \Longrightarrow\ \ s^\star_i > s^\star_j > 0.
\]
\end{itemize}
\end{lemma}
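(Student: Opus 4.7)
My first step is a change of variables $t_g := s_g^2 \ge 0$, which recasts the problem as
\[
\max_{t\ge 0,\ \sum_g t_g \le 2}\ \sum_g p_g\,\psi(t_g),\qquad \psi(t) := \sigma(\sqrt t),
\]
with $\psi'(t) = \phi(\sqrt t)/2$ by the chain rule. Monotonicity of $\phi$ is thereby transferred verbatim to the sign of $\psi''$, so Case (A) reduces to convex maximization on a simplex and Case (B) to strictly concave maximization on the same feasible set. In both cases Lemma~\ref{lem:KKT-phi} already provides the budget tightness and the stationarity condition, so the remaining content is structural.

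\textbf{Case (A).} When $\phi$ is nondecreasing on $(0,\sqrt 2\,]$, $\psi$ is convex on $[0,2]$, hence so is the nonnegatively weighted sum $\sum_g p_g\,\psi(t_g)$. On the convex polytope $K := \{t\ge 0:\ \sum_g t_g \le 2\}$, the set of maximizers of a convex function is a face of $K$, and every such face contains at least one vertex of $K$; the vertices here are the origin and $\{2\ve_g\}_{g\in H}$. Since $\sigma$ is strictly increasing, the origin is strictly beaten by $2\ve_{g^*}$, and among $\{2\ve_g\}$ the strict gap $p_{g^*}>p_g$ together with $\sigma(\sqrt 2)>\sigma(0)$ singles out $2\ve_{g^*}$ as the unique vertex maximizer. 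The face of maximizers therefore collapses to $\{2\ve_{g^*}\}$, giving the peaked profile $s^\star_{g^*}=\sqrt 2$, $s^\star_g=0$ otherwise. To translate back, I use $\sum_g s_g^2 = \|\vu\|^2+\|\vv\|^2+2\sum_g u_g v_{g^{-1}h}$ together with Cauchy--Schwarz (noting $g\mapsto g^{-1}h$ is a bijection) to get $\sum_g s_g^2 \le (\|\vu\|+\|\vv\|)^2 \le 2$; tightness in both inequalities forces $\|\vu\|=\|\vv\|=\tfrac{1}{\sqrt 2}$ with $\vu$ proportional to the permuted $\vv$, which combined with $s^\star_g=0$ for $g\ne g^*$ pins down $\vu=\tfrac{1}{\sqrt 2}\ve_{g^*}$, $\vv=\tfrac{1}{\sqrt 2}\ve_{g^{*-1}h}$.

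\textbf{Case (B).} When $\phi$ is strictly decreasing, $\psi$ is strictly concave, so the problem is strictly concave maximization on a convex compact set and admits a unique global maximizer, fully determined by the KKT system of Lemma~\ref{lem:KKT-phi}. On the active set $A := \{g : s^\star_g > 0\}$, stationarity $p_g\,\phi(s^\star_g)=2\lambda$ inverts to $s^\star_g = \phi^{-1}(2\lambda/p_g)$ via the well-defined $\phi^{-1}$. If $\phi(0^+)=+\infty$, every $g$ is active; otherwise any $g$ with $2\lambda/p_g \ge \phi(0^+)$ is deactivated. The multiplier $\lambda>0$ is uniquely pinned down by the tight budget $\sum_{g\in A(\lambda)}\phi^{-1}(2\lambda/p_g)^2 = 2$, whose left-hand side is strictly monotone in $\lambda$. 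The ordering $p_i>p_j \Rightarrow s^\star_i>s^\star_j$ is then immediate from strict decrease of $\phi^{-1}$.

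\textbf{Main obstacle.} The convex-analytic skeleton above is standard; what requires genuine care is (i) upgrading Case (A) to a \emph{unique} maximizer when $\phi$ is only weakly monotone (so $\psi$ may be affine on subintervals and the face of maximizers could a priori be higher-dimensional), which is exactly where the strict gap $p_{g^*}>p_g$ must be deployed via the face/vertex argument rather than any differential criterion; and (ii) ensuring that the peaked $s$-profile back-translates to exactly the memorization pair $(\vu,\vv)$ and not some other configuration satisfying the weaker $\sum_g s_g^2=2$, which forces one to chase the equality conditions in both Cauchy--Schwarz and AM--GM simultaneously along the bijection $g\mapsto g^{-1}h$.
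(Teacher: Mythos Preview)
Your proof is correct and takes a genuinely different route from the paper. For Case (A) the paper argues by a direct squared-mass exchange: given two active coordinates $s_i \ge s_j > 0$ with $p_i > p_j$, it sets $s_i(t) = \sqrt{s_i^2 + t}$, $s_j(t) = \sqrt{s_j^2 - t}$ and computes $\Psi'(t) = \tfrac12[p_i\phi(s_i(t)) - p_j\phi(s_j(t))] \ge \tfrac12(p_i-p_j)\phi(s_j(t)) > 0$, so mass always flows toward the larger $p$ and the support collapses to $g^*$. Your substitution $t_g = s_g^2$ instead recasts the problem as maximizing the separable function $\sum_g p_g \psi(t_g)$ with $\psi'(t)=\tfrac12\phi(\sqrt t)$ over a simplex, reducing (A) to the vertex principle for convex maximization and (B) to uniqueness under strict concavity. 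This is cleaner and makes the dichotomy structurally transparent, whereas the paper's exchange argument is more elementary and avoids any appeal to convex-analytic facts. For Case (B) the two proofs coincide (both invoke Lemma~\ref{lem:KKT-phi}). One caveat: your intermediate claim that ``the set of maximizers of a convex function is a face of $K$'' is not literally true (e.g.\ $x^2$ on $[-1,1]$ has maximizer set $\{-1,1\}$, which is not a face); what is true is that the smallest face containing any maximizer lies entirely in the maximizer set. Your conclusion is unaffected, since writing any maximizer as a convex combination of vertices forces every vertex with positive weight to attain the maximum, and only $2\ve_{g^*}$ does. Your back-translation to $(\vu,\vv)$ via the equality conditions in Cauchy--Schwarz and AM--GM is a nice addition the paper does not spell out.
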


\begin{proof}
(A) \emph{Peaking when \(\phi\) is nondecreasing.}
Take any feasible \(s\) with two positive coordinates \(s_i\ge s_j>0\) and \(p_i>p_j\).
Define a squared-mass transfer preserving \(\sum s_g^2\):
\(s_i(t):=\sqrt{s_i^2+t}\), \(s_j(t):=\sqrt{s_j^2-t}\), and
\(\Psi(t):=p_i\sigma(s_i(t))+p_j\sigma(s_j(t))\).
Then
\[
\Psi'(t)=\tfrac12\big[p_i\phi(s_i(t)) - p_j\phi(s_j(t))\big]
\ \ge\ \tfrac12\big[(p_i-p_j)\phi(s_j(t))\big]\ >\ 0,
\]
because \(s_i(t)\ge s_j(t)\) and \(\phi\) is nondecreasing.
Hence \(\Psi\) increases with \(t\), so any two-support point can be strictly improved by pushing mass to the larger \(p\).
Iterating this collapse yields the single-support boundary \(s_{g^*}=\sqrt2\), others \(0\).
Uniqueness follows from strict inequality and the uniqueness of \(p_{g^*}\).

(B) \emph{Spreading when \(\phi\) is strictly decreasing.}
By Lemma~\ref{lem:KKT-phi}, the optimal active coordinates satisfy \(p_g\phi(s^\star_g)=2\lambda\).
Since \(\phi\) is strictly decreasing, \(\phi^{-1}\) exists and is strictly decreasing, yielding
\(s^\star_g=\phi^{-1}(2\lambda/p_g)\) on the active set; complementary slackness gives the thresholding
when \(\phi(0^+)<\infty\).
The budget \(\sum_g (s^\star_g)^2=2\) fixes \(\lambda\), and strict monotonicity implies the profile is strictly ordered by \(p_g\).
\end{proof}

\energyfewdata*
\begin{proof}
The conclusion follows directly from Thm.~\ref{thm:phi-mono}. 
\end{proof}
\textbf{Some discussions}. We know that  
\begin{itemize}
\item For power activations \(\sigma(x)=x^q\) (\(q\ge 2\)) have \(\phi(x)=q\,x^{q-2}\) nondecreasing; Thm.~\ref{thm:phi-mono}(A) gives memorization. In all these cases, the peaked solution is realized by even split
\(u=\tfrac{1}{\sqrt2}e_{g^*}\), \(v=\tfrac{1}{\sqrt2}e_{(g^*)^{-1}h}\);
any profile \(s^\star\) can be realized with, e.g., \(u_g=v_{g^{-1}h}=s^\star_g/2\).
\item ReLU on \([0,\infty)\): \(\sigma(x)=x\), \(\phi(x)=1/x\) strictly decreasing; Thm.~\ref{thm:phi-mono}(B) yields \(s^\star \propto p\).
\item SiLU/Swish/Tanh/Sigmoid: \(\phi\) strictly decreasing with \(\phi(0^+)=\infty\); Thm.~\ref{thm:phi-mono}(B) gives a strictly ordered spread \(s^\star_g=\phi^{-1}(2\lambda/p_g)\).
\end{itemize}

\section{Interactive Feature Learning (Sec.~\ref{sec:interactive-feature-learning})}
\subsection{Feature Repulsion (Sec.~\ref{sec:repulsive-similar-feature})}
\repulsionthm*
\begin{proof}
Let $Q := (\tilde F^\top \tilde F + \eta I)^{-1}$. Without loss of generality (by a column permutation similarity that preserves signs of the corresponding inverse entries), reorder columns so that the pair $(j,\ell)$ becomes $(1,2)$. Write the partition
\[
\tilde F \;=\; \bigl[\; \tilde\vf_1\ \ \tilde\vf_2\ \ \tilde F_r \;\bigr], 
\quad \tilde F_r:=\tilde F_{-(1,2)}\in\rr^{n\times (K-2)}.
\]
Then the ridge Gram matrix $G=\tilde F^\top \tilde F + \eta I_K$ acquires the $2\times 2$ / remainder block form
\[
G \;=\;
\begin{bmatrix}
a & b & \vu^\top \\
b & c & \vv^\top \\
\vu & \vv & H
\end{bmatrix},
\quad\text{where}\quad
\begin{aligned}
a &:= \tilde\vf_1^\top \tilde\vf_1 + \eta, 
& b &:= \tilde\vf_1^\top \tilde\vf_2, 
& \vu &:= \tilde F_r^\top \tilde\vf_1,\\
c &:= \tilde\vf_2^\top \tilde\vf_2 + \eta, 
& \vv &:= \tilde F_r^\top \tilde\vf_2,
& H &:= \tilde F_r^\top \tilde F_r + \eta I.
\end{aligned}
\]
Because $\eta>0$, $H$ is positive definite and hence invertible. The inverse of a block matrix is governed by the Schur complement. Define the $2\times 2$ Schur complement
\[
S \;:=\; 
\begin{bmatrix} a & b \\ b & c \end{bmatrix}
\;-\;
\begin{bmatrix} \vu^\top \\ \vv^\top \end{bmatrix}
H^{-1}
\begin{bmatrix} \vu & \vv \end{bmatrix}
\;=\;
\begin{bmatrix}
\alpha & \beta\\
\beta & \gamma
\end{bmatrix},
\]
where the entries are
\[
\alpha \;=\; a - \vu^\top H^{-1}\vu,\qquad
\beta \;=\; b - \vu^\top H^{-1}\vv,\qquad
\gamma \;=\; c - \vv^\top H^{-1}\vv.
\]
A standard block inversion formula (e.g., via Schur complements) yields that the top-left $2\times 2$ block of $G^{-1}$ equals $S^{-1}$. In particular, the off--diagonal entry of $Q=G^{-1}$ for indices $(1,2)$ is the off--diagonal entry of $S^{-1}$. Since
\[
S^{-1}
\;=\;
\frac{1}{\alpha\gamma-\beta^2}
\begin{bmatrix}
\gamma & -\beta\\
-\beta & \alpha
\end{bmatrix}
\quad\text{with}\quad \alpha\gamma-\beta^2>0
\]
(because $G\succ 0$ implies $S\succ 0$), we obtain
\[
q_{12} \;=\; (S^{-1})_{12} \;=\; -\,\frac{\beta}{\alpha\gamma-\beta^2}.
\]
It remains to identify $\alpha,\beta,\gamma$ in terms of ridge residuals with respect to $\tilde F_r$. Note that
\[
H \;=\; \tilde F_r^\top \tilde F_r + \eta I
\quad\Longrightarrow\quad
\tilde F_r H^{-1}\tilde F_r^\top \;=\; I_n - P_{\eta,r},
\]
by the definition $P_{\eta,r} := I - \tilde F_r H^{-1}\tilde F_r^\top$. Therefore
\[
\begin{aligned}
\alpha 
&= \tilde\vf_1^\top \tilde\vf_1 + \eta \;-\; \tilde\vf_1^\top \tilde F_r H^{-1}\tilde F_r^\top \tilde\vf_1
= \eta + \tilde\vf_1^\top\!\Bigl(I - \tilde F_r H^{-1}\tilde F_r^\top\Bigr)\tilde\vf_1
= \eta + \tilde\vf_1^\top P_{\eta,r}\tilde\vf_1,\\[2pt]
\beta 
&= \tilde\vf_1^\top \tilde\vf_2 \;-\; \tilde\vf_1^\top \tilde F_r H^{-1}\tilde F_r^\top \tilde\vf_2
= \tilde\vf_1^\top\!\Bigl(I - \tilde F_r H^{-1}\tilde F_r^\top\Bigr)\tilde\vf_2
= \tilde\vf_1^\top P_{\eta,r}\tilde\vf_2,\\[2pt]
\gamma 
&= \eta + \tilde\vf_2^\top P_{\eta,r}\tilde\vf_2.
\end{aligned}
\]
Substituting these identities into the expression for $q_{12}$ gives
\[
q_{12}
\;=\;
-\,\frac{\tilde\vf_1^\top P_{\eta,r}\tilde\vf_2}
{\bigl(\eta+\tilde\vf_1^\top P_{\eta,r}\tilde\vf_1\bigr)\bigl(\eta+\tilde\vf_2^\top P_{\eta,r}\tilde\vf_2\bigr) \;-\; \bigl(\tilde\vf_1^\top P_{\eta,r}\tilde\vf_2\bigr)^2}.
\]
The denominator is strictly positive (it is the determinant of the positive definite $2\times 2$ matrix $S$), hence
\[
\sign(q_{12}) \;=\; -\,\sign\!\bigl(\tilde\vf_1^\top P_{\eta,r}\tilde\vf_2\bigr).
\]
Undoing the preliminary permutation shows the same formula for the original indices $(j,\ell)$, which proves the sign claim.

Finally, since $Q$ is the inverse Gram with ridge, the $j$-th column of $\tilde FQ$ is
\[
(\tilde FQ)_{\bullet j} \;=\; \sum_{m=1}^K q_{mj}\,\tilde\vf_m
\;=\; q_{jj}\tilde\vf_j \;+\; \sum_{m\neq j} q_{mj}\,\tilde\vf_m.
\]
Because $q_{mj}$ has sign opposite to the ridge-residual similarity $\tilde\vf_m^\top P_{\eta,-mj}\tilde\vf_j$, features that are (residually) similar to $\tilde\vf_j$ enter with negative coefficients and hence subtract from $(\tilde FQ)_{\bullet j}$ along those directions—``repelling'' similar features and promoting specialization. This completes the proof.
\end{proof}

\subsection{Top-down Modulation (Sec.~\ref{sec:top-down-modulation})}
\topdownmodulation*
\begin{proof}
Fix a nontrivial isotype (irrep) $k$ and we have 
\[
\hat Y^{(k)}_{(\cdot,h),\,h'} \;=\; \operatorname{diag}\!\Big(R_h^\top\,(\Pi_k R_{h'}\Pi_k)\Big).
\]
Since $\Pi_k$ is central and idempotent, it commutes with $R_{h'}$ and $\Pi_k^2=\Pi_k$, hence
\[
\Pi_k R_{h'}\Pi_k \;=\; \Pi_k R_{h'} \;=\; R_{h'}\Pi_k.
\]
Expand the central idempotent in the group algebra using unitary irreps $\{C_k\}$ and characters $\chi_k$:
\begin{equation}
\Pi_k \;=\; \frac{d_k}{M}\sum_{g\in H}\overline{\chi_k(g)}\,R_g
\;=\; \frac{d_k}{M}\sum_{g\in H}\chi_k(g^{-1})\,R_g. \label{eq:pi_k}
\end{equation}
Therefore
\[
\Pi_k R_{h'} \;=\; \frac{d_k}{M}\sum_{g\in H}\overline{\chi_k(g)}\,R_g R_{h'}
\;=\; \frac{d_k}{M}\sum_{g\in H}\overline{\chi_k(g)}\,R_{g h'}.
\]
Taking the diagonal after the left shift by $R_h^\top$ gives
\[
\operatorname{diag}\!\big(R_h^\top(\Pi_k R_{h'})\big)
= \frac{d_k}{M}\sum_{g\in H}\overline{\chi_k(g)}\;\operatorname{diag}\!\big(R_h^\top R_{g h'}\big).
\]
Since $R_h^\top R_{g h'} = R_{h^{-1} g h'}$, we have
\[
\operatorname{diag}(R_h^\top R_{g h'})=
\begin{cases}
\vone_M, & h^{-1} g h' = e,\\[2pt]
\vzero, & \text{otherwise}.
\end{cases}
\]
Only the unique term $g=h h'^{-1}$ survives, so
\[
\operatorname{diag}\!\big(R_h^\top(\Pi_k R_{h'})\big)
= \frac{d_k}{M}\,\overline{\chi_k(h h'^{-1})}\,\vone_M
= \frac{d_k}{M}\,\chi_k(h'^{-1}h)\,\vone_M,
\]
where we used $\overline{\chi_k(a)}=\chi_k(a^{-1})$ for unitary irreps. Consequently,
\[
\boxed{\quad
\hat Y^{(k)}_{(\text{rows for block }h),\,h'}
\;=\;\frac{d_k}{M}\,\chi_k(h'^{-1}h)\,\vone_M.\quad}
\]
Summing over a subset $\mathcal S$ of isotypes yields
\[
\hat Y_{(\text{rows for block }h),\,h'}
\;=\;\sum_{k\in\mathcal S}\hat Y^{(k)}_{(\text{rows for block }h),\,h'}
\;=\;\frac{1}{M}\sum_{k\in\mathcal S} d_k\,\chi_k(h)\overline{\chi_k(h')}\,\vone_M.
\]  

Since summing over all $k\neq 0$ leads to $\hat Y = \tilde Y$ (Thm.~\ref{thm:predictedtarget}), for the residual $\hat Y - \tilde Y$ we have 
\[
[\hat Y - \tilde Y]_{(\text{rows for block }h),\,h'}
\;=\;\frac{1}{M}\sum_{k\neq 0,k\notin\mathcal S} d_k\,\chi_k(h)\overline{\chi_k(h')}\,\vone_M.
\]  
which means that $\hat Y - \tilde Y = \Phi_\cS \Phi_\cS^* \otimes \vone_M$, where $\Phi_\cS := \left[\sqrt{\frac{d_k}{M}} \chi_k(\cdot)\right]_{k\neq 0,k\notin\mathcal S}\in \cc^{M\times (\kappa(H) - |\cS| - 1)}$. Since $\tilde Y = P^\perp_1 \otimes \vone_M$, we have:
\[
G_F \propto (\hat Y - \tilde Y)\tilde Y^\top F = \left(\Phi_\cS \Phi_\cS^* \otimes \vone_M\vone_M^\top \right) F = \left(\Phi_\cS \otimes \vone_M \right)\left(\Phi_\cS \otimes \vone_M \right)^\ast F 
\] 

Therefore, the energy function $\cE$ now becomes 
\[
\cE_\cS = \frac12 \|(\Phi_\cS \otimes \vone_M)^\ast F\|_2^2 = \frac12 \|\Phi_\cS^\ast \vz\|_2^2 
\]
where $\vz = [z_h] = [\langle R_h, S \rangle_F] \in \cc^{M}$ defined in Eqn.~\ref{eq:z_h}. Computing each row $k$ in $\Phi_\cS^\ast \vz$ and use the property of projection matrix $\Pi_k$ (Eqn.~\ref{eq:pi_k}), we have:
\[
[\Phi_\cS^\ast \vz]_k = \langle \sum_{h\in H} \sqrt{\frac{d_k}{M}} \overline{\chi_k(h)} R_h, S\rangle = \sqrt{\frac{M}{d_k}} \langle \Pi_k, S\rangle
\]
In the $Q$ space, we have $\langle \Pi_k, S\rangle = \sum_{r=1}^{m_k}\tr(\hat S_{k,r})$ and therefore
\[
\cE_\cS = \frac12 \sum_{k\neq 0,k\notin\mathcal S} \frac{M}{d_k} \big|\langle \Pi_k, S\rangle\big|^2 = \frac{M}{2} \sum_{k\neq 0,k\notin\cS}\frac{1}{d_k} \Big|\sum_r \tr(\hat S_{k,r})\Big|^2 
\]
which is exactly the same form as the decomposition (Eqn.~\ref{eqn:energy_decomposition}) in Thm.~\ref{thm:local_maxima} (but a much cleaner derivation). Therefore, all the local maxima of $\cE_\cS$ are still in the same form as Thm.~\ref{thm:local_maxima}, but we just remove those local maxima that are in isotype/irreps $k\in\cS$, and focus on missing ones. 
\end{proof}

\subsection{Muon optimizers lead to diversity (Sec.~\ref{sec:muon-guiding})}
\muonsameasgf*
\begin{proof}
Let $G = [\nabla_{\vw_1}\cE, \nabla_{\vw_2}\cE, \ldots, \nabla_{\vw_K}\cE]$ be the gradient matrix. Let $G = UDV^\top$ be the singular value decomposition. Then Muon direction is $\hat G = UV^\top$ and thus the inner product between $\hat G$ and $G$ is 
\begin{equation}
    \inner{\hat G}{G}_F = \tr(\hat G^\top G) = \tr(VU^\top UDV^\top) = \tr(D) \ge 0
\end{equation} 
So Muon always follows the gradient direction and improve the objective. Furthermore, $\inner{\hat G}{G}_F = 0$ iff $D = 0$, which means that $G = 0$. So the stationary points of the Muon dynamics and the original gradient dynamics are identical.  
\end{proof}

\begin{lemma}[Proposition of Fr\'echet / log-Gumbel selection]
    \label{lem:frechet_selection}
Let $x_1,\dots,x_n$ be i.i.d.\ positive random variables with Fr\'echet($\alpha$) CDF
\[
F(x)=\exp\!\big(-x^{-\alpha}\big),\qquad x>0,\ \ \alpha>0,
\]
and let $w_1,\dots,w_n>0$ be fixed weights. Define
\[
i^* \;=\; \arg\max_{1\le j\le n}\; w_j\,x_j.
\]
Then
\[
\boxed{\quad \Pr\big(i^*=i\big)\;=\;\dfrac{w_i^{\alpha}}{\sum_{j=1}^n w_j^{\alpha}}\,, \qquad i=1,\dots,n.\quad}
\]
In particular, when $\alpha=1$,
\[
\boxed{\quad \Pr\big(i^*=i\big)\;=\;\dfrac{w_i}{\sum_{j=1}^n w_j}\,. \quad}
\]
\end{lemma}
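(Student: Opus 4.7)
My plan is to reduce the selection probability to a one-dimensional integral using the closure of the Fr\'echet family under positive scaling, and then evaluate it in closed form. First I would observe that if $x_j$ has CDF $F(x)=\exp(-x^{-\alpha})$, then $Y_j := w_j x_j$ has CDF
\begin{equation}
\Pr(Y_j\le y) \;=\; \Pr\!\left(x_j\le y/w_j\right) \;=\; \exp\!\big(-w_j^{\alpha}\, y^{-\alpha}\big),\qquad y>0,
\end{equation}
so $Y_j$ is again Fr\'echet with shape $\alpha$ and ``scale parameter'' $w_j^\alpha$. The $\{Y_j\}$ are independent because the $\{x_j\}$ are.

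Next I would compute $\Pr(i^*=i)$ by conditioning on $Y_i=y$. The density of $Y_i$ at $y$ is $f_i(y)=\alpha w_i^{\alpha} y^{-\alpha-1}\exp(-w_i^{\alpha}y^{-\alpha})$, and by independence
\begin{equation}
\Pr\!\left(Y_j<y\ \forall j\ne i \,\big|\, Y_i=y\right)
\;=\; \prod_{j\ne i}\exp\!\big(-w_j^{\alpha}y^{-\alpha}\big)
\;=\; \exp\!\Big(-(S-w_i^{\alpha})\, y^{-\alpha}\Big),
\end{equation}
where $S:=\sum_j w_j^{\alpha}$. Multiplying and integrating,
\begin{equation}
\Pr(i^*=i) \;=\; \int_0^\infty \alpha w_i^{\alpha}\, y^{-\alpha-1}\, \exp(-S\, y^{-\alpha})\,dy.
\end{equation}
Then I would substitute $u=y^{-\alpha}$, for which $du=-\alpha y^{-\alpha-1}dy$ and $u$ ranges over $(0,\infty)$, giving
\begin{equation}
\Pr(i^*=i) \;=\; w_i^{\alpha}\int_0^\infty e^{-Su}\,du \;=\; \frac{w_i^{\alpha}}{S} \;=\; \frac{w_i^{\alpha}}{\sum_{j=1}^n w_j^{\alpha}},
\end{equation}
which is the claim. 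The special case $\alpha=1$ is immediate.

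As a sanity check (and alternative proof sketch), I would point out the log-Gumbel connection: since $\alpha\log x_j$ has CDF $\exp(-e^{-t})$ (standard Gumbel), $\arg\max_j w_j x_j = \arg\max_j(\alpha\log w_j + G_j)$ with $G_j$ i.i.d.\ standard Gumbel, and the classical Gumbel-max identity yields the same softmax-in-$w^\alpha$ formula. There is no genuine obstacle here; the only mild point of care is the change of variables, where one must track the sign of $du$ and verify that the limits swap cleanly, which they do because $y\mapsto y^{-\alpha}$ is a decreasing bijection of $(0,\infty)$ onto itself.
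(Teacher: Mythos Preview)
Your proof is correct and essentially matches the paper's approach: both set $Y_j=w_jx_j$, use the Fr\'echet scaling $\Pr(Y_j\le t)=\exp(-w_j^\alpha t^{-\alpha})$, and compute $\Pr(i^*=i)$ from the joint density that $Y_i=t$ and all $Y_j<t$ for $j\ne i$. The only cosmetic difference is that the paper divides this joint density by $f_{\max}(t)$ to observe that $\Pr(i^*=i\mid \max_j Y_j=t)=w_i^\alpha/S$ is constant in $t$, whereas you integrate directly via the substitution $u=y^{-\alpha}$; both routes land on $w_i^\alpha/S$ with the same amount of work, and your Gumbel-max sanity check is a nice addition the paper does not include.
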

\begin{proof}
Set $Y_j:=w_j x_j$. For $t>0$,
\[
\Pr\!\big(\max_{j}Y_j\le t\big)
=\prod_{j=1}^n F\!\Big(\frac{t}{w_j}\Big)
=\exp\!\Big(-\sum_{j=1}^n (w_j/t)^{\alpha}\Big).
\]
Differentiating gives the density of the maximum:
\[
f_{\max}(t)
=\frac{d}{dt}\Pr\!\big(\max_j Y_j\le t\big)
=\Big(\sum_{j=1}^n \alpha\,w_j^{\alpha}\,t^{-\alpha-1}\Big)\,
\exp\!\Big(-\sum_{j=1}^n (w_j/t)^{\alpha}\Big).
\]
The density that ``$i$ achieves the maximum at level $t$'' is
\[
f_{Y_i}(t)\,\prod_{j\ne i}F\!\Big(\frac{t}{w_j}\Big)
=\alpha\,w_i^{\alpha}\,t^{-\alpha-1}\,
\exp\!\Big(-\sum_{j=1}^n (w_j/t)^{\alpha}\Big).
\]
Hence the conditional probability that $i$ is the argmax given $\max_j Y_j=t$ is
\[
\Pr\!\big(i^*=i \mid \max_j Y_j=t\big)
=\frac{\alpha\,w_i^{\alpha}\,t^{-\alpha-1}}{\sum_{j=1}^n \alpha\,w_j^{\alpha}\,t^{-\alpha-1}}
=\frac{w_i^{\alpha}}{\sum_{j=1}^n w_j^{\alpha}},
\]
which is independent of $t$. Averaging over $t$ yields the stated result.
\end{proof}

\def\valpha{\boldsymbol{\alpha}}

\begin{lemma}[The properties of the dynamics in Eqn.~\ref{eq:changing_Aw}]
    \label{lemma:property-of-dyn-muon}
    The dynamics always converges to $\vzeta_{l^*}$ for $l^* = \arg\max_{l} \mu_l \alpha_l(0)$. That is, the initial leader always win.  
\end{lemma}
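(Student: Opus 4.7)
The plan is to reduce the vector ODE to scalar dynamics in the spectral coordinates $\alpha_l$ of $\vw=\sum_l\alpha_l\vzeta_l$ and then use a ratio-monotonicity argument to show that the initial leader $l^*=\arg\max_l \mu_l\alpha_l(0)$ outruns every other coordinate, forcing $\vw\to\vzeta_{l^*}$. Throughout I assume all $\alpha_l(0)>0$, which is the setting of Thm.~\ref{thm:muonthm} (inverse-exponential/Fr\'echet initialization on $(0,\infty)$).

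First, I would rewrite the dynamics coordinatewise. In the interior $\|\vw\|_2<1$ we have $A(\vw)\vw=\sum_l\mu_l\alpha_l^2\,\vzeta_l$, so $\dot\alpha_l=\mu_l\alpha_l^2\ge 0$: each $\alpha_l$ is nondecreasing and stays positive. Once the trajectory meets the unit sphere, the $\|\vw\|_2\le 1$ constraint is enforced by the tangential projection $(I-\vw\vw^\top)$, yielding $\dot\alpha_l=\mu_l\alpha_l^2-\alpha_l C(\vw)$ with $C(\vw):=\vw^\top A(\vw)\vw=\sum_k\mu_k\alpha_k^3$. A direct check gives $\tfrac{d}{dt}\|\vw\|^2=0$ on the sphere, so the sphere is invariant; moreover $\dot\alpha_l$ is proportional to $\alpha_l$, so positivity is retained there too.

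Second, I would prove that the order of $\mu_l\alpha_l$ is preserved, and then extract decay of ratios. Set $g_{ij}(t):=\mu_i\alpha_i(t)-\mu_j\alpha_j(t)$. A short calculation gives $\dot g_{ij}=g_{ij}\cdot(\mu_i\alpha_i+\mu_j\alpha_j)$ in the interior and $\dot g_{ij}=g_{ij}\cdot(\mu_i\alpha_i+\mu_j\alpha_j-C)$ on the sphere; in both phases $g_{ij}$ satisfies a linear homogeneous scalar ODE, so its sign is preserved and $l^*$ remains the leader. For the ratio $r_l:=\alpha_l/\alpha_{l^*}$ with $l\neq l^*$, the $C(\vw)$ terms cancel in $\dot\alpha_l\alpha_{l^*}-\alpha_l\dot\alpha_{l^*}$, and one obtains, in both phases,
\[
\dot r_l \;=\; r_l\bigl(\mu_l\alpha_l-\mu_{l^*}\alpha_{l^*}\bigr),
\]
so $r_l$ is strictly decreasing since the bracket is negative throughout.

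Finally, I would conclude convergence. On the sphere $\sum_l\alpha_l^2=1$, boundedness of the trajectory together with monotone decay of each $r_l$ yields $r_l\to 0$ for all $l\neq l^*$, hence $\alpha_{l^*}\to 1$ and $\alpha_l\to 0$, i.e., $\vw\to\vzeta_{l^*}$. The main obstacle is making sure the ratio trick survives the interior-to-boundary transition; this is handled precisely by observations (i) and (ii) above, namely that $C(\vw)$ cancels in $\dot r_l$, and once on the sphere $\alpha_{l^*}$ is bounded away from $0$ (otherwise $\sum_l\alpha_l^2<1$), so the gap $\mu_{l^*}\alpha_{l^*}-\mu_l\alpha_l$ is bounded below by a positive constant and the decay of each $r_l$ is in fact exponential.
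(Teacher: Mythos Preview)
Your proof is correct and follows essentially the same route as the paper's: both pass to spectral coordinates, derive the interior dynamics $\dot\alpha_l=\mu_l\alpha_l^2$ and the sphere-projected dynamics $\dot\alpha_l=\mu_l\alpha_l^2-\alpha_l C$, and then use a ratio-monotonicity argument (the paper via $\rho_{ik}=\mu_i\alpha_i/\mu_k\alpha_k$ and $w_j=\alpha_j^2$, you via the sign-preserving differences $g_{ij}=\mu_i\alpha_i-\mu_j\alpha_j$ and the ratios $\alpha_l/\alpha_{l^*}$) to show the initial leader in $\mu_l\alpha_l$ is never overtaken and that all other coordinates die. Your last inference---that $\alpha_{l^*}$ bounded away from $0$ forces the gap $\mu_{l^*}\alpha_{l^*}-\mu_l\alpha_l$ to be bounded below---needs one more line of argument (as stated it does not rule out $\mu_l\alpha_l\uparrow\mu_{l^*}\alpha_{l^*}$), but the paper makes the analogous leap (``$w_\ell/w_j$ strictly increasing $\Rightarrow w_j/w_\ell\to 0$'') without further justification either.
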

\begin{proof}
    Note that due to orthogonality of $\{\vzeta_l\}$, the dynamics can be written as 
\[
\dot \alpha_j = \mu_j \alpha_j^2, \qquad \mu_j>0,
\]
with the constraint $\sum_{j=1}^L \alpha_j^2 \le 1$.
Define
\[
r_j := \mu_j \alpha_j.
\]

\textbf{Interior}. In the interior, we have
\[
\dot r_j = \mu_j \dot\alpha_j = \mu_j (\mu_j \alpha_j^2) = r_j^2.
\]
For any pair $i,k$ define the ratio
\[
\rho_{ik} := \frac{r_i}{r_k}.
\]
Its derivative is
\[
\dot \rho_{ik} = \frac{\dot r_i}{r_k} - \frac{r_i}{r_k^2} \dot r_k
= \frac{r_i^2}{r_k} - \frac{r_i}{r_k^2} r_k^2
= \rho_{ik}(r_i-r_k).
\]
Equivalently,
\[
\frac{d}{dt} \log \frac{r_i}{r_k} = r_i - r_k. \tag{1}
\]
Thus if $r_\ell(0) > r_j(0)$, then $\frac{d}{dt} \log(r_\ell/r_j) > 0$ and 
$\rho_{\ell j}(t)$ is strictly increasing. Hence a strict leader in $r$ cannot be overtaken in the interior.

\textbf{Boundary region ($\sum_j \alpha_j^2 = 1$)}. On the unit sphere, the projected dynamics is
\[
\dot\alpha_j = \mu_j \alpha_j^2 - \lambda \alpha_j,
\qquad 
\lambda = \sum_{k=1}^L \mu_k \alpha_k^3.
\]
In terms of $r_j$,
\[
\dot r_j = r_j(r_j - \nu), 
\qquad 
\nu = \sum_{k=1}^L \alpha_k^2 r_k = \sum_{k=1}^L \frac{r_k^2}{\mu_k^2} \, r_k.
\]
For the ratio $\rho_{ik} = r_i/r_k$ we again obtain
\[
\dot \rho_{ik} = \rho_{ik}(r_i-r_k)
\quad \Longrightarrow \quad
\frac{d}{dt}\log\frac{r_i}{r_k} = r_i - r_k. \tag{2}
\]

\textbf{Monotonicity of ratios}. From (1)–(2), if $r_\ell(0) > r_j(0)$ then
\[
\frac{d}{dt} \log \frac{r_\ell}{r_j} > 0 \quad \forall t,
\]
so $\rho_{\ell j}(t) = r_\ell(t)/r_j(t)$ is strictly increasing for every $j \neq \ell$.
Thus a strict leader $\ell$ remains the unique leader for all time.

\textbf{Convergence to the vertex}. Define weights
\[
w_j := \alpha_j^2 = \frac{r_j^2}{\mu_j^2}, 
\qquad \sum_j w_j = 1.
\]
Their dynamics is
\[
\dot w_j = 2w_j(r_j-\nu).
\]
Taking ratios,
\[
\frac{d}{dt} \log \frac{w_i}{w_k}
= 2(r_i-r_k).
\]
In particular, $\frac{w_\ell}{w_j}$ is strictly increasing for every $j\neq \ell$.
Therefore
\[
\frac{w_j(t)}{w_\ell(t)} \to 0 \quad (j\neq \ell),
\]
implying $w_\ell(t)\to 1$ and $w_j(t)\to 0$. Hence
\[
\valpha(t)\to \ve_\ell \qquad \text{as } t\to\infty.
\]
\end{proof}

\begin{lemma}[Muon projection]
    \label{lemma:muon-projection}
For the matrix $A = [Q, \vv]$ where $Q$ is a column orthonormal matrix and $\vv$ is a vector with small magnitude, its Muon regulated version $\hat A = [\hat A_1, \hat \vv]$ takes the following form:
\begin{equation}
    \hat\vv = \left(\frac{\vv_\perp}{\|\vv_\perp\|} + \frac{\vv_{\parallel}}{1 + \|\vv_\perp\|}\right) + O(\|\vv_\perp\|^2)
\end{equation}
where $\vv_{\parallel} = QQ^\top\vv$ and $\vv_\perp = I - QQ^\top \vv$. 
\end{lemma}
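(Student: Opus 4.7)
The plan is to compute Muon's polar factor $\hat A = A(A^\top A)^{-1/2}$ in closed form by exploiting the block structure of $A^\top A$. Write $\mathbf{c} := Q^\top \vv$, so that $\vv_\parallel = Q\mathbf{c}$; set $s := \|\mathbf{c}\| = \|\vv_\parallel\|$ and $\epsilon := \|\vv_\perp\|$. Then
\[
A^\top A \;=\; \begin{pmatrix} I_k & \mathbf{c} \\ \mathbf{c}^\top & s^2 + \epsilon^2 \end{pmatrix}.
\]
Every vector of the form $(\vy, 0)^\top$ with $\vy\perp\mathbf{c}$ is a unit eigenvector of $A^\top A$, so the nontrivial action is confined to the two-dimensional subspace spanned by $(\mathbf{c}/s, 0)^\top$ and $(0, 1)^\top$. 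On this subspace $A^\top A$ restricts to $B = \bigl(\begin{smallmatrix} 1 & s \\ s & s^2+\epsilon^2 \end{smallmatrix}\bigr)$, with the convenient invariants $\det B = \epsilon^2$ and $\tr B = 1+s^2+\epsilon^2$.

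I would then apply the closed-form $2\times 2$ square root $M^{1/2} = (M+\sqrt{\det M}\,I)/\sqrt{\tr M + 2\sqrt{\det M}}$, verified by the Cayley--Hamilton identity $M^2 = (\tr M)\,M - (\det M)\,I$, and invert to obtain
\[
(B^{-1/2})_{12} \;=\; \frac{-s}{\epsilon\sqrt{s^2+(1+\epsilon)^2}}, \qquad (B^{-1/2})_{22} \;=\; \frac{1+\epsilon}{\epsilon\sqrt{s^2+(1+\epsilon)^2}}.
\]
Lifting via the identity $(A^\top A)^{-1/2} = I + U(B^{-1/2} - I)U^\top$, where $U$ has orthonormal columns $(\mathbf{c}/s, 0)^\top$ and $(0, 1)^\top$, the last column of $(A^\top A)^{-1/2}$ splits as $\vb_{12} = (\mathbf{c}/s)\,(B^{-1/2})_{12}$ in the first $k$ coordinates and $b_{22} = (B^{-1/2})_{22}$ in the last. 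Since $\hat\vv = Q\vb_{12} + b_{22}\vv$, substituting these values and using $Q\mathbf{c} = \vv_\parallel$ with $\vv = \vv_\parallel + \vv_\perp$ collapses to the exact form
\[
\hat\vv \;=\; \frac{-\vv_\parallel + (1+\epsilon)(\vv_\parallel + \vv_\perp)}{\epsilon\sqrt{s^2+(1+\epsilon)^2}} \;=\; \frac{\vv_\parallel + (1+\epsilon)\,\vv_\perp/\|\vv_\perp\|}{\sqrt{s^2+(1+\epsilon)^2}}.
\]

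Finally, under the smallness hypothesis on $\vv$ (so both $s$ and $\epsilon$ are small), Taylor-expand the normalization as $1/\sqrt{s^2 + (1+\epsilon)^2} = 1/(1+\epsilon) + O(s^2)$; this immediately yields $\hat\vv = \vv_\perp/\|\vv_\perp\| + \vv_\parallel/(1 + \|\vv_\perp\|) + O(\|\vv\|^2)$, matching the claimed expression. The main obstacle is the careful orthogonal-decomposition bookkeeping that lifts the $2\times 2$ block $B$ back to $\rr^{k+1}$ and the verification that the rank-one perturbation of identity $A^\top A$ really has the simple isotypic-plus-two-dimensional structure used above; once that reduction is in place, the rest is an elementary $2\times 2$ matrix square-root calculation followed by a one-line expansion.
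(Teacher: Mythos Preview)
Your proof is correct and takes a genuinely different route from the paper's. The paper treats the more general case $A=[Q,B]$ with $B$ a matrix and computes $A(A^\top A)^{-1/c}$ by writing $G=A^\top A=G_0+H$ with $G_0=\mathrm{diag}(I,T)$ and applying the first-order expansion $(I+E)^{-1/c}=I-\tfrac{1}{c}E+O(\|E\|^2)$ in the small parameter $C=Q^\top B$; only at the very end does it specialize to $c=2$ and $B=\vv$. You instead exploit that for a single column $\vv$ the Gram matrix $A^\top A$ acts nontrivially only on a two-dimensional invariant subspace, reduce to the explicit $2\times2$ block $B=\bigl(\begin{smallmatrix}1&s\\ s& s^2+\epsilon^2\end{smallmatrix}\bigr)$, and compute $B^{-1/2}$ exactly via the Cayley--Hamilton square-root formula before expanding. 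Your approach is more elementary and delivers an \emph{exact} closed form $\hat\vv=\bigl(\vv_\parallel+(1+\epsilon)\vv_\perp/\|\vv_\perp\|\bigr)/\sqrt{s^2+(1+\epsilon)^2}$ prior to any approximation, whereas the paper's perturbative argument generalizes cleanly to matrix $B$ and arbitrary power $-1/c$. Both arrive at the same first-order expression with error $O(\|\vv_\parallel\|^2)$ (which you correctly upper-bound by $O(\|\vv\|^2)$; the $O(\|\vv_\perp\|^2)$ in the lemma statement appears to be a typo for the parallel component).
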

\begin{proof}
Given \(A=[Q,B]\) with \(Q^\top Q=I_k\), write \(B=QC+B_\perp\) where
\(C:=Q^\top B\in\mathbb{R}^{k\times m}\) and \(B_\perp:=(I-QQ^\top)B\).

Let \(T:=B_\perp^\top B_\perp\succ0\). For \(c>0\) define
\[
\widehat A^{(c)} \;=\; A\,(A^\top A)^{-1/c},
\qquad
\widehat A^{(c)} = \big[\widehat A^{(c)}_1,\ \widehat A^{(c)}_2\big].
\]
We derive a first-order (in \(C\)) formula for the last block \(\widehat A^{(c)}_2\).

The exact Gram matrix is
\[
G:=A^\top A
=\begin{bmatrix}
I_k & C\\
C^\top & C^\top C + T
\end{bmatrix}
= G_0 + H,\qquad
G_0:=\mathrm{diag}(I_k,T),\quad
H:=\begin{bmatrix}0&C\\ C^\top & C^\top C\end{bmatrix}.
\]
Treat \(C\) as small. To first order in \(C\) we may drop the quadratic block:
\[
H \;=\; \begin{bmatrix}0&C\\ C^\top & 0\end{bmatrix} \;+\; O(\|C\|^2).
\]

\textbf{Diagonalizing $T$}. Let \(T=U\Lambda U^\top\) with \(\Lambda=\mathrm{diag}(\lambda_1,\dots,\lambda_m)\), \(\lambda_j>0\).
Define the block orthogonal change of basis
\[
P:=\mathrm{diag}(I_k,\,U)\quad\Rightarrow\quad
\widetilde G:=P^\top G P,\quad
\widetilde G_0:=P^\top G_0 P=\mathrm{diag}(I_k,\Lambda),\quad
\widetilde H:=P^\top H P=\begin{bmatrix}0&\widetilde C\\ \widetilde C^\top&0\end{bmatrix},
\]
where \(\widetilde C:=C\,U\).
All first-order statements can be done in this basis and then mapped back by \(P\).

\textbf{First-order Taylor Expansion}. Now let's do the Taylor expansion. Write
\[
\widetilde G
= \widetilde G_0 + \widetilde H
= \widetilde G_0^{1/2}\Big(I + \underbrace{\widetilde G_0^{-1/2}\,\widetilde H\,\widetilde G_0^{-1/2}}_{=:E}\Big)\widetilde G_0^{1/2}.
\]
Since \(\widetilde G_0=\mathrm{diag}(I_k,\Lambda)\),
\[
E \;=\;
\begin{bmatrix}
0 & \widetilde C\,\Lambda^{-1/2}\\
\Lambda^{-1/2}\widetilde C^\top & 0
\end{bmatrix}
\qquad\text{is }O(\|C\|).
\]
For the scalar function \(f(x)=x^{-1/c}\),
\[
(I+E)^{-1/c} \;=\; I - \frac{1}{c}E \;+\; O(\|E\|^2).
\]
Therefore
\[
\widetilde G^{-1/c}
= \widetilde G_0^{-1/2}\,(I+E)^{-1/c}\,\widetilde G_0^{-1/2}
= \widetilde G_0^{-1/c}
\;-\;\frac{1}{c}\,\widetilde G_0^{-1/2}E\,\widetilde G_0^{-1/2}
\;+\;O(\|C\|^2).
\]
Compute the blocks using \(\widetilde G_0^{-1/2}=\mathrm{diag}(I_k,\Lambda^{-1/2})\):
\[
\widetilde G_0^{-1/2}E\,\widetilde G_0^{-1/2}
=
\begin{bmatrix}
0 & \widetilde C\,\Lambda^{-1}\\
\Lambda^{-1}\widetilde C^\top & 0
\end{bmatrix}.
\]
Hence, to first order,
\begin{equation}
\label{eq:Gtilde-exp}
\widetilde G^{-1/c}
=
\begin{bmatrix}
I_k & 0\\
0 & \Lambda^{-1/c}
\end{bmatrix}
-
\frac{1}{c}
\begin{bmatrix}
0 & \widetilde C\,\Lambda^{-1}\\
\Lambda^{-1}\widetilde C^\top & 0
\end{bmatrix}
\;+\;O(\|C\|^2).
\end{equation}

\textbf{Back to the original space}. Now 
\[
G^{-1/c} \;=\; P\,\widetilde G^{-1/c}\,P^\top.
\]
Using \eqref{eq:Gtilde-exp} and \(P=\mathrm{diag}(I_k,U)\),
\[
G^{-1/c}
=
\begin{bmatrix}
I_k & 0\\[2pt]
0 & U\,\Lambda^{-1/c}\,U^\top
\end{bmatrix}
-
\frac{1}{c}
\begin{bmatrix}
0 & C\,U\,\Lambda^{-1}U^\top\\[2pt]
U\,\Lambda^{-1}U^\top\,C^\top & 0
\end{bmatrix}
\;+\;O(\|C\|^2).
\]
Since \(U\,\Lambda^{-1}U^\top=T^{-1}\) and \(U\,\Lambda^{-1/c}U^\top=T^{-1/c}\),
\[
G^{-1/c}
=
\begin{bmatrix}
I_k & 0\\[2pt]
0 & T^{-1/c}
\end{bmatrix}
-
\frac{1}{c}
\begin{bmatrix}
0 & C\,T^{-1}\\[2pt]
T^{-1}C^\top & 0
\end{bmatrix}
\;+\;O(\|C\|^2).
\]

Now multiply
\[
\widehat A^{(c)} \;=\; [\,Q,\ QC+B_\perp\,]\;G^{-1/c}.
\]
Taking the \emph{last \(m\) columns} (the 2nd block) and keeping first-order terms:
\[
\begin{aligned}
\widehat A^{(c)}_2
&= Q\Big(-\frac{1}{c}C\,T^{-1}\Big) \;+\; (QC+B_\perp)\,T^{-1/c} \;+\; O(\|C\|^2)\\[2pt]
&= B_\perp\,T^{-1/c} \;+\; Q\Big(C\,T^{-1/c} - \frac{1}{c}C\,T^{-1}\Big) \;+\; O(\|C\|^2).
\end{aligned}
\]
Factor the \(Q\)-part columnwise via the spectral calculus of \(T\).
If \(T=U\Lambda U^\top\), then on each eigenvalue \(\lambda\) the scalar factor is
\[
\lambda^{-1/c} - \frac{1}{c}\lambda^{-1}
= \frac{1 - \lambda^{\,1-1/c}}{1-\lambda}.
\]
Thus, in matrix form,
\[
C\,T^{-1/c} - \frac{1}{c}C\,T^{-1}
= C\,\big(I - T^{\,1-1/c}\big)\,(I - T)^{-1}.
\]

and we have 
\begin{equation}
\label{eq:final}
\boxed{
\qquad
\widehat A^{(c)}_2
\;=\;
B_\perp\,T^{-1/c}
\;+\;
B_{\parallel}\,\big(I - T^{\,1-1/c}\big)\,(I - T)^{-1}
\;+\; O(\|C\|^2).
\qquad
}
\end{equation}
where $B_{\parallel} = QQ^\top B$.

For polar case \(c=2\), the operator becomes \((I - T^{1/2})(I - T)^{-1}\). For $B = \vv$, we have $T = B_\perp^\top B_\perp = \|\vv_\perp\|^2_2$ and the conclusion follows.  
\end{proof}

\begin{lemma}[Bound of $T_0$]
\begin{equation}
    \label{lemma:T0}
    T_0 \ge \max\left(\min_{l=1}^L 1/p_l, L\sum_{l=1}^L 1/l\right). 
\end{equation}
\end{lemma}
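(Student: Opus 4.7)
\textbf{The plan} is to recognize the statement as a lower bound on the weighted coupon-collector time. By Lemma~\ref{lem:frechet_selection}, optimizing the nodes $\vw_k$ sequentially (in the non-Muon regime of the dynamics in Eqn.~\ref{eq:changing_Aw}) yields i.i.d.\ draws that land at $\vzeta_l$ with probability $p_l$. Let $T_l$ denote the index of the first draw that reaches $\vzeta_l$, so that $T_0 = \ee[\max_l T_l]$. I will prove the two pieces of the maximum separately and then combine them.

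\textbf{First}, I would establish $T_0 \ge 1/\min_l p_l$ by observing that each $T_l$ is geometric with parameter $p_l$, so $\ee[T_l] = 1/p_l$; then the trivial inequality $\max_l T_l \ge T_{l^\ast}$ with $l^\ast = \arg\min_l p_l$, followed by taking expectations, closes this piece.

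\textbf{Second}, I would prove $T_0 \ge L\sum_{l=1}^L 1/l$ via Poissonization: embed the discrete draws into continuous time by letting each $\vzeta_l$ ``arrive'' through an independent Poisson process of rate $p_l$. The first arrival times become \emph{independent} exponentials $T_l \sim \mathrm{Exp}(p_l)$, and $T_0 = \int_0^\infty \bigl(1 - \prod_l (1 - e^{-p_l t})\bigr)\,dt$. Two inequalities collapse the integrand to a form free of the particular $p_l$: AM--GM gives $\prod_l(1 - e^{-p_l t}) \le \bigl(1 - \tfrac{1}{L}\sum_l e^{-p_l t}\bigr)^L$, and Jensen applied to the convex map $x\mapsto e^{-x}$, \emph{crucially using} the simplex constraint $\sum_l p_l = 1$ so that $\tfrac{1}{L}\sum_l p_l = 1/L$, gives $\tfrac{1}{L}\sum_l e^{-p_l t} \ge e^{-t/L}$. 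Combining, $T_0 \ge \int_0^\infty \bigl(1 - (1 - e^{-t/L})^L\bigr)\,dt$; the substitutions $s = t/L$ and then $u = 1 - e^{-s}$ turn this integral into $L\sum_{k=1}^L 1/k$, as desired.

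\textbf{The hard part} will be the second piece: the AM--GM then Jensen chain is clean but hinges on the simplex constraint $\sum p_l = 1$, without which one cannot force the uniform envelope $(1 - e^{-t/L})^L$. An alternative approach proves Schur-convexity of the coupon-collector time in $(p_1, \ldots, p_L)$ and invokes majorization by the uniform distribution, but that route is considerably more cumbersome. A minor technicality is the discrete-versus-continuous (Poisson) correspondence, which contributes at most an additive $O(1)$ discrepancy and does not affect the stated lower bound.
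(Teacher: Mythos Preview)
Your proposal is correct and follows essentially the same route as the paper: both use the integral representation $T_0=\int_0^\infty\bigl(1-\prod_l(1-e^{-p_lt})\bigr)\,dt$, bound the product above by $(1-e^{-t/L})^L$ via convexity, and evaluate the resulting integral to $L\sum_{l=1}^L 1/l$. The only cosmetic difference is that the paper obtains the product bound in one Jensen step on the concave map $p\mapsto\log(1-e^{-pt})$, whereas you split it into AM--GM followed by Jensen on $e^{-x}$; also, your caveat about an ``$O(1)$ discrepancy'' from Poissonization is unnecessary, since by Wald's identity the Poissonized and discrete expectations coincide exactly (this is precisely the Flajolet formula the paper cites).
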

\begin{proof}
$T_0 \ge \min_l 1/p_l$ since the expected time to collect all the coupons is always larger than collecting the rarest coupon alone. 

To prove $T_0 \ge L\sum_{l=1}^L 1/l$, fix $t > 0$ and consider the function
\[
h(p) = \log\!\bigl(1 - e^{-pt}\bigr), \qquad p>0.
\]
A direct computation shows
\[
h''(p) = -\frac{t^2}{4\sinh^2(pt/2)} < 0,
\]
so $h$ is concave. By Jensen’s inequality and $\sum_i p_i = 1$,
\[
\sum_{i=1}^L \log(1 - e^{-p_i t})
\;\le\; L \log\!\Bigl(1 - e^{-t/L}\Bigr).
\]
Exponentiating gives the pointwise bound
\[
\prod_{i=1}^L (1 - e^{-p_i t}) 
\;\le\; (1 - e^{-t/L})^L.
\]
Therefore
\[
\mathbb{E}[T_0] \;\ge\; \int_0^\infty \Bigl( 1 - (1 - e^{-t/L})^L \Bigr)\, dt.
\]

To evaluate the integral, set $u = e^{-t/L}$, so $dt = -L\,du/u$ and $t:0\to\infty$ maps to $u:1\to 0$:
\[
\int_0^\infty \Bigl(1 - (1 - e^{-t/L})^L\Bigr)\,dt
= L \int_0^1 \frac{1 - (1-u)^L}{u}\,du = L\int_0^1 \sum_{l=0}^{L-1} (1 - u)^l du = L \sum_{l=0}^{L-1} \frac{1}{l+1} 
\]
Thus the conclusion holds. Equality holds if and only if $p_1 = \cdots = p_L = 1/L$, since that is the case of equality in Jensen.
\end{proof}

\muonthm*
\begin{proof}
From Lemma~\ref{lemma:property-of-dyn-muon}, we know that the final mode $\vzeta_l$ that the nodes converge into is the one with largest initial $\alpha_l$: 
\begin{equation}
    \Pr[\vw\rightarrow \vzeta_l] = \Pr[l = \arg\max_{l'} \mu_{l'} \alpha_{l'}(0)] 
\end{equation}
By Lemma~\ref{lem:frechet_selection}, we have $\Pr[\vw\rightarrow \vzeta_l] = p_l := \mu_l^a / \sum_l \mu_l^a$. 

\textbf{Independent feature learning}. In this case, getting all local modes $\{\vzeta_l\}$ is identical to the coupon collector problem with $L$ coupons. With the property of the distribution (Lemma~\ref{lem:frechet_selection}), we know that the probability of getting $l$-th local maxima is $p_l := \mu_l^a / \sum_l \mu_l^a$. 

Therefore, the expected number of trials to collect all local maxima is~\citep{flajolet1992birthday}: 
\begin{equation}
    T_0 = \int_0^{+\infty} \left(1 - \prod_{l=1}^L (1 - e^{-p_l t}) \right) dt 
\end{equation}
Note that $T_0 \ge \max\left(1/\min_l p_l, L\sum_{l=1}^L 1/l\right)$ (Lemma~\ref{lemma:T0}). Since each node is independently optimized, we need $K \sim T_0$ to collect all local maxima in $K$ hidden nodes with high probability. 

\textbf{Muon guiding}. Consider the following setting that we optimize the hidden nodes ``incrementally''. When learning the weights of node $j$, we assume all the previous nodes (node $1$ to node $j-1$) have been learned, i.e., they have converged to one of the ground truth bases $\{\vzeta_l\}$, but still keep the gradients of them (after deduplication) in the Muon update. Let $S_{j-1} \subseteq [L] = \{1, \ldots, L\}$ be the subset of local maxima that have been collected. 

By Lemma~\ref{lemma:muon-projection}, we know that 
\begin{equation}
\hat\vg_j = \frac{1}{\|\vg_{j,\perp}\|}\left(\vg_{j,\perp} + \frac{\|\vg_{j,\perp}\|}{1 + \|\vg_{j,\perp}\|}\vg_{j,\parallel}\right) + O(\|\vg_{j,\perp}\|^2) 
\end{equation}
where $\vg_{j,\parallel} = P_{j-1}P^\top_{j-1}\vg_j$ and $\vg_{j,\perp} = \vg_j - \vg_{j,\parallel}$. Here $P_{j-1} = [\vzeta_s]_{s\in S_{j-1}}$ is the projection matrix formed by the previous $j-1$ nodes. Since 
\begin{equation}
    \|\vg_{j,\perp}\| \le \|\vg_j\| = \|\sum_l \lambda_l(\alpha_l)\alpha_l\vzeta_l\| = |\sum_l (\lambda_l(\alpha_l)\alpha_l)^2| \le |\sum_l \alpha^2_l| \le 1 
\end{equation}
We have $\frac{\|\vg_{j,\perp}\|}{1 + \|\vg_{j,\perp}\|} \le 1/2$. Therefore, this means that the parallel components, i.e., the components that are duplicated with the previous $j-1$ nodes in the gradient was suppressed by at least $1/2$, compared to the orthogonal components (i.e., the directions towards new local maxima). This is equivalent to dividing $\mu_l$ for all $l$s that appear in $P_{j-1}$ by (at least) $2$. By Lemma~\ref{lem:frechet_selection}, for the node $j$, the probability of converging to a new local maximum other than $S_{j-1}$ is
\begin{equation}
p_{new, S_{j-1}} \ge \frac{\sum_{l\notin S_{j-1}} p_l}{2^{-a} \sum_{l\in S_{j-1}} p_l + \sum_{l\notin S_{j-1}} p_l}
\end{equation} 
We do this sequentially starting from node $j$, then node $j+1$, etc. Let $m = |S_{j-1}|$ be the number of discovered local maxima. Then the expected time that we find a new local maxima is:
\begin{equation}
\ee{[\tilde T_{m\rightarrow m+1}]} = \frac{1}{p_{new,S_{j-1}}} \le 2^{-a} \ee{[T_{m\rightarrow m+1}]} + 1 - 2^{-a} 
\end{equation}
where $\ee{[T_{m\rightarrow m+1}]} = 1 / \sum_{l\notin S_{j-1}} p_l$ is the expected time for the original coupon collector problem to pick a new local maximum, given $S_{j-1}$ known ones. Adding the expected time together, we have
\begin{equation}
T_a = \sum_{m=0}^{L-1} \ee{[\tilde T_{m\rightarrow m+1}]} \le 2^{-a} T_0 + (1 - 2^{-a})L 
\end{equation} 
Note that all the expected time are conditioned on the sequence of known local maxima. But since these values are independent of the specific sequence, they are also the expected time overall.  
\end{proof}

\def\R{\mathbb{R}}

\section{More detailed analysis On stage I (Lazy Learning)}
\label{sec:dynamics-stage-I}

To analyze the Stage I more thoroughly, we consider the gradient-flow dynamics of the output layer weights $V$. 

Let $\tilde F\in\R^{n\times K}$ be a fixed feature matrix and $\tilde Y\in\R^{n\times M}$. 

We assume throughout that
\begin{itemize}
  \item[(A1)] $\tilde F$ has full column rank $K$, and
  \item[(A2)] $\mathrm{col}(\tilde Y)\subseteq\mathrm{col}(\tilde F)$,
        i.e.\ there exists $V^\star\in\R^{K\times M}$ such that
        $\tilde Y = \tilde F V^\star$.
  \item[(A3)] Small and independent random initialization on entries of $V(0)$, with mean zero and variance
$\alpha^2$, where $0<\alpha\ll 1$, and thus $\|V(0)\|_F = O(\alpha\sqrt{KM})$ with high probability.
  \item[(A4)] Zero-mean centering: $\vone^\top \tilde F = 0$ and $\vone^\top \tilde Y = 0$.
\end{itemize}
Note that (A4) is optional. It simplifies some interpretations but is not needed for the main analysis.

We train a linear readout $V\in\R^{K\times M}$ by minimizing
\begin{equation}
  J(V)
  \;=\;
  \frac12\bigl(\|\tilde Y - \tilde F V\|_F^2 + \eta \|V\|_F^2\bigr),
  \qquad \eta \ge 0.
\end{equation}
We define the (matrix) prediction error and the backpropagated gradient $G_{\tilde F}$ as
\begin{equation}
  E(t) := \tilde Y - \tilde F V(t)\in\R^{n\times M},
  \qquad
  G_{\tilde F}(t) := E(t)V(t)^\top \in\R^{n\times K}.
\end{equation}
Note that in the main text, we use $G_F$ to denote the backpropagated gradient on the uncentered feature matrix $F$, i.e., $G_F = P^\perp_1 G_{\tilde F}$, where $P^\perp_1 := I - \vone\vone^\top / n$ is the zero-mean projection matrix along the sample dimension. As we will see below, the leading term of $G_{\tilde F}$ is $\tilde Y \tilde Y^\top \tilde F$ and thus
\begin{equation}
  G_F = P^\perp_1 G_{\tilde F} \propto P^\perp_1 \tilde Y \tilde Y^\top \tilde F = \tilde Y \tilde Y^\top \tilde F = \tilde Y \tilde Y^\top F.
\end{equation}
because $\vone^\top \tilde F = 0$ and $\vone^\top \tilde Y = 0$.

We consider continuous--time gradient flow for $V$:
\begin{equation}
  \frac{dV(t)}{dt} = -\nabla_V J(V(t)). 
\end{equation}

The gradient of $J$ with respect to $V$ is
\begin{equation}
  \nabla_V J(V)
    = \tilde F^\top(\tilde F V - \tilde Y) + \eta V
    = A V - B,
  \quad
  A := \tilde F^\top \tilde F + \eta I_K,
  \quad
  B := \tilde F^\top \tilde Y.
  \label{eq:grad-V}
\end{equation}
We study the \emph{gradient flow} dynamics
\begin{equation}
  \frac{dV}{dt} = - \nabla_V J(V)
                = - A V + B.
  \label{eq:gf-V}
\end{equation}

Define the error matrix and the backpropagated gradient on $\tilde F$ by
\[
  E(t) := \tilde Y - \tilde F V(t) \in \R^{n\times M},
  \qquad
  G_{\tilde F}(t) := E(t) V(t)^\top \in \R^{n\times K}.
\]
Our goal is to understand:
\begin{enumerate}
  \item the \emph{small--time expansion} of $G_{\tilde F}(t)$ and show that
        the leading term is $\tilde Y \tilde Y^\top \tilde F$; and
  \item the \emph{long--time decay} behavior of $G_{\tilde F}(t)$, for both
        $\eta = 0$ and $\eta > 0$.
\end{enumerate}

\subsection{The dynamics of $G_{\tilde F}$ at initial time stamps}

\subsubsection{Small--time expansion and leading term}

Write the Taylor expansions at $t=0$ as
\[
  V(t) = V_0 + t V_1 + O(t^2),\qquad
  E(t) = E_0 + t E_1 + O(t^2),
\]
where $V_0 := V(0)$ and $E_0 := \tilde Y - \tilde F V_0$.
From~\eqref{eq:gf-V},
\begin{equation}
  V_1 = \frac{dV}{dt}\Big|_{t=0}
      = -A V_0 + B
      = -(\tilde F^\top \tilde F + \eta I_K)V_0 + \tilde F^\top \tilde Y.
  \label{eq:V1}
\end{equation}
Differentiating $E(t) = \tilde Y - \tilde F V(t)$ gives
\begin{equation}
  E_1 = \frac{dE}{dt}\Big|_{t=0}
      = -\tilde F V_1
      = \tilde F(\tilde F^\top \tilde F + \eta I_K)V_0
        - \tilde F \tilde F^\top \tilde Y.
  \label{eq:E1_derivative}
\end{equation}

Now expand $G_{\tilde F}(t)$:
\[
  G_{\tilde F}(t)
    = E(t) V(t)^\top
    = (E_0 + t E_1)(V_0 + t V_1)^\top + O(t^2)
    = E_0 V_0^\top + t(E_0 V_1^\top + E_1 V_0^\top)
      + O(t^2).
\]
Using $E_0 = \tilde Y - \tilde F V_0$ and $V_1$ from~\eqref{eq:V1},
\begin{align*}
  E_0 V_1^\top
  &= (\tilde Y - \tilde F V_0)
     \bigl(-V_0^\top(\tilde F^\top \tilde F + \eta I_K)
           + \tilde Y^\top \tilde F\bigr) \\
  &= \tilde Y \tilde Y^\top \tilde F
   \;-\; \tilde F V_0 \tilde Y^\top \tilde F
   \;-\; \tilde Y V_0^\top(\tilde F^\top \tilde F + \eta I_K)
   \;+\; \tilde F V_0 V_0^\top(\tilde F^\top \tilde F + \eta I_K).
\end{align*}
Every term except $\tilde Y \tilde Y^\top \tilde F$ contains (at least one
factor of) $V_0$, hence is $O(\alpha)$ in Frobenius norm.  Moreover,
$E_1 V_0^\top$ also contains $V_0$:
\[
  E_1 V_0^\top
  = \tilde F(\tilde F^\top \tilde F + \eta I_K)V_0 V_0^\top
    - \tilde F \tilde F^\top \tilde Y V_0^\top,
\]
so $\|E_1 V_0^\top\|_F = O(\alpha)$ as well.

We therefore obtain the small--time expansion
\begin{equation}
  G_{\tilde F}(t)
  = \underbrace{\tilde Y V_0^\top}_{O(\alpha)}
    +\; t\,\tilde Y \tilde Y^\top \tilde F
    +\; t\,R_1(V_0)
    + O(t^2),
  \label{eq:G-small-t}
\end{equation}
where $R_1(V_0)$ collects all order--$t$ terms that contain $V_0$ and thus
satisfy $\|R_1(V_0)\|_F = O(\alpha)$.

\subsubsection{Why $\tilde Y \tilde Y^\top \tilde F$ is the leading term}

We now compare the deterministic term $\tilde Y \tilde Y^\top \tilde F$
to the $V_0$--dependent terms using norm inequalities.

\begin{lemma}[Lower bound on $\|\tilde Y\tilde Y^\top \tilde F\|_F$]
\label{lem:YYF-lb}
Let $\tilde F$ have full column rank and $\tilde Y$ be nonzero.
Then
\[
  \|\tilde Y \tilde Y^\top \tilde F\|_F
  \;\ge\; \sigma_{\min}(\tilde F)\,\|\tilde Y\tilde Y^\top\|_F > 0,
\]
where $\sigma_{\min}(\tilde F)$ is the smallest singular value of $\tilde F$.
\end{lemma}

\begin{proof}
For any matrices $A,B$,
$\|AB\|_F^2 = \tr(BB^\top A^\top A)$.  Take $A=\tilde Y\tilde Y^\top$,
$B=\tilde F$.  Since $BB^\top$ is PSD with eigenvalues bounded below by
$\sigma_{\min}(\tilde F)^2$,
\[
\|AB\|_F^2
= \tr(BB^\top A^\top A)
\ge \sigma_{\min}(\tilde F)^2\,\tr(A^\top A)
= \sigma_{\min}(\tilde F)^2\|A\|_F^2.
\]
Taking square roots gives the result.
\end{proof}

Next, bound the $V_0$--dependent part.  For concreteness, consider the term
$\tilde F \tilde F^\top \tilde Y V_0^\top$ (other mixed terms are bounded
similarly). Using $\|AB\|_F \le \|A\|_F\|B\|_F$,
\[
  \|\tilde F \tilde F^\top \tilde Y V_0^\top\|_F
  \;\le\; \|\tilde F \tilde F^\top\tilde Y\|_F\|V_0\|_F.
\]
Under the iid initialization with variance $\alpha^2$,
$\|V_0\|_F = O(\alpha\sqrt{KM})$, hence
\[
  \|\tilde F \tilde F^\top \tilde Y V_0^\top\|_F = O(\alpha).
\]
The same argument applies to all other $V_0$--dependent order--$t$ terms in
$R_1(V_0)$.

Combining Lemma~\ref{lem:YYF-lb} with these upper bounds yields
\[
  \frac{\|R_1(V_0)\|_F}{\|\tilde Y\tilde Y^\top \tilde F\|_F}
  \;\le\;
  C(\tilde F,\tilde Y,K,M)\,\alpha
\]
for some constant $C$ independent of $\alpha$.  Thus, in the limit
$\alpha\to 0$ (small random initialization), the term
$\tilde Y\tilde Y^\top \tilde F$ is the unique leading contribution at order
$t$.

\begin{proposition}[Small--time leading term of $G_{\tilde F}$]
\label{prop:small-time-leading}
Under assumptions (A1)--(A2) and small random initialization with scale
$\alpha\ll 1$,
\[
  G_{\tilde F}(t)
  = \tilde Y V_0^\top
    + t\,\tilde Y\tilde Y^\top\tilde F
    + O\bigl(t\alpha + t^2\bigr)
\]
in Frobenius norm.  In particular, as $\alpha\to 0$,
\[
  \frac{G_{\tilde F}(t) - \tilde Y V_0^\top}{t}
  \;\to\;
  \tilde Y\tilde Y^\top \tilde F,
  \qquad\text{and}
  \qquad
  \|G_{\tilde F}(t)\|_F \sim t\,\|\tilde Y\tilde Y^\top\tilde F\|_F
\]
for fixed small $t$, independently of whether $\eta=0$ or $\eta>0$.
\end{proposition}

\paragraph{Remark on the role of $\eta$.}
The weight decay parameter $\eta$ only appears in products involving $V_0$,
and hence all $\eta$--dependent order--$t$ contributions are also
$O(\alpha)$ in norm.  Therefore the leading deterministic term
$\tilde Y\tilde Y^\top\tilde F$ is the same for both $\eta=0$ and $\eta>0$.

\subsection{Long--time decay of $G_{\tilde F}$}

We now analyze the behavior of $G_{\tilde F}(t)$ as $t\to\infty$, again for
both $\eta=0$ and $\eta>0$.

\subsubsection{General solution of the gradient flow}

From~\eqref{eq:gf-V}, the gradient flow is a linear ODE with constant
coefficients.  The unique fixed point $V^\star$ satisfies
\[
  AV^\star = B \quad\Rightarrow\quad
  V^\star = A^{-1}B.
\]
Define $\Delta V(t) := V(t)-V^\star$.  Then
\[
  \frac{d}{dt}\Delta V(t)
  = -A \Delta V(t),
  \qquad
  \Delta V(t) = e^{-At}\Delta V(0),
\]
and hence
\begin{equation}
  V(t) = e^{-At}(V(0)-V^\star) + V^\star.
  \label{eq:V-sol}
\end{equation}

Let $\lambda_{\min}(A)$ denote the smallest eigenvalue of $A$.
Since $A\succeq \tilde F^\top\tilde F$ and $\tilde F$ has full column rank,
$\lambda_{\min}(A)\ge \sigma_{\min}(\tilde F)^2$ for $\eta=0$ and
$\lambda_{\min}(A)\ge \sigma_{\min}(\tilde F)^2+\eta$ for $\eta>0$.
Standard bounds on matrix exponentials give
\begin{equation}
  \|\Delta V(t)\|_F
  \le e^{-\lambda_{\min}(A)t}\,\|\Delta V(0)\|_F.
  \label{eq:V-decay}
\end{equation}

The error satisfies
\[
  E(t) = \tilde Y - \tilde F V(t) = \tilde Y - \tilde F V^\star - \tilde F\Delta V(t)
       =: E^\star - \tilde F\Delta V(t),
\]
where $E^\star := \tilde Y - \tilde F V^\star$ is the residual at the
minimizer.  Using $\|\tilde F\Delta V(t)\|_F \le \|\tilde F\|_2\|\Delta V(t)\|_F$
and~\eqref{eq:V-decay},
\begin{equation}
  \|E(t) - E^\star\|_F
  \le \|\tilde F\|_2\,e^{-\lambda_{\min}(A)t}\,\|\Delta V(0)\|_F.
  \label{eq:E-decay}
\end{equation}

\subsubsection{Case $\eta = 0$}

When $\eta=0$, we have $A=\tilde F^\top\tilde F$.  By assumption (A2),
$\tilde Y = \tilde F V^\star$ is exactly realized by the model, so
$E^\star = 0$, i.e.
\[
  \lim_{t\to\infty} E(t) = 0.
\]
Equations~\eqref{eq:V-decay} and~\eqref{eq:E-decay} imply exponential decay:
\[
  \|V(t)-V^\star\|_F \le
     e^{-\sigma_{\min}(\tilde F)^2 t}\,\|V(0)-V^\star\|_F,
  \qquad
  \|E(t)\|_F \le
     \|\tilde F\|_2 e^{-\sigma_{\min}(\tilde F)^2 t}\,\|V(0)-V^\star\|_F.
\]

We can now bound $G_{\tilde F}(t)$:
\[
  G_{\tilde F}(t) = E(t) V(t)^\top,
\]
so
\begin{equation}
  \|G_{\tilde F}(t)\|_F
  \le \|E(t)\|_F\,\|V(t)\|_2
  \le \|E(t)\|_F\,\bigl(\|V^\star\|_2 + \|V(t)-V^\star\|_2\bigr).
  \label{eq:G-bound-general}
\end{equation}
Using the exponential bounds above and the fact that $\|V(t)-V^\star\|_2
\le \|V(t)-V^\star\|_F$, we obtain
\[
  \|G_{\tilde F}(t)\|_F
  \le C_0\,e^{-\sigma_{\min}(\tilde F)^2 t}
\]
for some constant $C_0$ depending on $\tilde F$, $V^\star$ and $V(0)$ but not
on $t$.

Thus in the realizable, unregularized case, the backpropagated gradient
decays exponentially to zero.

\begin{proposition}[Exponential decay of $G_{\tilde F}$ for $\eta=0$]
\label{prop:G-decay-eta0}
Assume (A1)--(A2) and $\eta=0$.  Then
\[
  \lim_{t\to\infty} G_{\tilde F}(t) = 0,
\]
and there exists $C_0>0$ such that
\[
  \|G_{\tilde F}(t)\|_F
  \le C_0\,e^{-\sigma_{\min}(\tilde F)^2 t}
  \qquad\text{for all }t\ge 0.
\]
\end{proposition}

A more refined analysis using the SVD $\tilde F = U\Sigma W^\top$ shows that
every singular direction of $G_{\tilde F}(t)$ is a finite linear combination
of exponentials $e^{-(\sigma_i^2+\sigma_{i'}^2)t}$ and $e^{-\sigma_i^2 t}$,
so the slowest rate in the Frobenius norm is indeed
$e^{-\sigma_{\min}(\tilde F)^2 t}$.

\subsubsection{Case $\eta > 0$}

When $\eta>0$, the minimizer $V^\star = A^{-1}\tilde F^\top \tilde Y$ is the
ridge solution.  In general it does \emph{not} exactly interpolate
$\tilde Y$, and the residual
\[
  E^\star := \tilde Y - \tilde F V^\star
\]
is nonzero.  Consequently the limiting backpropagated gradient
\[
  G_{\tilde F}^\star := \lim_{t\to\infty} G_{\tilde F}(t)
                     = E^\star V^{\star\top}
\]
is also nonzero in general.

To study the convergence, write
\[
  G_{\tilde F}(t) - G_{\tilde F}^\star
  = E(t)V(t)^\top - E^\star V^{\star\top}
  = (E(t)-E^\star)V(t)^\top
    + E^\star\bigl(V(t)-V^\star\bigr)^\top.
\]
Using~\eqref{eq:V-decay}--\eqref{eq:E-decay} and
$\|AB\|_F\le\|A\|_F\|B\|_2$, we obtain
\begin{align*}
  \|G_{\tilde F}(t) - G_{\tilde F}^\star\|_F
  &\le \|E(t)-E^\star\|_F\,\|V(t)\|_2
       + \|E^\star\|_F\,\|V(t)-V^\star\|_2 \\
  &\le \Bigl(
        \|\tilde F\|_2\|V(0)-V^\star\|_F\,\|V(t)\|_2
        + \|E^\star\|_F\|V(0)-V^\star\|_F
       \Bigr)e^{-\lambda_{\min}(A)t}.
\end{align*}
Since $\|V(t)\|_2$ is bounded (it converges to $\|V^\star\|_2$),
this shows exponential convergence of $G_{\tilde F}(t)$ to $G_{\tilde F}^\star$. Therefore, we have the following proposition:

\begin{proposition}[Exponential convergence of $G_{\tilde F}$ for $\eta>0$]
\label{prop:G-decay-etapos}
Assume (A1)--(A2) and $\eta>0$.  Then
\[
  \lim_{t\to\infty} G_{\tilde F}(t) = G_{\tilde F}^\star
  := E^\star V^{\star\top} \neq 0\ \text{ in general},
\]
and there exists $C_1>0$ such that
\[
  \|G_{\tilde F}(t) - G_{\tilde F}^\star\|_F
  \le C_1\,e^{-\lambda_{\min}(A)t},
  \qquad
  \lambda_{\min}(A)\;\ge\;\sigma_{\min}(\tilde F)^2 + \eta.
\]

Finally, note that
\begin{equation}
    G^\star_{\tilde F} = E^{\star}V^{\star\top} = P_\eta \tilde Y V^{\star \top} = \eta (\tilde F \tilde F^\top + \eta I)^{-1} \tilde Y \tilde Y^{\top} \tilde F (\tilde F^\top \tilde F + \eta I)^{-1} \label{eqn:G_F_ridge_star}
\end{equation}
where $P_\eta := I - \tilde F(\tilde F^\top \tilde F + \eta I)^{-1} \tilde F^\top = \eta (\tilde F \tilde F^\top + \eta I)^{-1}$, by Woodbury matrix formula. 

\end{proposition}
\textbf{Summary.} 
\begin{itemize}
  \item For small $t$, the leading term in $G_{\tilde F}(t)$ is
        $t\,\tilde Y\tilde Y^\top\tilde F$, independent of $\eta$.
        All other terms (including those involving $V(0)$ and $\eta$)
        are lower order in the initialization scale $\alpha$.
  \item For $\eta=0$ and realizable $\tilde Y\in{\rm col}(\tilde F)$,
        both the error $E(t)$ and $G_{\tilde F}(t)$ decay exponentially
        to zero at rate at least $\sigma_{\min}(\tilde F)^2$.
  \item For $\eta>0$, $E(t)$ and $V(t)$ converge exponentially to
        $(E^\star,V^\star)$, and $G_{\tilde F}(t)$ converges exponentially
        to a nonzero limit $G_{\tilde F}^\star = E^\star V^{\star\top}$.
\end{itemize}

\subsection{Scaling the network output by $\beta$}
\label{sec:dynamics-stage-I-beta}
We now modify the objective by inserting a scalar factor $\beta>0$ on the
network output:
\begin{equation}
  J(V)
  \;=\; \frac12 \bigl\|\tilde Y - \beta \tilde F V\bigr\|_F^2,
  \qquad
  \tilde F\in\R^{n\times K},\ \tilde Y\in\R^{n\times M},\ V\in\R^{K\times M}.
  \label{eq:J-beta-def}
\end{equation}
We study the gradient flow dynamics for $V$ and, in particular, the initial
behavior of the backpropagated gradient on $\tilde F$,
\[
  G_{\tilde F}(t)
  := E(t) V(t)^\top,
  \qquad
  E(t) := \tilde Y - \beta \tilde F V(t).
\]

As before, we assume a small random initialization $V(0)=V_0$ whose entries are
iid, mean zero, with variance $\alpha^2$, so that
$\|V_0\|_F = O(\alpha\sqrt{KM})$ for $\alpha\ll 1$.

\textbf{Gradient flow with output scaling}. The gradient of $J$ with respect to $V$ is
\begin{equation}
  \nabla_V J(V)
  = \beta \tilde F^\top(\beta \tilde F V - \tilde Y)
  = \beta^2 \tilde F^\top\tilde F\,V - \beta \tilde F^\top\tilde Y.
  \label{eq:grad-beta}
\end{equation}
The gradient flow is
\begin{equation}
  \frac{dV}{dt}
  = - \nabla_V J(V)
  = -\beta^2 \tilde F^\top\tilde F\,V + \beta \tilde F^\top\tilde Y.
  \label{eq:gf-beta}
\end{equation}

We define
\[
  E(t) := \tilde Y - \beta \tilde F V(t),
  \qquad
  G_{\tilde F}(t) := E(t)V(t)^\top.
\]
At $t=0$,
\[
  V_0 := V(0), \qquad
  E_0 := E(0) = \tilde Y - \beta \tilde F V_0.
\]

\textbf{Small--time expansion of $G_{\tilde F}(t)$}
We expand
\[
  V(t) = V_0 + t V_1 + O(t^2),
  \qquad
  E(t) = E_0 + t E_1 + O(t^2),
\]
where $V_1 = \frac{dV}{dt}\big|_{t=0}$ and $E_1 = \frac{dE}{dt}\big|_{t=0}$.
From~\eqref{eq:gf-beta},
\begin{equation}
  V_1
  = -\beta^2 \tilde F^\top\tilde F\,V_0 + \beta \tilde F^\top\tilde Y.
  \label{eq:V1-beta}
\end{equation}
Differentiating $E(t)=\tilde Y - \beta \tilde F V(t)$ gives
\begin{equation}
  E_1
  = \frac{dE}{dt}\Big|_{t=0}
  = -\beta \tilde F V_1
  = \beta^3 \tilde F\tilde F^\top\tilde F V_0
    - \beta^2 \tilde F\tilde F^\top\tilde Y.
  \label{eq:E1-beta}
\end{equation}

Now expand $G_{\tilde F}(t)$:
\begin{align*}
  G_{\tilde F}(t)
  &= E(t)V(t)^\top \\
  &= (E_0 + t E_1)(V_0 + t V_1)^\top + O(t^2) \\
  &= E_0 V_0^\top
     + t\bigl(E_0 V_1^\top + E_1 V_0^\top\bigr)
     + O(t^2).
\end{align*}

\emph{Zeroth order in $t$.}
Using $E_0=\tilde Y - \beta \tilde F V_0$,
\begin{equation}
  G_{\tilde F}(0)
  = E_0 V_0^\top
  = \tilde Y V_0^\top - \beta \tilde F V_0 V_0^\top.
  \label{eq:G0-beta}
\end{equation}
Thus at $t=0$ we have an $O(\beta^0)$ term and an $O(\beta^1)$ term, both
proportional to the random initialization.

\emph{First order in $t$.}
We next compute $E_0 V_1^\top$ and $E_1 V_0^\top$ and keep track of all powers
of $\beta$.

From~\eqref{eq:V1-beta}, $V_1^\top = -\beta^2 V_0^\top\tilde F^\top\tilde F
+ \beta \tilde Y^\top\tilde F$, hence
\begin{align*}
  E_0 V_1^\top
  &= (\tilde Y - \beta \tilde F V_0)
     \bigl(-\beta^2 V_0^\top\tilde F^\top\tilde F
           + \beta \tilde Y^\top\tilde F\bigr) \\
  &= \tilde Y(-\beta^2 V_0^\top\tilde F^\top\tilde F)
     + \tilde Y(\beta \tilde Y^\top\tilde F)
     - \beta \tilde F V_0(-\beta^2 V_0^\top\tilde F^\top\tilde F)
     - \beta \tilde F V_0(\beta \tilde Y^\top\tilde F) \\
  &= \underbrace{\beta \tilde Y\tilde Y^\top\tilde F}_{\text{no $V_0$, order }\beta}
     \;-\; \beta^2 \tilde Y V_0^\top\tilde F^\top\tilde F
     \;+\; \beta^3 \tilde F V_0 V_0^\top\tilde F^\top\tilde F
     \;-\; \beta^2 \tilde F V_0 \tilde Y^\top\tilde F.
\end{align*}
From~\eqref{eq:E1-beta},
\[
  E_1 V_0^\top
  = \beta^3 \tilde F\tilde F^\top\tilde F V_0 V_0^\top
    - \beta^2 \tilde F\tilde F^\top\tilde Y V_0^\top.
\]

Adding these,
\begin{align}
  E_0 V_1^\top + E_1 V_0^\top
  &= \beta \tilde Y\tilde Y^\top\tilde F \nonumber \\
  &\quad
     - \beta^2\bigl(
        \tilde Y V_0^\top\tilde F^\top\tilde F
        + \tilde F V_0 \tilde Y^\top\tilde F
        + \tilde F\tilde F^\top\tilde Y V_0^\top
      \bigr) \nonumber \\
  &\quad
     + \beta^3\bigl(
        \tilde F V_0 V_0^\top\tilde F^\top\tilde F
        + \tilde F\tilde F^\top\tilde F V_0 V_0^\top
      \bigr).
  \label{eq:E0V1E1V0-full}
\end{align}

\emph{Collecting terms.}
It is convenient to separate the $V_0$--independent and $V_0$--dependent
contributions.  Define
\begin{align*}
  R_{2}(V_0)
  &:= -\bigl(
        \tilde Y V_0^\top\tilde F^\top\tilde F
        + \tilde F V_0 \tilde Y^\top\tilde F
        + \tilde F\tilde F^\top\tilde Y V_0^\top
      \bigr), \\
  R_{3}(V_0)
  &:= \tilde F V_0 V_0^\top\tilde F^\top\tilde F
     + \tilde F\tilde F^\top\tilde F V_0 V_0^\top.
\end{align*}
Then~\eqref{eq:E0V1E1V0-full} can be written as
\[
  E_0 V_1^\top + E_1 V_0^\top
  = \beta \tilde Y\tilde Y^\top\tilde F
    + \beta^2 R_2(V_0)
    + \beta^3 R_3(V_0).
\]

Thus the small--time expansion of $G_{\tilde F}(t)$ is
\begin{equation}
  \boxed{
  G_{\tilde F,\beta}(t)
  = \underbrace{\tilde Y V_0^\top - \beta \tilde F V_0 V_0^\top}_{\displaystyle G_{\tilde F,\beta}(0)}
    + t\Bigl[
        \beta \tilde Y\tilde Y^\top\tilde F
        + \beta^2 R_2(V_0)
        + \beta^3 R_3(V_0)
      \Bigr]
    + O(t^2).
  }
  \label{eq:G-small-t-beta}
\end{equation}

\textbf{Orders in $\beta$ and interaction with small init}. Under the small--init assumption $\|V_0\|_F = O(\alpha\sqrt{KM})$ and
bounded $\tilde F,\tilde Y$, we have
\[
  \|R_2(V_0)\|_F = O(\|V_0\|_F) = O(\alpha),
  \qquad
  \|R_3(V_0)\|_F = O(\|V_0\|_F^2) = O(\alpha^2).
\]
Therefore, for fixed $\beta$,
\[
  \beta^2 \|R_2(V_0)\|_F = O(\beta^2\alpha),
  \qquad
  \beta^3 \|R_3(V_0)\|_F = O(\beta^3\alpha^2).
\]

In contrast, the $V_0$--independent term $\beta \tilde Y\tilde Y^\top\tilde F$
has norm
\[
  \|\beta \tilde Y\tilde Y^\top\tilde F\|_F
  = \beta\,\|\tilde Y\tilde Y^\top\tilde F\|_F,
\]
with a strictly positive factor $\|\tilde Y\tilde Y^\top\tilde F\|_F$
whenever $\tilde Y\neq 0$ and $\tilde F$ has full column rank.

Hence, in the regime of small random initialization ($\alpha\ll 1$),
the leading $O(t)$ contribution to $G_{\tilde F,\beta}(t)$ is
\[
  t\,\beta \tilde Y\tilde Y^\top\tilde F,
\]
while the $V_0$--dependent corrections are suppressed by factors
$\beta^2\alpha$ and $\beta^3\alpha^2$.

Similarly the final fixed point of $G_{\tilde F,\beta}(+\infty)$ can be computed by replacing $\tilde Y$ with $\tilde Y / \beta$ in Eqn.~\ref{eqn:G_F_ridge_star}: 

\begin{equation}
  G^\star_{\tilde F,\beta} = \frac{1}{\beta^2} G^\star_{\tilde F} \label{eqn:G_F_ridge_star_beta}
\end{equation}
Therefore, if $\beta > 1$ is large, then the final fixed point of $G_{\tilde F,\beta}(+\infty)$ leads to a smaller backpropagated gradient $G_F$ and may delay grokking.  

\section{More Experiments}
\begin{figure}
\includegraphics[width=\textwidth]{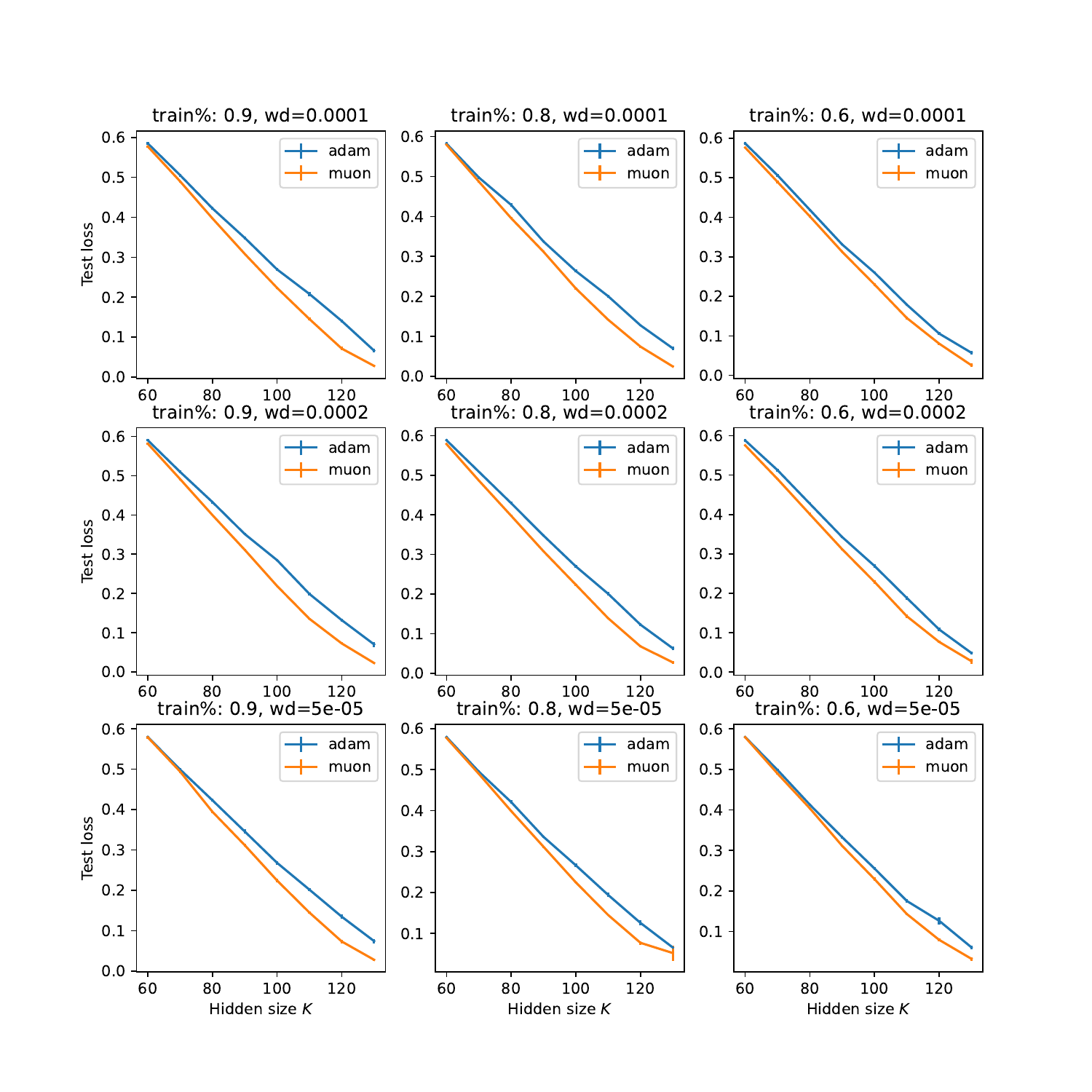}
\caption{\small Adam versus Muon optimizers in modular addition tasks with $M = 71$, when the number of hidden nodes $K$ is relatively small compared to $M$. Muon optimizer achieves lower test loss compared to Adam.}
\label{fig:adam-vs-muon}
\end{figure}

\begin{figure}
\includegraphics[width=0.4\textwidth]{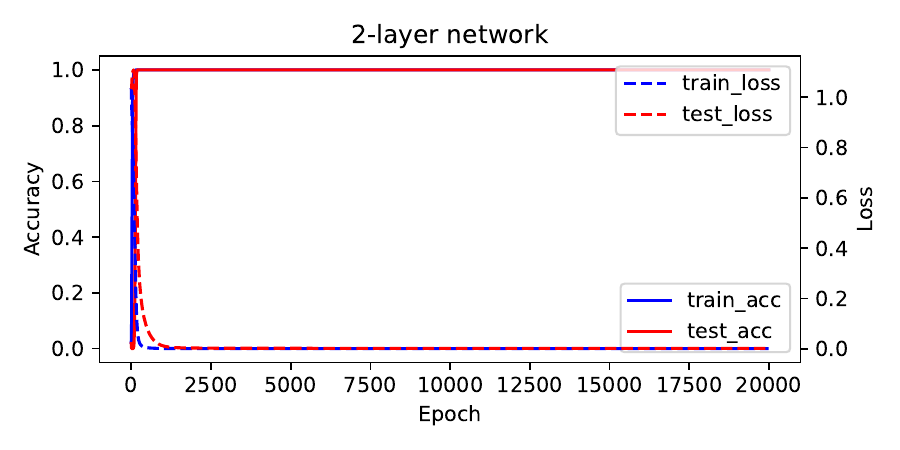}
\includegraphics[width=0.58\textwidth]{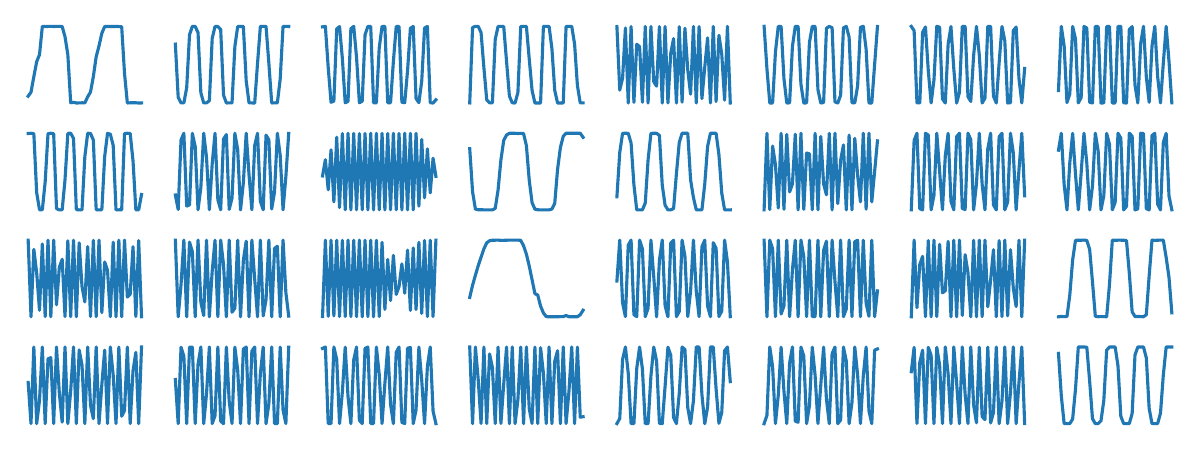}

\includegraphics[width=0.4\textwidth]{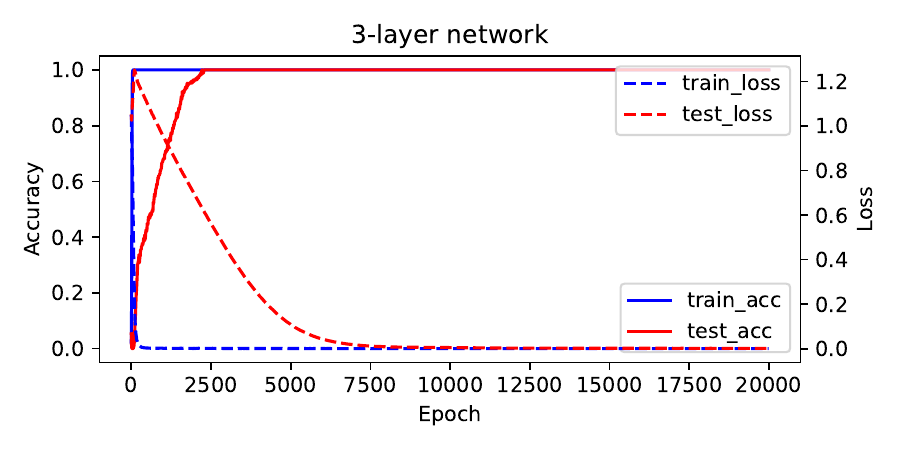}
\includegraphics[width=0.58\textwidth]{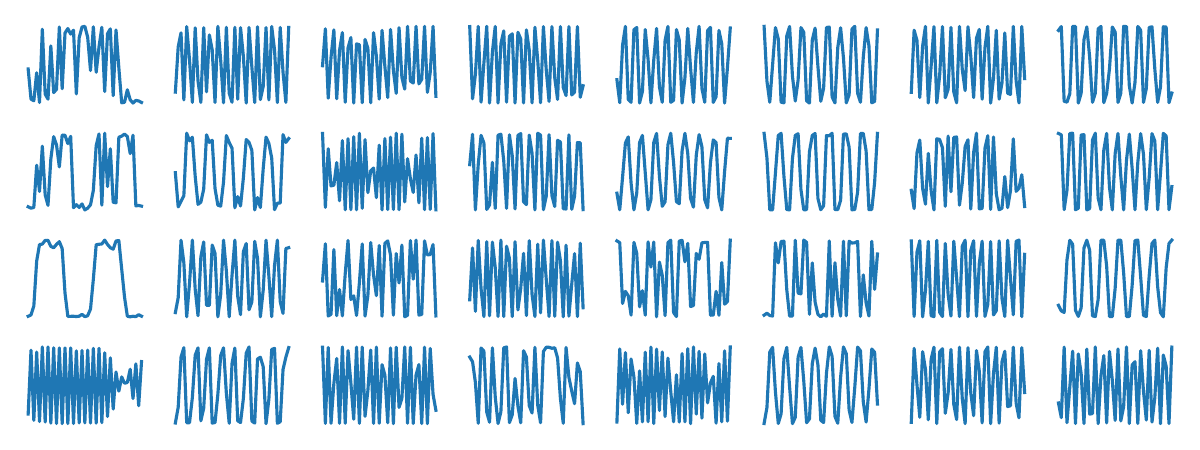}

\includegraphics[width=0.4\textwidth]{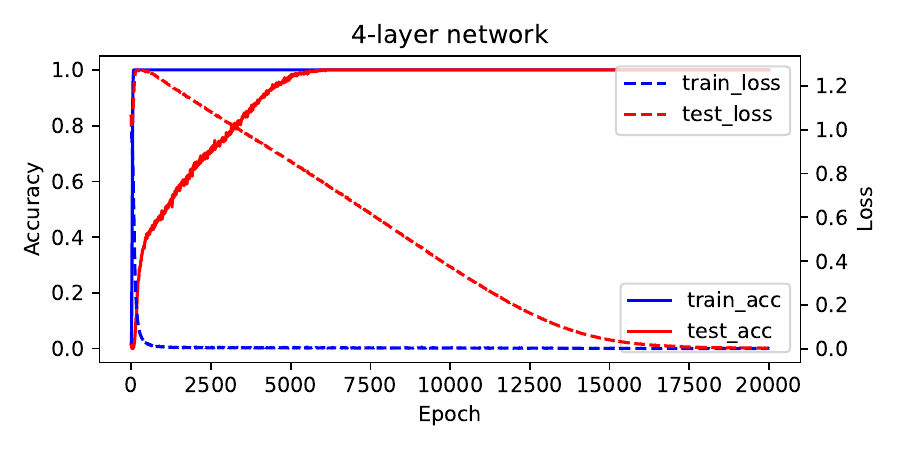}
\includegraphics[width=0.58\textwidth]{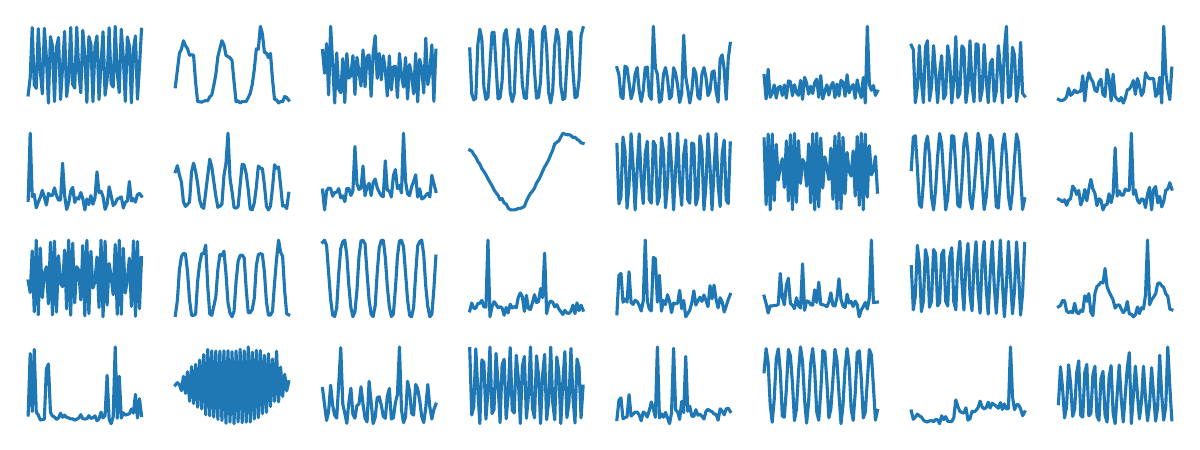}

\caption{\small Training modular addition tasks with 2, 3 and 4 layer network with ReLU activations. \textbf{Left:} Training accuracy and losses. \textbf{Right:} Learned features at the lowest layer. With more layers, the training takes longer and grokking (delayed generalization) becomes more prominant. However, features at the lowest layer remain (distorted version) of Fourier bases, which are consistent with the analysis in Sec.~\ref{sec:deeper-architectures}.}
\label{fig:deep-layers}
\end{figure}

\subsection{Use Groups Algorithms Programming (GAP) to get non-Abelian groups}
\label{sec:gap}
GAP (\url{https://www.gap-system.org/}) is a programming language with a library of thousands of functions to create and manipulate group. Using GAP, one can easily enumerate all non-abelian group of size $M \le 127$ and create their multiplication tables, which is what we have done here. From these non-Abelian groups, for each group size $M$, we pick one for our scaling law experiments (Fig.~\ref{fig:mem_gen_boundary} bottom right) with $\max_k d_k = 2$. 

\begin{figure}
\includegraphics[width=.5\textwidth]{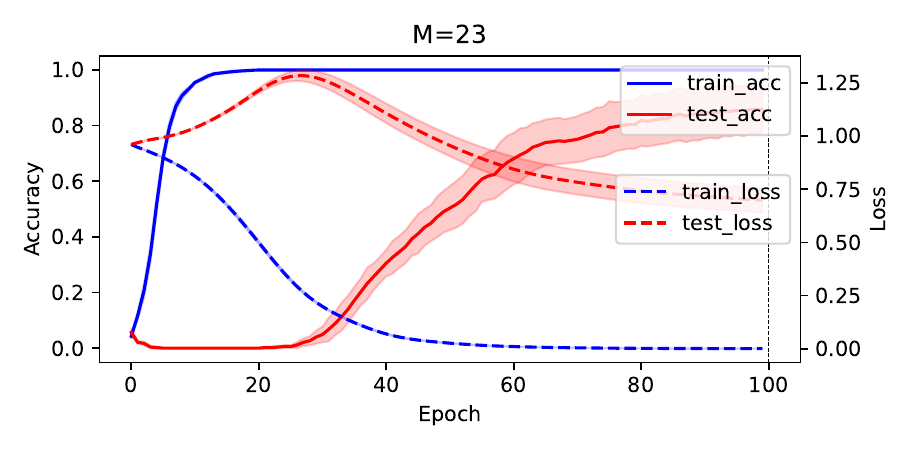}
\includegraphics[width=.5\textwidth]{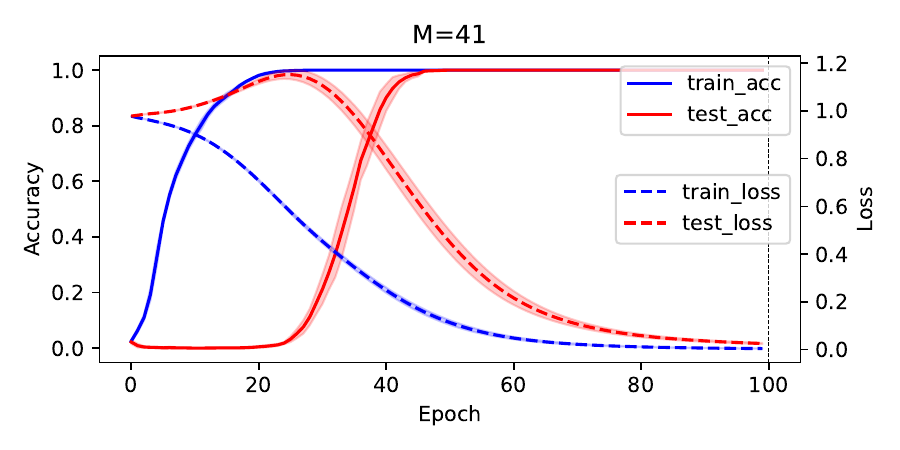}
\includegraphics[width=.5\textwidth]{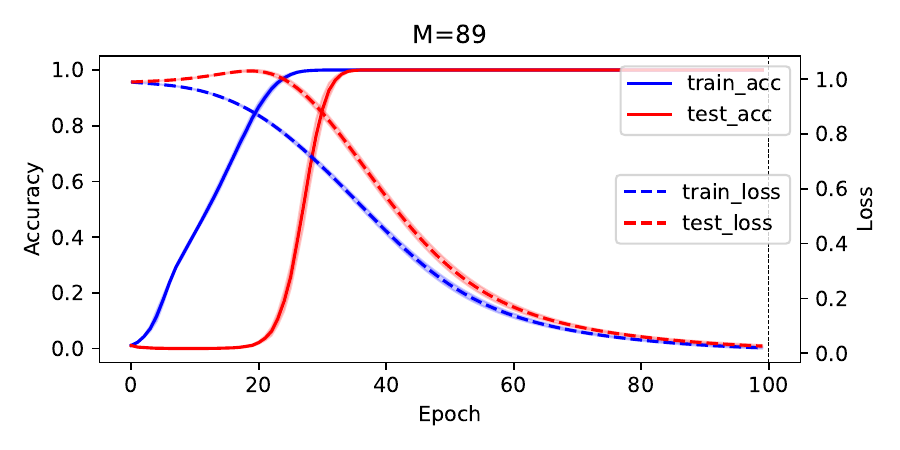}
\includegraphics[width=.5\textwidth]{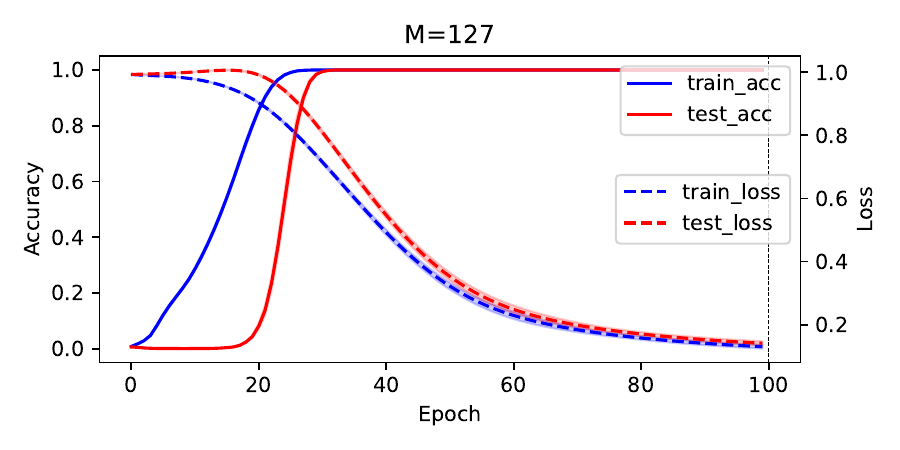}
\caption{
    \small
Training modular addition tasks with real weights ($M=23, 41, 89, 127$). Learning rate is $0.005$, weight decay is $5e-5$. Number of hidden nodes $K = 256$. Test sample is $20\%$ of the full set of $M^2$. Using Adam optimizer. Averaged over $5$ seeds. This is a baseline. 
}
\label{fig:loss-real-weights}
\end{figure}

\begin{figure}
\includegraphics[width=.5\textwidth]{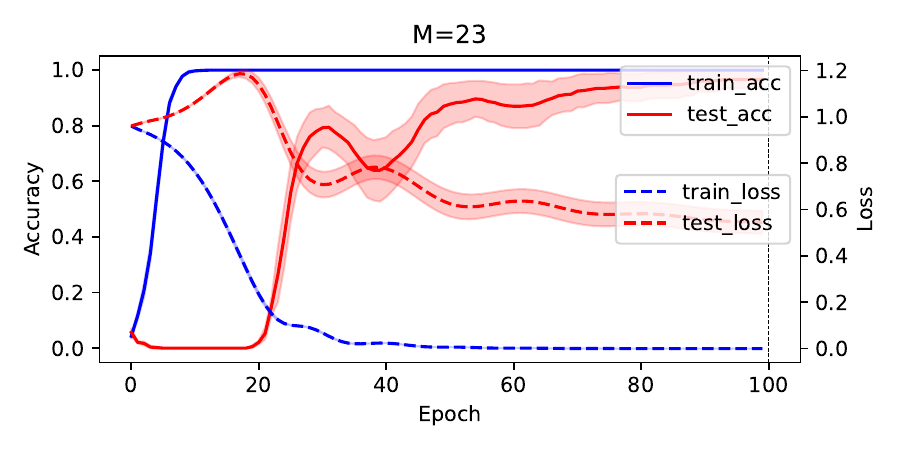}
\includegraphics[width=.5\textwidth]{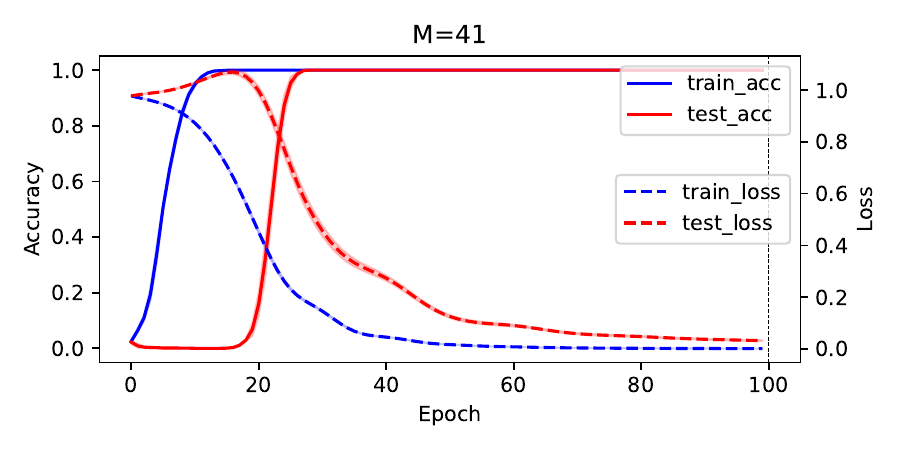}
\includegraphics[width=.5\textwidth]{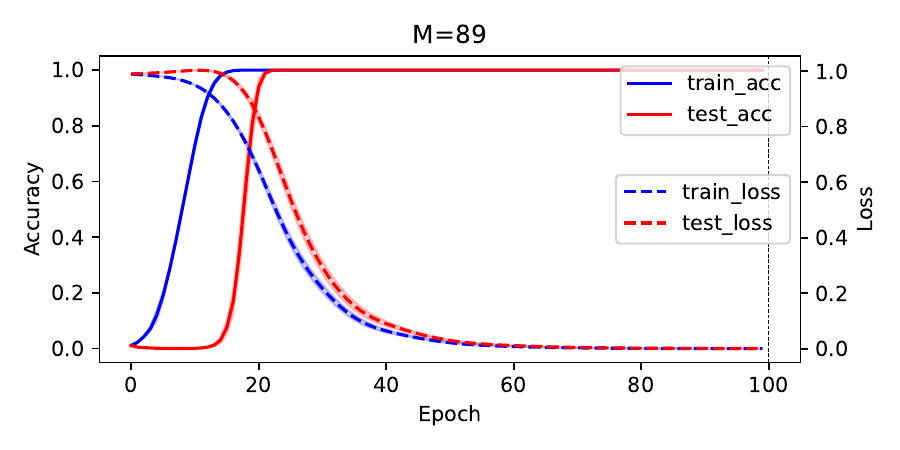}
\includegraphics[width=.5\textwidth]{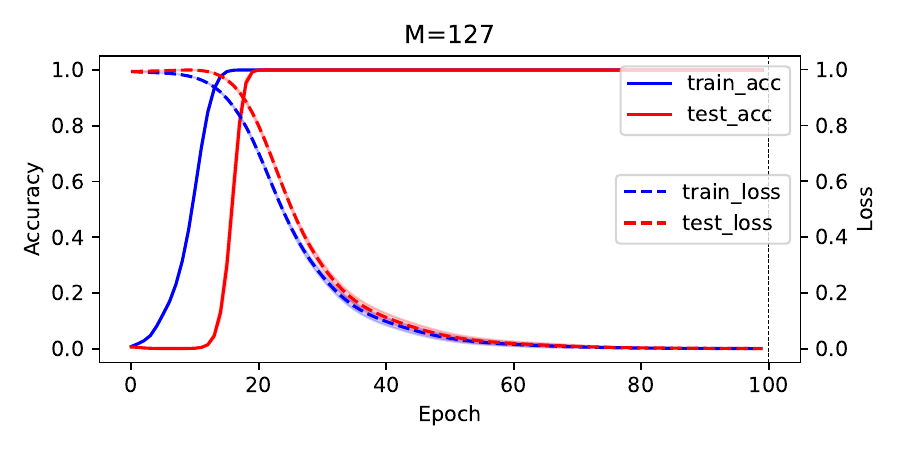}
\caption{
    \small
Training modular addition tasks with complex weights ($M=23, 41, 89, 127$). Learning rate is $0.005$, weight decay is $5e-5$. Number of hidden nodes $K = 256$. Test sample is $20\%$ of the full set of $M^2$. Using Adam optimizer. Averaged over $5$ seeds. Compared with the real case (Fig.~\ref{fig:loss-real-weights}), models with complex weights seem to grok faster. 
}
\label{fig:loss-complex-weights}
\end{figure}

\begin{figure}
\includegraphics[width=.5\textwidth]{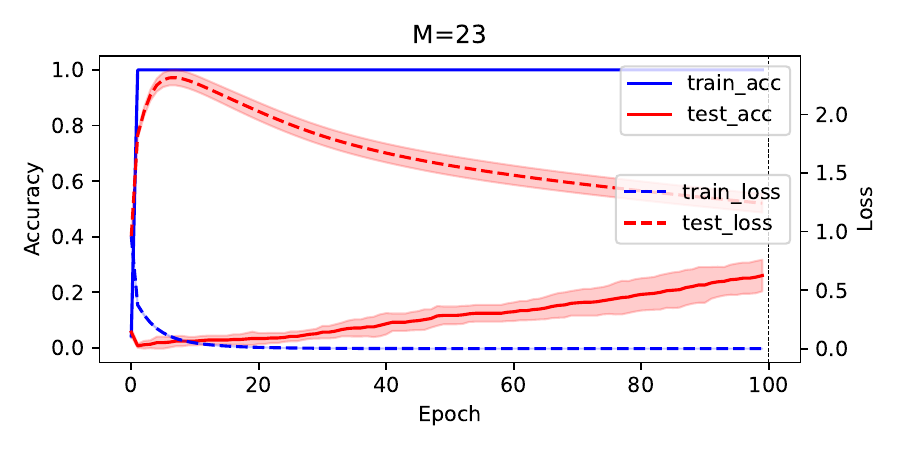}
\includegraphics[width=.5\textwidth]{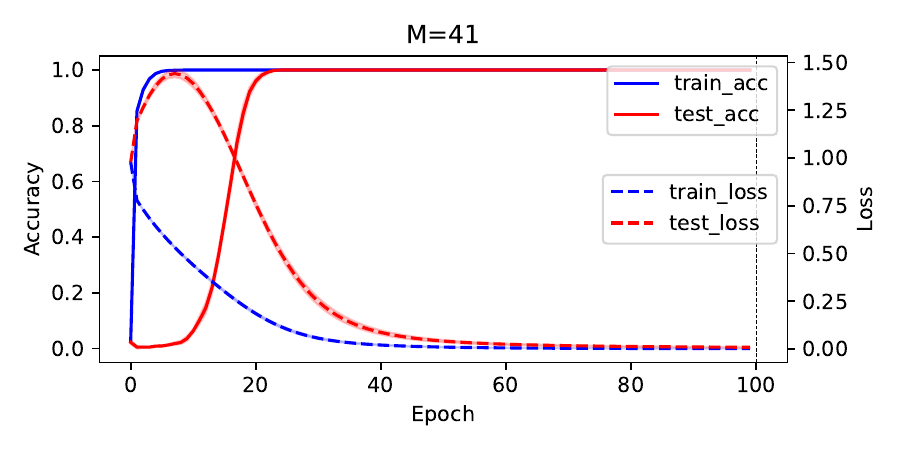}
\includegraphics[width=.5\textwidth]{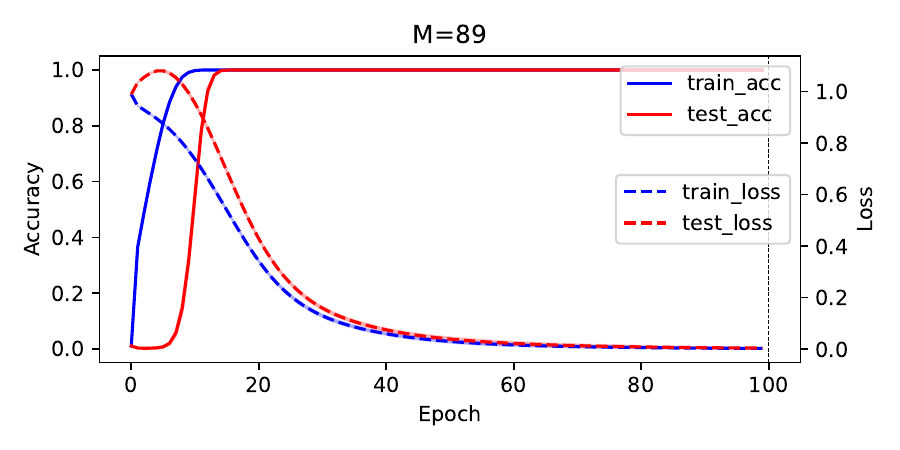}
\includegraphics[width=.5\textwidth]{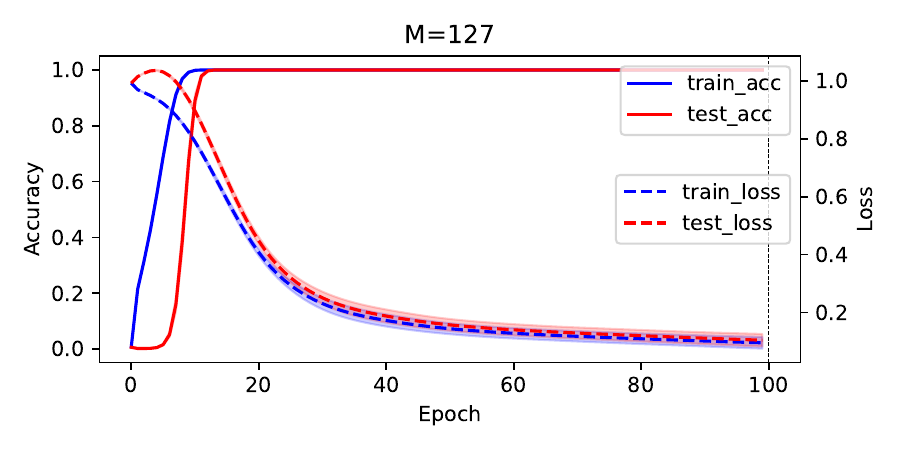}
\caption{
    \small
Training modular addition tasks with real weights ($M=23, 41, 89, 127$). Instead of using gradient descent to update the top layer $V$, in every gradient update we use ridge regression solution $V_{\textrm{ridge}}$ with respect to the current $F$ (Proposition~\ref{prop:G_F_in_stage_I}). Learning rate is $0.005$, weight decay is $5e-5$. Number of hidden nodes $K = 256$. Test sample is $20\%$ of the full set of $M^2$. Using Adam optimizer. Averaged over $5$ seeds. The grokking still happens (for $M=23$ check Fig.~\ref{fig:loss-real-weights-svd-solve-500} for completeness). It is slower for $M=23$ but actually faster for $M=41, 89, 127$, compared to the baseline (Fig.~\ref{fig:loss-real-weights}). 
}
\label{fig:loss-real-weights-svd-solve}
\end{figure}

\begin{figure}
    \centering
\includegraphics[width=0.8\textwidth]{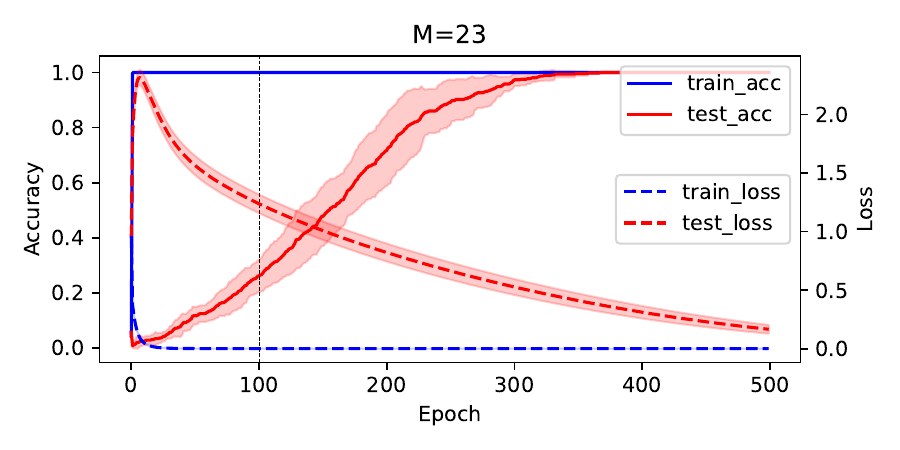}
\caption{
    \small
Training modular addition tasks with real weights $M=23$ for 500 epochs, using $V_\mathrm{ridge}$ as the top layer weight. The grokking still happens but slower than the baseline (Fig.~\ref{fig:loss-real-weights}) for $M=23$. 
}
\label{fig:loss-real-weights-svd-solve-500}
\end{figure}

\subsection{Zero-init accelerates the feature learning process}
\label{sec:zero-init-acceleration-appendix}
\begin{figure}
    \centering
    \includegraphics[width=\textwidth]{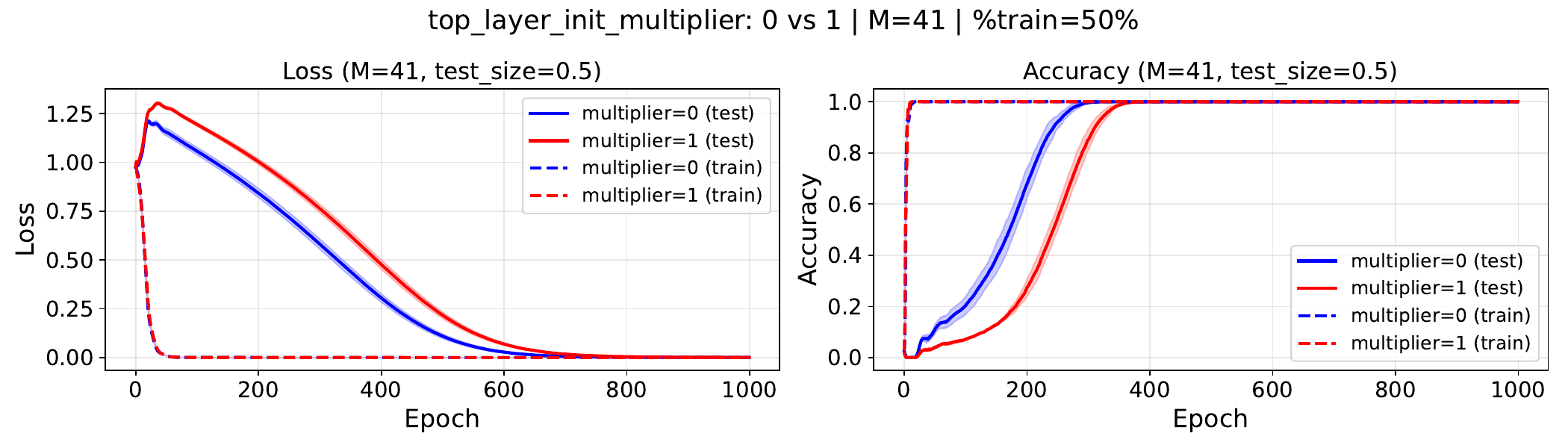}
    \includegraphics[width=\textwidth]{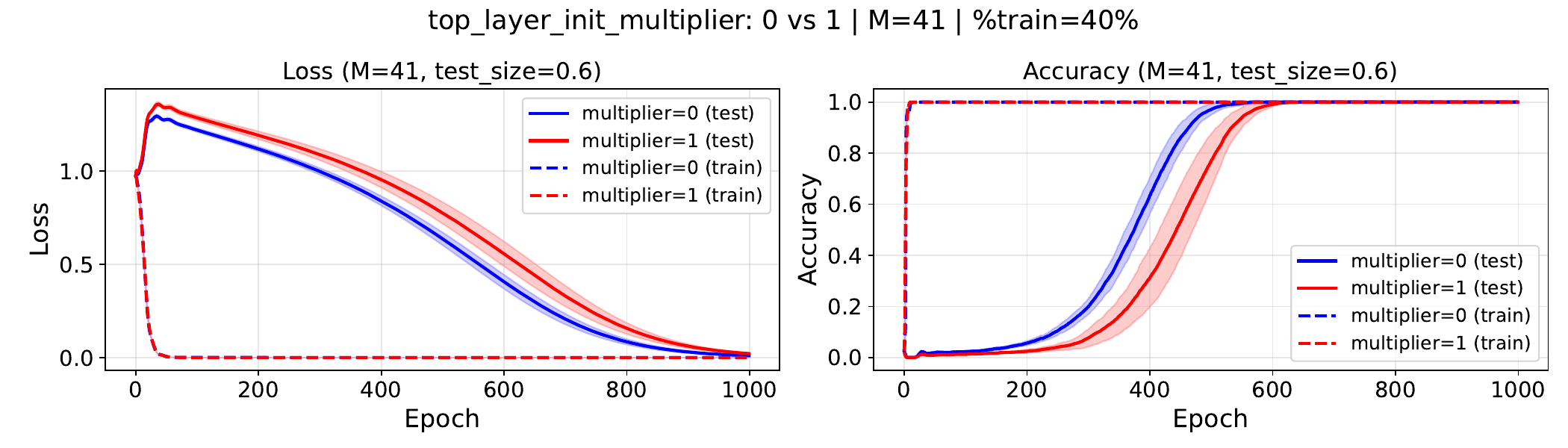}
    \includegraphics[width=\textwidth]{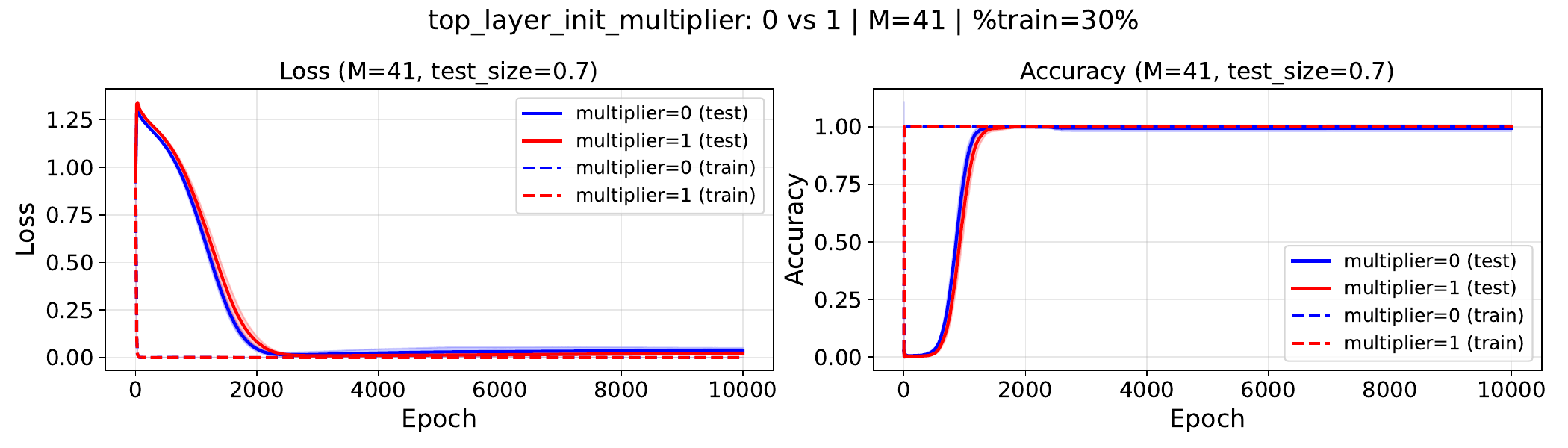}
    \includegraphics[width=\textwidth]{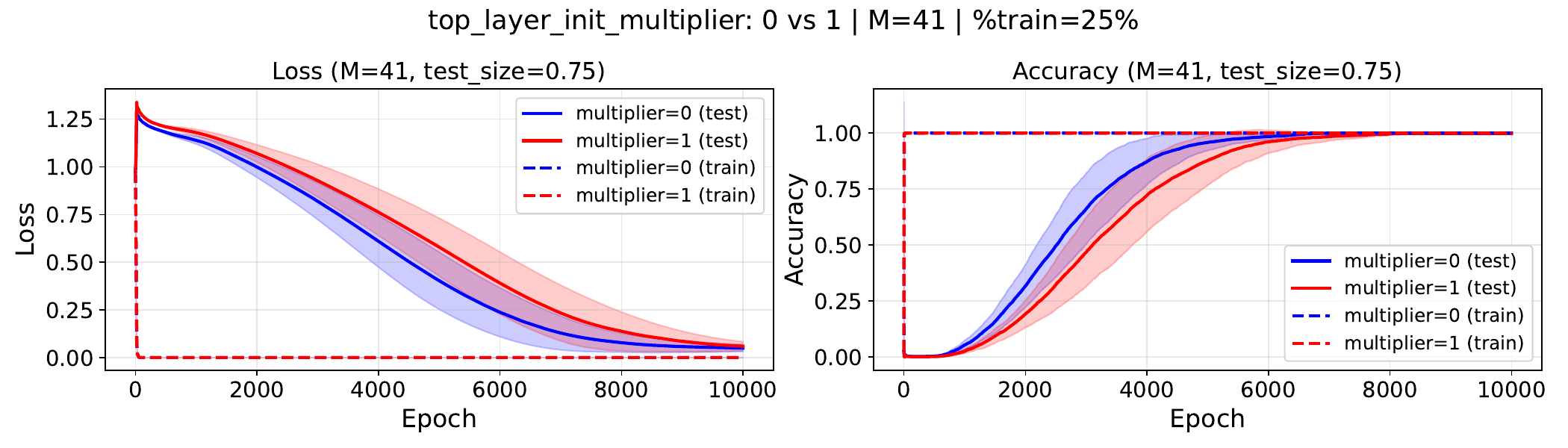}
    \caption{\small Zero-init accelerates the feature learning process for $M=41$. Check Fig.~\ref{fig:zero-init-acceleration-M127} for the hyperparameters setting. Blue lines are zero-init for $V$ (i.e., $V(0) = 0$), red lines are regular initialization.}
    \label{fig:zero-init-acceleration-M41}
\end{figure}

\begin{figure}
    \includegraphics[width=\textwidth]{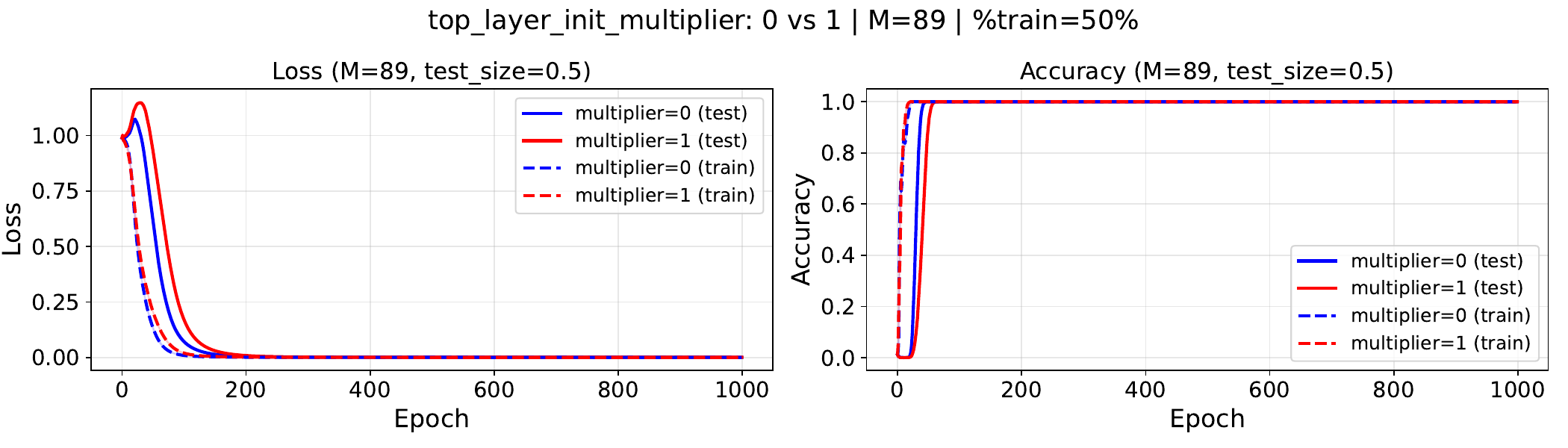}
    \includegraphics[width=\textwidth]{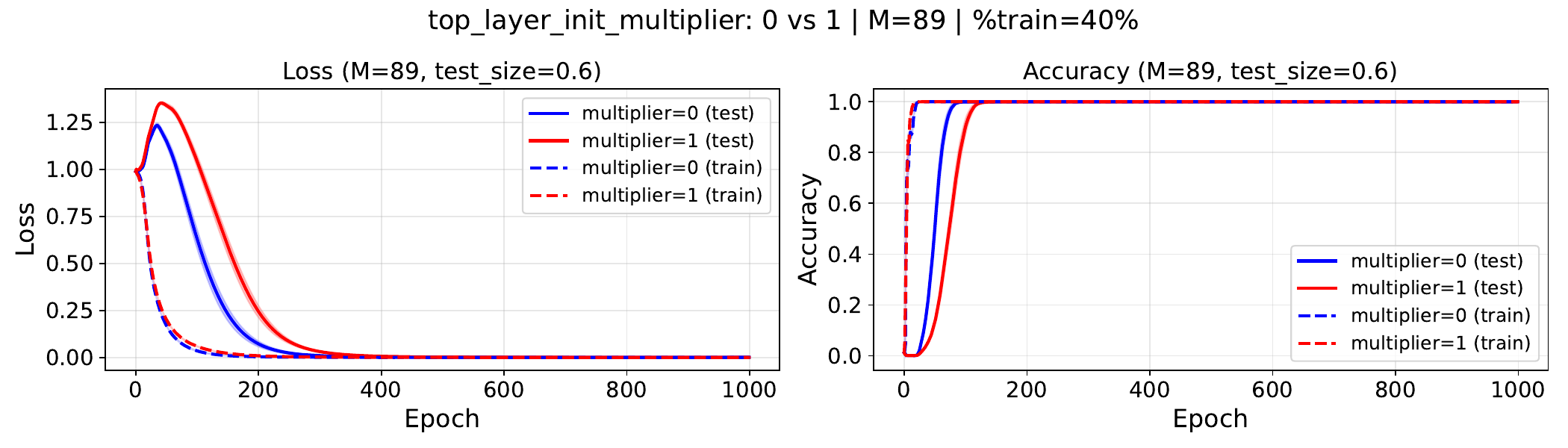}
    \includegraphics[width=\textwidth]{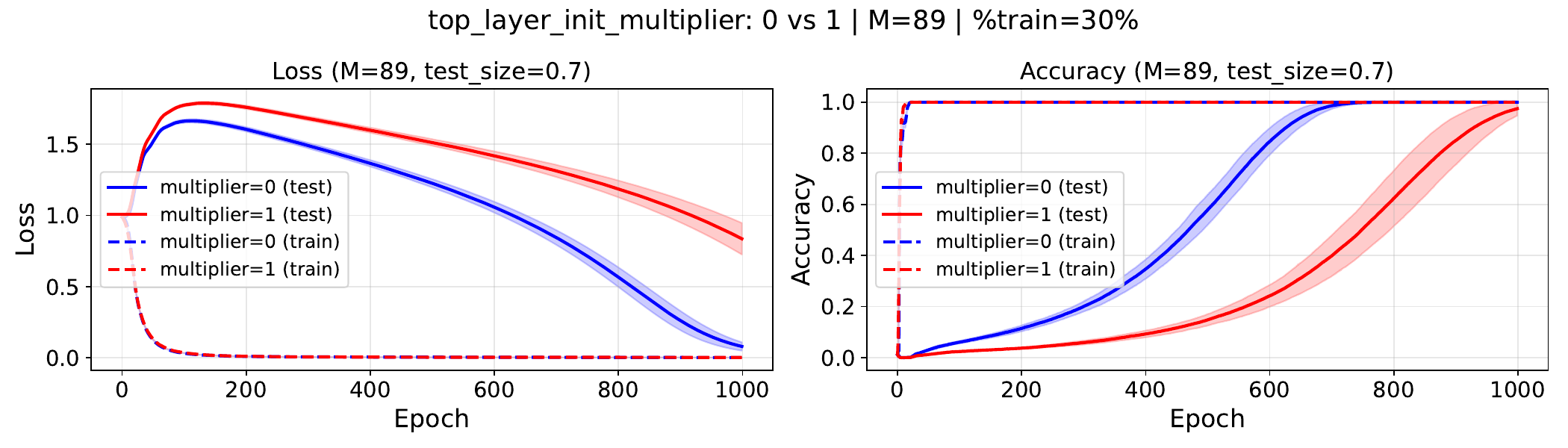}
    \includegraphics[width=\textwidth]{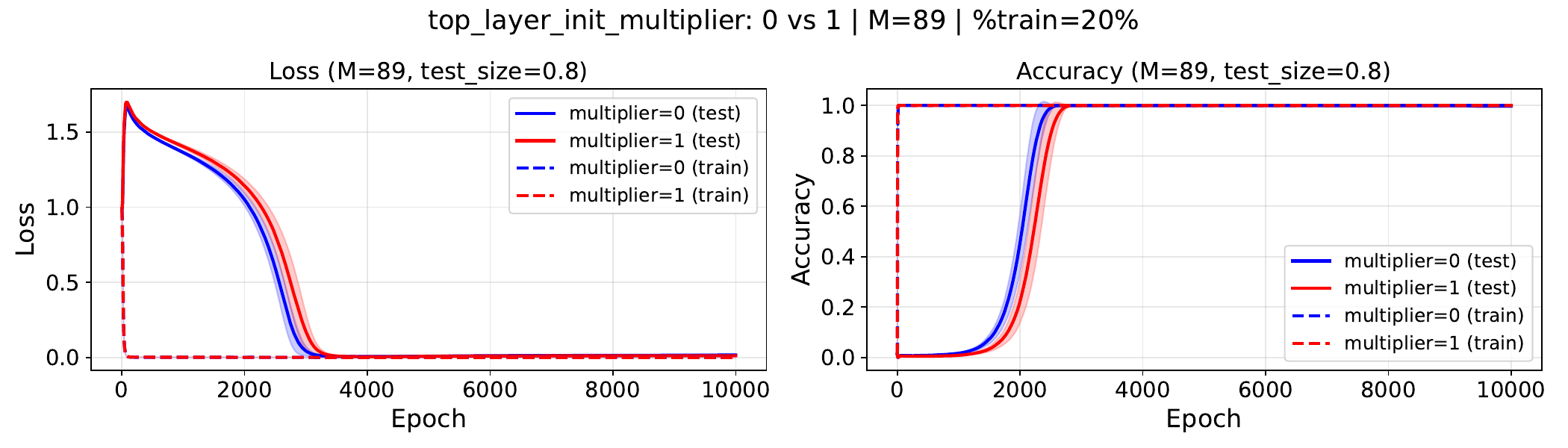}
    \caption{\small Zero-init accelerates the feature learning process for $M=89$. Check Fig.~\ref{fig:zero-init-acceleration-M127} for the hyperparameters setting. Blue lines are zero-init for $V$ (i.e., $V(0) = 0$), red lines are regular initialization.}
    \label{fig:zero-init-acceleration-M89}
\end{figure}

\begin{figure}
    \includegraphics[width=\textwidth]{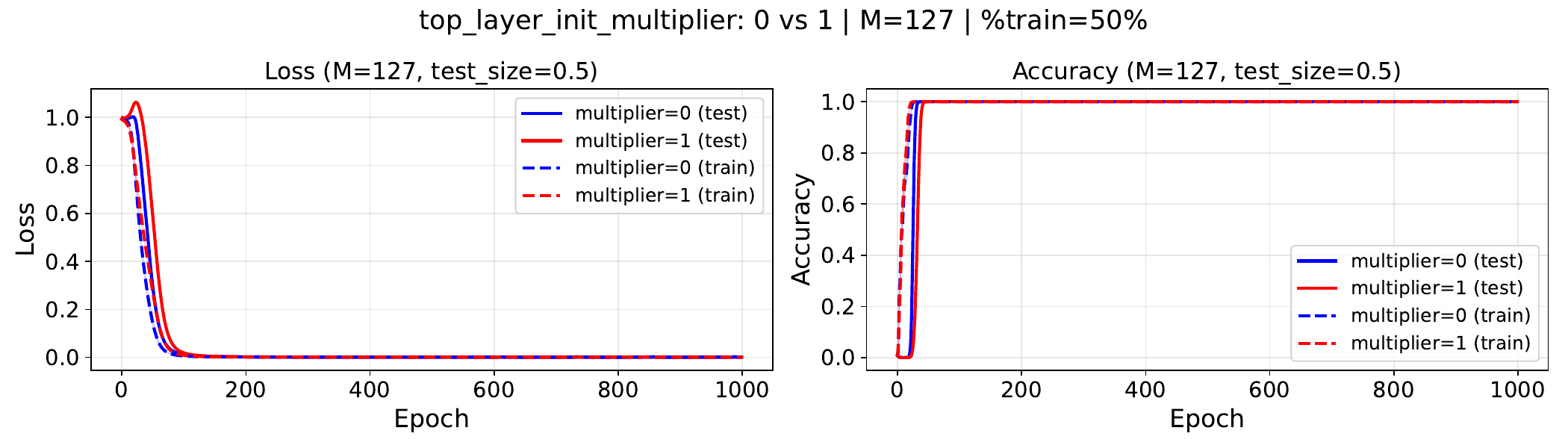}
    \includegraphics[width=\textwidth]{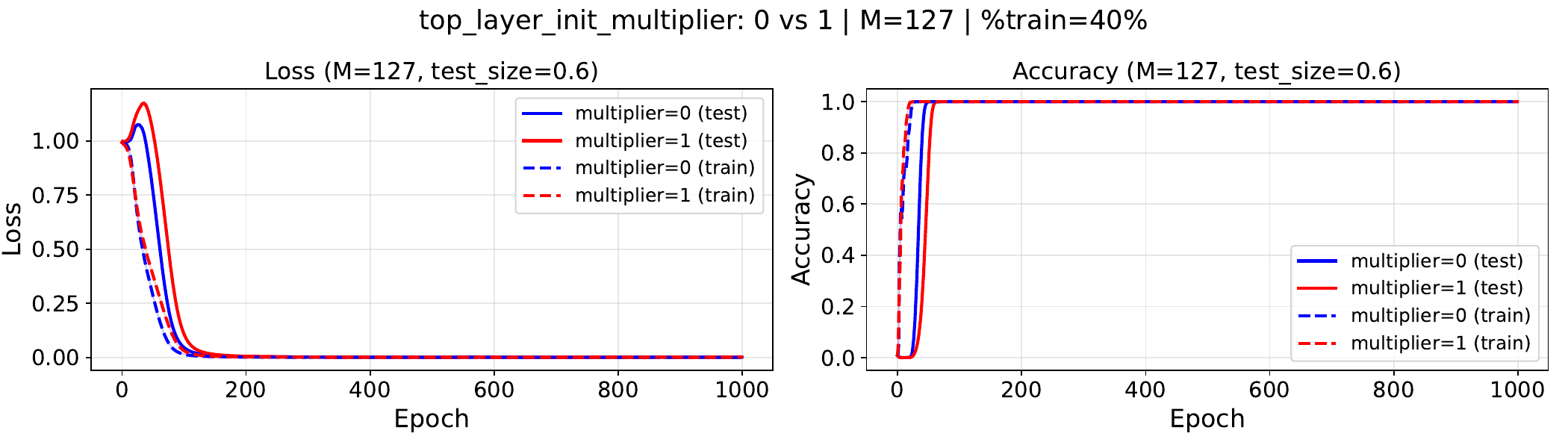}
    \includegraphics[width=\textwidth]{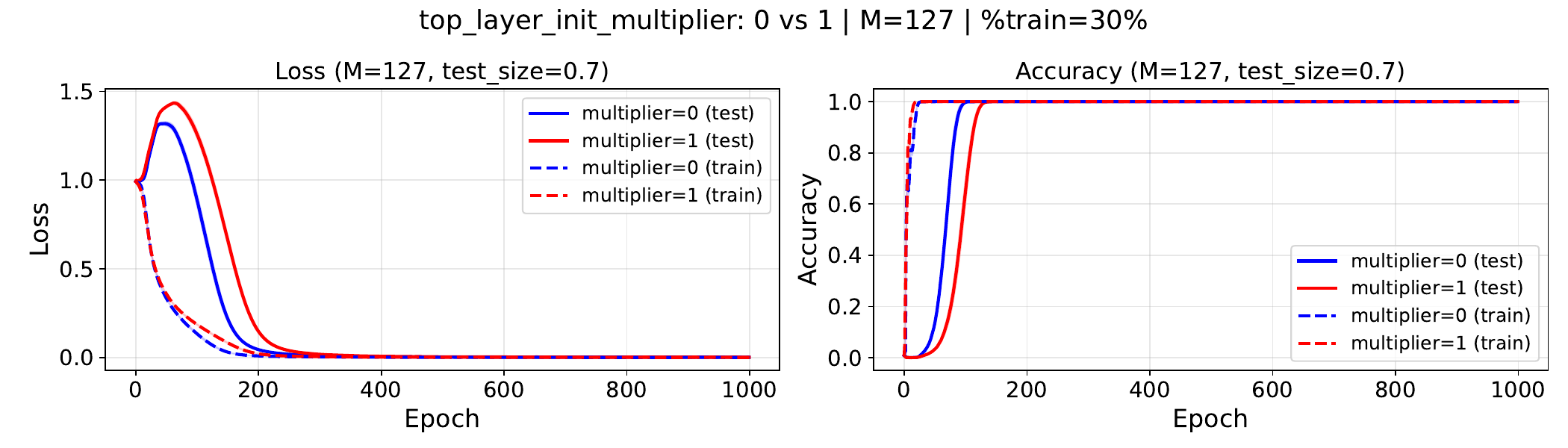}
    \includegraphics[width=\textwidth]{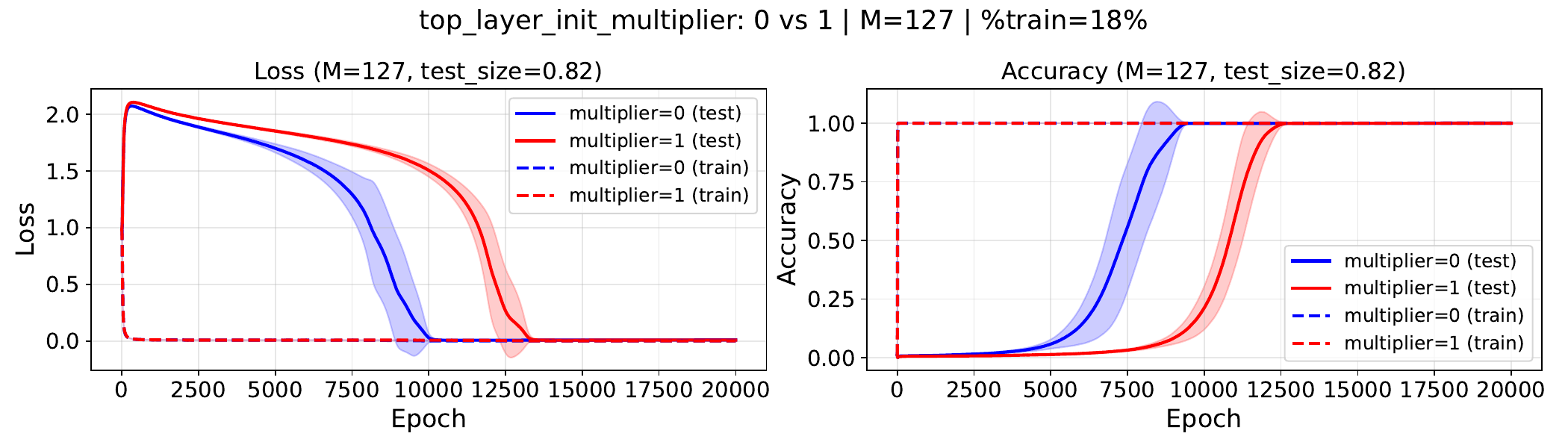}
    \includegraphics[width=\textwidth]{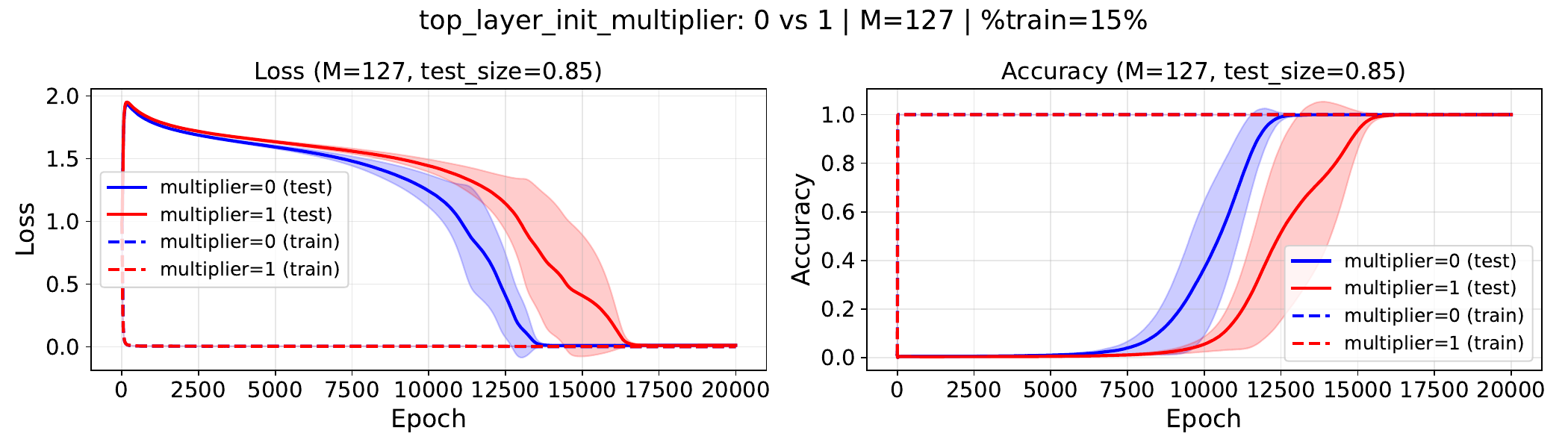}
    \caption{\small Zero-init accelerates the feature learning process for $M=127$. Learning rate is $0.005$. Using Adam optimizer. $\eta = 5e-5$. $K = 1024$. $\sigma(x) = x^2$ and MSE loss. $\text{\texttt{Top\_layer\_init\_multiplier}} = 0$ means $V(0) = 0$, i.e., the top layer is initialized to be $0$ (blue lines). If $\text{\texttt{Top\_layer\_init\_multiplier}} = 1$, then $V$ is initialized regularly (red lines). It is clear that zero-initialization leads to faster grokking (and feature learning) than regular initialization, in particular when data are scarce.}
    \label{fig:zero-init-acceleration-M127}
\end{figure}

\begin{figure}
\includegraphics[width=\textwidth]{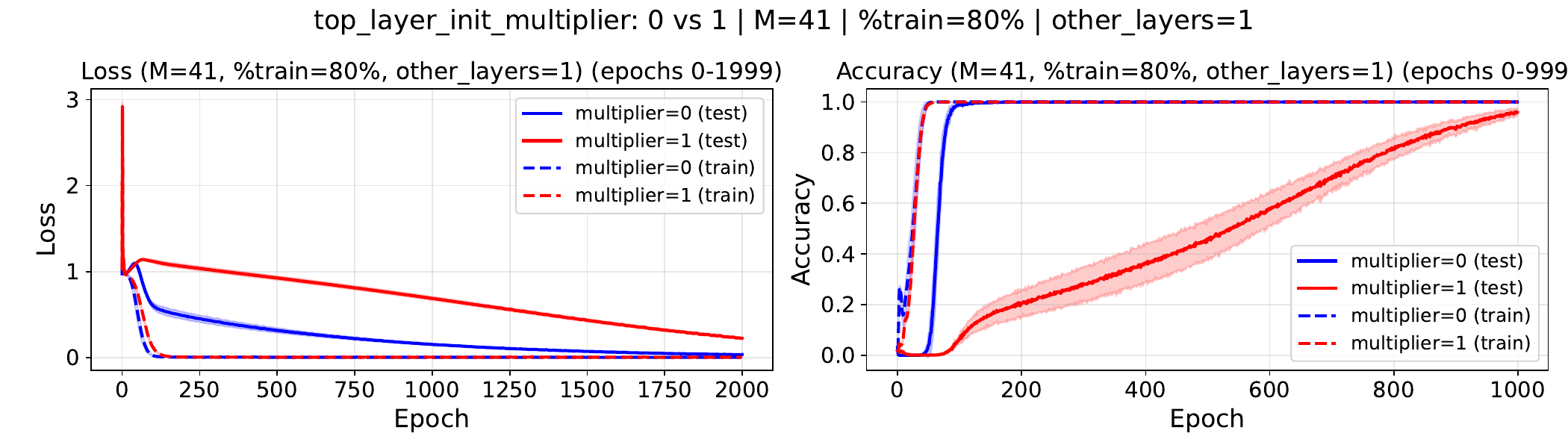}
\includegraphics[width=\textwidth]{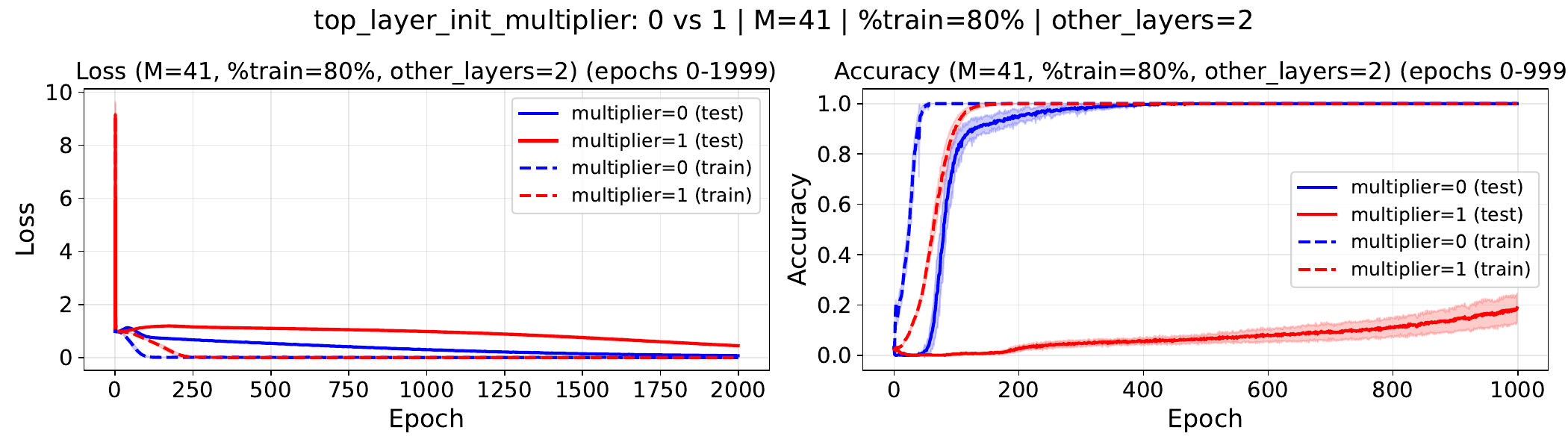}
\includegraphics[width=\textwidth]{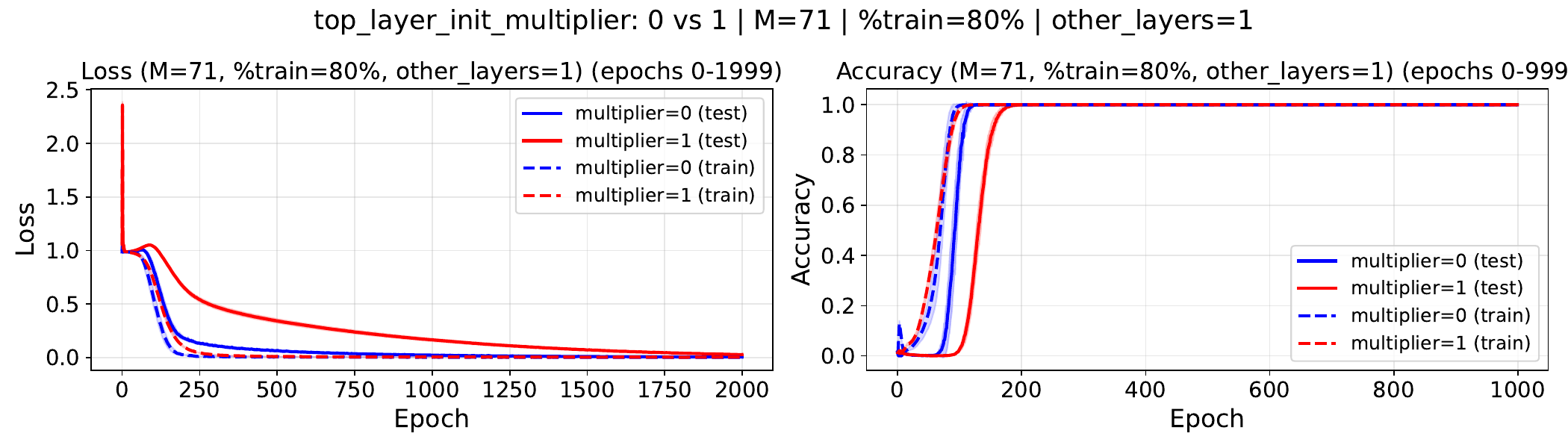}
\includegraphics[width=\textwidth]{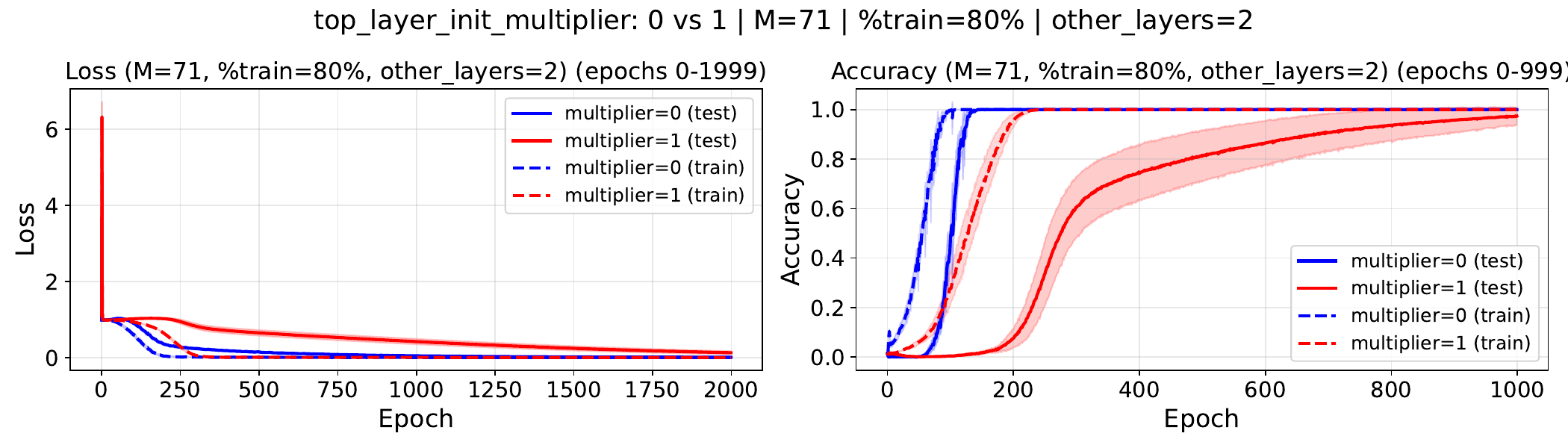}
  \caption{\small Zero-init accelerates the feature learning process for multi-layer setting. $\text{\texttt{other\_layers}} = 1$ means adding an additional hidden layer in between top layer $V$ and bottom layer $W$ with residual connection, etc. Learning rate is $0.005$. Using Adam optimizer. $\eta = 1e-4$. $K = 2048$. $\sigma(x) = \mathrm{ReLU}(x)$ and MSE loss. $\text{\texttt{Top\_layer\_init\_multiplier}} = 0$ means $V(0) = 0$, i.e., the top layer is initialized to be $0$ (blue lines). If $\text{\texttt{Top\_layer\_init\_multiplier}} = 1$, then $V$ is initialized regularly (red lines). It is clear that zero-initialization leads to much faster feature learning than regular initialization in multi-layer settings (often $10\times$).}
  \label{fig:zero-init-multilayer-mse}
\end{figure}

\end{document}